\documentclass[sigconf,final]{acmart}

\usepackage{amsmath}
\usepackage{amsthm}
\usepackage{mathtools}
\usepackage{bm}

\usepackage{microtype}

\usepackage{algorithm}
\usepackage{algorithmic}

\usepackage{booktabs}
\usepackage{multirow}
\usepackage{makecell}

\usepackage{graphicx}
\usepackage{subcaption}
\usepackage{tikz}

\usepackage{listings}
\usepackage{newfloat}

\usepackage{cleveref}

\usepackage[normalem]{ulem}

\usepackage{graphicx}
\usepackage{booktabs}
\usepackage{multirow}
\usepackage{hyperref}
\newcommand{\distill}{\textsc{Distill-Belief}}

\newcommand{\eg}{\textit{e.g.}}

\newcommand{\SR}{\mathrm{SR}}
\newcommand{\TE}{\mathrm{TE}}
\newcommand{\REV}{\mathrm{REV}}
\newcommand{\LPS}{\mathrm{LPS}}
\newcommand{\UQ}{\mathrm{UQ}}
\newcommand{\SLE}{\mathrm{SLE}}
\newcommand{\FPE}{\mathrm{FPE}}
\AtBeginDocument{}

\setcopyright{acmlicensed}
\setcopyright{none}

\copyrightyear{2026}
\acmYear{2026}

\acmConference[KDD '26]{The 32nd ACM SIGKDD Conference on Knowledge Discovery and Data Mining}
{August 9--13, 2026}{International Convention Center Jeju (ICC Jeju), Jeju, Republic of Korea}

\acmISBN{978-1-4503-XXXX-X/2018/06}

\begin{document}

\title{Distill-Belief: Closed-Loop Inverse Source Localization and Characterization in Physical Fields}


\author{Yiwei Shi $^1$, Zixing Song $^1$, Mengyue Yang $^1$, Cunjia Liu $^2$, Weiru Liu $^1$}
\affiliation{%
  \institution{$^1$ SEMT, University of Bristol, Bristol, UK, $^2$ LUCAS,  Loughborough University, Loughborough, UK}
  \city{}
  \country{}
  }

\renewcommand{\shortauthors}{Trovato et al.}

\begin{abstract}
{Closed-loop inverse source localization and characterization (ISLC) requires a mobile agent to select measurements that localize sources and infer latent field parameters under strict time constraints.}
{The core challenge lies in the belief-space objective: valid uncertainty estimation requires expensive Bayesian inference, whereas using fast learned belief model leads to reward hacking, in which the policy exploits approximation errors rather than actually reducing uncertainty.}
{We propose \textbf{Distill-Belief}, a teacher--student framework that decouples correctness from efficiency.
A Bayes-correct particle-filter teacher maintains the posterior and supplies a dense information-gain signal, while a compact student distills the posterior into belief statistics for control and an uncertainty certificate for stopping.
At deployment, only the student is used, yielding constant per-step cost.}
{Experiments on seven field modalities and two stress tests show that Distill-Belief consistently reduces sensing cost and improves success, posterior contraction, and estimation accuracy over  baselines, while mitigating reward hacking.}
\end{abstract}

\ccsdesc[500]{Do Not Use This Code~Generate the Correct Terms for Your Paper}
\ccsdesc[300]{Do Not Use This Code~Generate the Correct Terms for Your Paper}
\ccsdesc{Do Not Use This Code~Generate the Correct Terms for Your Paper}
\ccsdesc[100]{Do Not Use This Code~Generate the Correct Terms for Your Paper}

\keywords{Do, Not, Use, This, Code, Put, the, Correct, Terms, for, Your, Paper}

\received{20 February 2007}
\received[revised]{12 March 2009}
\received[accepted]{5 June 2009}

\maketitle

\section{Introduction}
\label{sec:Intro}

{Autonomous scientific sensing missions increasingly operate in the wild: a drone or ground robot is dispatched after a suspected gas \cite{dogniaux2025global,cusworth2024quantifying}, pollutant \cite{zhang2021system,LiuAerial2020}, or radiation event \cite{tran2022internet,liang2021electrostatic} , and must decide \emph{where to measure next} under strict time and energy budgets. Each measurement is noisy and expensive, and there is often no dense task reward-what matters is whether the system can \emph{localize and characterize the source with calibrated uncertainty} \cite{Holzschuh2023SolvingIP,Chung2022DiffusionPS,Papamakarios2018SequentialNL} quickly enough to enable downstream intervention \cite{Foster2021DeepAD,Kleinegesse2020BayesianED,Foster2019VariationalBO}.} 

{We study this setting through closed-loop \textit{inverse source localization and characterization} (ISLC), also known as the {\textbf{Source Term Estimation}} problem \cite{Chow2005SourceIF,Monache2008BayesianIA,Keats2007BayesianIF} in physical fields \cite{Jervis2025GlobalES}. An agent sequentially samples a field governed by a transport model with {the unknown parameters vector $\boldsymbol{\Theta}$} \cite{Hanna1982HandbookOA,Lushi2009AnIG} (e.g., source location/strength and environmental factors), and updates a Bayesian posterior $p(\boldsymbol{\Theta}\mid o_{1:t},\boldsymbol{p}_{1:t})$ from noisy observations. {The scientific objective is not merely to reach a high-signal region, but to \emph{actively choose measurements \cite{vergassola2007infotaxis,vergassola2007infotaxis,Rigolli2021LearningTP} that contract the posterior} and yield calibrated uncertainty so that the episode can terminate when uncertainty falls below an application-specified tolerance\cite{Stockie2011TheMO,Turner1994WorkbookOA,Hanna1982HandbookOA}.}} Closed-loop ISLC exposes a fundamental tension between scientific correctness and operational deployment. First, the objective lives in \emph{belief space}: we seek posterior contraction and uncertainty calibration, not a heuristic proxy in observation space (otherwise the agent can chase transient peaks without reliably reducing epistemic uncertainty). Second, scientific field tasks rarely provide reliable \emph{dense} task rewards \cite{Shi2024AutonomousGD,Liu2022MetaRewardNetID,Park2022SourceTE}: \textbf{success} is often sparse, late, or even undefined, and the episode should terminate when uncertainty is sufficiently low. Meanwhile, real deployments require \emph{real-time} decision making: per-step computation cannot scale with expensive Bayesian inference. Third, if a learned belief surrogate is used simultaneously as the policy input and as the basis for intrinsic rewards or stopping, the agent can exploit approximation artifacts-inflating reward or triggering premature stopping without genuinely contracting the true posterior. These considerations impose a  coupled set of requirements that any deployable ISLC algorithm must satisfy.

Formally, closed-loop ISLC must satisfy \emph{four coupled requirements}:
\textbf{(R1)} optimize in \emph{belief space} to directly reduce epistemic uncertainty;
\textbf{(R2)} learn under \emph{no reliable dense task reward} (i.e., to address success is sparse/late or undefined);
\textbf{(R3)} remain \emph{deployable}, i.e., to address per-step computation cannot scale with expensive Bayesian inference;
{\textbf{(R4)} be \emph{robust to reward hacking}, the policy must not be able to manipulate approximation errors in a learned belief surrogate (e.g., under-estimated posterior spread) to obtain high intrinsic reward or to satisfy the stopping rule without genuine posterior contraction.}
{Table~\ref{tab:reqs} summarizes requirement coverage for representative design families; none satisfies (R1)-(R4) simultaneously.} To satisfy (R1)-(R4) simultaneously, we argue that Bayes correctness must be decoupled from deployment-time computation.
The learning signal should be derived from a \emph{Bayes-consistent} belief update, yet the deployed controller cannot afford to run such updates online.
This leaves a narrow set of practical designs: a \emph{Bayes-correct teacher} computes an information-theoretic objective in belief space, while a {\emph{fast-learning student} compresses the teacher posterior over $\Theta$ into  belief statistics (e.g., mean and diagonal covariance) that can be updated in \emph{O(1) time per step with respect to the particle budget}.} Notably, only the actor-critic updates the policy parameters; the PF teacher is \emph{not} a planner and provides only Bayes belief updates for reward computation and distillation targets.

\begin{table}[htbp]
\centering
\caption{Requirement coverage.}
\label{tab:reqs}
\resizebox{\linewidth}{!}{%
\begin{tabular}{lcccc}
\toprule
Method & (R1) belief-space & (R2) no dense reward & (R3) test-time & (R4) no hacking \\
\midrule
PF info planner \cite{hutchinson2019source,seo2025kalman,park2021autonomous} & \checkmark & \checkmark & \texttimes & \checkmark \\
Obs-reward RL \cite{lee2025enhanced,zhao2022deep}     & \texttimes & \texttimes & \checkmark & \checkmark \\
Learned belief \cite{hutchinson2018information,vergassola2007infotaxis,masson2009chasing} & \checkmark & \checkmark & \checkmark & \texttimes \\
\textbf{Distill-Belief (ours)} & \checkmark & \checkmark & \checkmark & \checkmark \\
\bottomrule
\end{tabular}}
\end{table}

A key question is why we distill beliefs rather than directly learning in an end-to-end manner.
If we remove the teacher and define intrinsic rewards using the same learned belief that conditions the policy, the policy may exploit modeling errors to artificially increase the reward or reduce the spread certificate without genuinely reducing posterior uncertainty. Conversely, if we keep Bayes-consistent belief updates to prevent such artifacts, then inference remains in the deployment loop with cost scaling linearly in the particle budget, violating real-time constraints.
Belief distillation resolves this tension: it transfers the teacher posterior into a parametric student, yielding constant-time, uncertainty-calibrated belief statistics for both control and stopping \emph{while keeping the reward Bayes-aligned and computed exclusively from the teacher during training}.
Without distillation, one must trade off \emph{deployability} (PF at test time) against \emph{statistical alignment and robustness} (learned belief defining both reward and control), and cannot satisfy (R1)--(R4) at once.

To bridge these gaps, we propose a \textbf{teacher-student belief-optimization framework} for closed-loop ISLC. A particle-filter (PF) teacher maintains a Bayes-consistent posterior over the parameter vector $\boldsymbol{\Theta}$. This teacher provides a dense intrinsic reward defined as the discrete KL divergence between consecutive beliefs, serving as a high-fidelity proxy for one-step information gain. {A fast student distills the teacher posterior into a compact diagonal-Gaussian belief. The distilled belief yields constant-time features for a belief-conditioned actor-critic, and provides a spread-based uncertainty certificate for principled stopping.} At test time, we discard the PF entirely and rely only on the student's  belief statistics, making inference and cessation independent of particle budget. This separation keeps intrinsic rewards Bayes-aligned while keeping deployment free of PF inference.

{Our main contributions are: (1) We cast ISLC as belief-space control and introduce a coupled inference--execution architecture that decouples Bayes-correct objectives from deployment-time computation via PF teaching and student belief approximation. (2) We propose a dense information-gain intrinsic reward based on the one-step KL divergence between consecutive teacher posteriors, directly aligning RL optimization with posterior contraction while preventing reward hacking \emph{by construction}: intrinsic rewards are computed only from the PF teacher posterior, while the learned student belief is used solely to condition the policy (which outputs the next sensing action) and to compute a deployment-time stopping certificate. (3) We enable deployable and reliable closed-loop inference via student beliefs and a spread-based stopping certificate that explicitly controls the accuracy--budget trade-off, with the PF teacher entirely removed at test/deployment time.}

\section{Related Work}
\label{sec:related}

\subsection{Information-Theoretic Planning for ISLC}
Closed-loop ISLC, also referred to as source term estimation, is a recurring primitive in field sensing: a mobile agent must adaptively decide where to measure next to localize hidden emitters and estimate physical parameters with calibrated uncertainty. Representative deployments include source-term estimation for atmospheric releases with mobile robots/UAVs \cite{bourgault2002information,hollinger2014sampling}, radioactive source localization for safety monitoring \cite{jarman2011bayesian,huo2020autonomous}, and broader contaminant/source reconstruction settings in setting sensing \cite{bagtzoglou2005mathematical,jiang2021two}. Methodologically, these problems are closely related to sequential Bayesian experimental design \cite{shi2025attention,Shi2024AutonomousGD}, where actions are chosen to maximally reduce uncertainty about unknown parameters value $\Theta$.
A route to closed-loop ISLC \cite{shi2025attention} couples \emph{Bayesian sequential inference} \cite{arulampalam2002tutorial,johansen2009tutorial,zhang2023optimal} with \emph{information-theoretic action selection}: the agent \cite{yan2025goal,da2025new} maintains a posterior over unknown source/transport parameters $\Theta$ and chooses the next sensing action to maximally contract this belief. Representative information-based search frameworks instantiate this idea with sequential Monte Carlo \cite{arulampalam2002tutorial,zhang2023optimal} / particle \cite{kantas2015particle,andrieu2010particle} to approximate $p(\Theta \mid o_{1:t}, \mathbf{p}_{1:t})$, and evaluate candidate actions using information utilities such as expected information gain \cite{chaloner1995bayesian}, mutual information \cite{singh2006efficient}, or KL divergence \cite{rahbar2019algorithm} between beliefs. Alongside these explicit information-gain planners, many \emph{non-learning / static} strategies adopt a similar two-block structure---(i) an estimation module (Bayes/PF update) and (ii) a greedy controller that optimizes an uncertainty-related surrogate. Typical examples include \emph{Infotaxis} \cite{hutchinson2018information,vergassola2007infotaxis,masson2009chasing} and \emph{Entrotaxis} \cite{hutchinson2018entrotaxis,zhao2024regression,ristic2016study}, which drive exploration by reducing belief uncertainty (e.g., variance/entropy), and \emph{dual-control-based approaches (DCEE)} ~\cite{li2024cooperative,chen2021dual} explicitly trade off exploitation and exploration through composite objectives that combine progress-to-estimate and uncertainty-reduction terms. \textcolor{cyan}{However, the online control loop typically requires repeated belief updates and (often) lookahead evaluation of candidate actions, so the per-step cost grows with the particle budget and planning horizon, which is a key obstacle for real-time deployment and large-scale evaluation (violating deployability requirement (R3) in Sec.~\ref{sec:Intro})}.

\subsection{RL for Active Sensing and Localization}
A complementary direction amortizes decision making with RL \cite{Mnih2015HumanlevelCT,Schulman2017ProximalPO,Lillicrap2015ContinuousCW}, learning a policy that maps observations (and possibly belief features) to sensing actions. In ISLC and related active sensing tasks \cite{Ristic2016ASO}, \textit{actor-critic} \cite{Mnih2016AsynchronousMF,Lillicrap2015ContinuousCW} methods are frequently adopted, where the state representation augments raw observations with compact belief summaries from PF, e.g., posterior moments or parametric compressions such as Gaussian mixture models \cite{Park2022SourceTE,Ladosz2020GaussianPB}. This replaces explicit online planning with a single policy forward pass, improving deployment-time efficiency. To further improve deployability, many works \cite{Wang2023RoboticOS,Hu2019PlumeTV,Wang2021OlfactoryBasedNV} compress particle-based posteriors into low-dimensional belief representations (moments, mixture fits, or learned set encoders) and, more broadly, use amortized inference to predict posterior statistics in (approximate) constant time, avoiding iterative Bayesian updates during control \cite{Barto1983NeuronlikeAE}. Nevertheless, RL-based approaches often rely on observation-space reward shaping \cite{Pathak2017CuriosityDrivenEB,Ng1999PolicyIU}(e.g., concentration improvement) or sparse terminal success, which can be misaligned with posterior contraction and encourage shortcut behaviors. Moreover, in many scientific field tasks, \textit{success} \cite{Shi2024AutonomousGD} is not explicitly labeled and should instead be determined implicitly by sufficiently low uncertainty; self-cessation and goal-detection mechanisms address this by using belief dispersion as a stopping trigger~\cite{Shi2024AutonomousGD}. \textcolor{cyan}{Overall, RL-based methods tend to satisfy deployability (R3), but may struggle with belief-space objectives (R1) and sparse/implicit supervision (R2) unless the learning signal is carefully designed.} Using approximate beliefs inside the control loop can also break scientific semantics. If the same learned belief surrogate both conditions the policy and defines intrinsic rewards or stopping \cite{Burda2018ExplorationBR}, the agent may exploit surrogate artifacts, earning reward or stopping early without genuine Bayes-posterior contraction (violating (R4)). \textcolor{cyan}{This motivates teacher-student \cite{Hinton2015DistillingTK,Romero2014FitNetsHF,vezhnevets2017feudalNHRL,bacon2017option} designs that decouple Bayesian objectives from deployment-time computation: a Bayes-correct inference module can serve as a source of supervision, while a fast amortized model provides belief statistics for real-time control.}

\begin{figure}[!h]
  \centering
  \begin{subfigure}{0.48\linewidth}
    \centering
    \includegraphics[width=\linewidth]{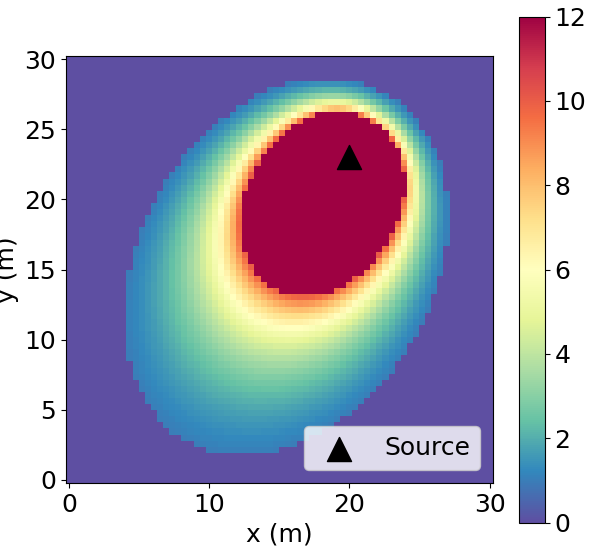}
    \caption{Steady-state concentration field generated by a point source}
    \label{fig:sub1}
  \end{subfigure}
  \hfill
  \begin{subfigure}{0.48\linewidth}
    \centering
    \includegraphics[width=\linewidth]{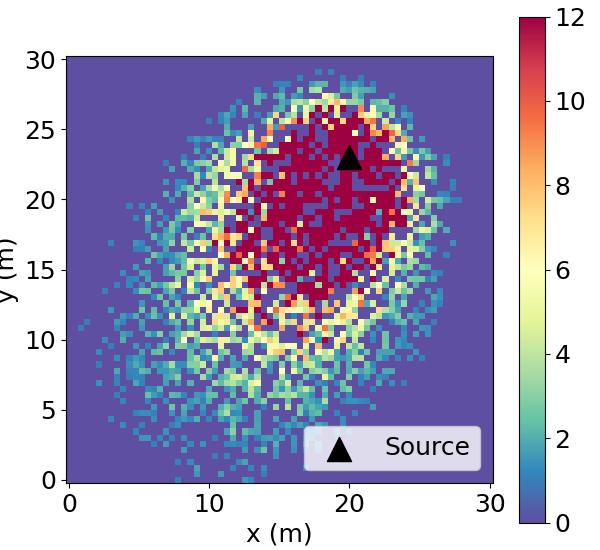}
    \caption{Noisy observations available to the agent}
    \label{fig:sub2}
  \end{subfigure}
  \caption{ Gaussian Plume Model for source localization.}
  \label{fig:GPM_visualization}
\end{figure}

\section{Preliminaries}
\label{sec:Preliminaries}
\subsection{Unified Framework for Field Modeling}
A wide range of natural phenomena, including \textit{pollutant dispersion}, \textit{gas diffusion}, and \textit{electric field distributions}, can be described within a unified physical framework. Despite their apparent differences, these processes are governed by three fundamental terms: \textbf{diffusion}, \textbf{convection}, and \textbf{external sources}. Such terms are commonly captured by the general \textit{convection-diffusion equation} (CDE) \cite{holley1969unified}, which provides a versatile mathematical abstraction:
\begin{equation}
\alpha \nabla^2 \phi(x,y) - \vec{v} \cdot \nabla \phi(x,y) + \gamma \phi(x,y) + L(x,y) = 0,
\end{equation}
where $\phi(x,y)$ denotes the field variable (e.g., concentration, temperature, or potential). The terms $\alpha \nabla^2 \phi$, $-\vec{v}\cdot\nabla \phi$, and $L(x,y)$ capture diffusion, transport (advection), and external sources, respectively, while $\gamma$ accounts for linear decay or reaction effects. In this work, we adopt the steady-state linear CDE as an effective field model that provides a unified abstraction of \textbf{diffusion–transport–source} mechanisms. Different field modalities correspond to different physical interpretations/units of $\phi$ and different parameterizations of $(\alpha,\vec v,\gamma,L)$, potentially together with modality-specific boundary conditions and observation/noise models. With appropriate parameter choices, this formulation covers a broad class of spatial fields, from heat conduction and pollutant dispersion to other source-driven diffusion–transport phenomena.

\subsection{Gaussian Plume Model}

As a classical analytical instantiation of the convection--diffusion framework, the \textit{Gaussian Plume Model} (GPM), illustrated in Fig.~\ref{fig:GPM_visualization}, provides a steady-state solution that balances modeling fidelity and computational efficiency. Under suitable assumptions, the resulting field distribution is given by
\begin{equation}
\phi(x, y) =
\frac{q_s}{4 \pi \alpha \|\boldsymbol{p}-\boldsymbol{p}_s\|}
\exp\left(
-\frac{\|\boldsymbol{p}-\boldsymbol{p}_s\|}{\lambda}
-\frac{(x - x_s)u_x + (y - y_s)u_y}{2 \alpha}
\right),\nonumber
\end{equation}
where $q_s$ denotes the source strength, $\boldsymbol{p}=(x,y)$ and $\boldsymbol{p}_s=(x_s,y_s)$ represent the observation point and source location, respectively, and $(u_x,u_y)$ are the components of the convection velocity $\vec{v}$. The parameter $\lambda$ controls the exponential decay, and $\alpha$ is the diffusion coefficient, consistent with the notation in the general CDE. Owing to its analytical form, the GPM is widely used for efficient modeling of steady-state dispersion phenomena.

\subsection{Partially Observable MDP}

In practical field modeling scenarios, uncertainties arise due to incomplete state information, sensor noise, and unknown environmental parameters. These challenges naturally motivate a probabilistic sequential decision-making formulation based on the \textit{Partially Observable Markov Decision Process (POMDP)}. A POMDP is defined by the tuple $(S, \Omega, A, T, O, R, \gamma)$, where $S$ denotes the state space, $A$ the action space, $\Omega$ the observation space, $T$ the state transition model, $O$ the observation model, $R$ the reward function, and $\gamma \in (0,1]$ the discount factor. At each time step $t$, the agent receives an observation $o_t \in \Omega$ conditioned on the current state $s_t$ and the previous action $a_{t-1}$, according to $O(o_t \mid s_t, a_{t-1})$. After executing action $a_t$, the environment transitions to a new state $s_{t+1} \sim T(s_{t+1} \mid s_t, a_t)$. Due to partial observability, policies are typically conditioned on belief states or histories of actions and observations. We define the belief over field-model parameters as
$b_t(\boldsymbol{\Theta}) \triangleq p(\boldsymbol{\Theta} \mid o_{1:t})$, where $\boldsymbol{\Theta}$ denotes the {unknown parameters vector} of the field model (e.g., source location, strength, and setting factors). In many ISLC settings, the environment does not provide a dense task reward. Instead, we optimize an intrinsic reward that explicitly measures \emph{posterior contraction}:$
r_t^{IG} \triangleq D_{\mathrm{KL}}\!\left(b_t(\boldsymbol{\Theta}) \,\Vert\, b_{t-1}(\boldsymbol{\Theta})\right)$,
which serves as a one-step information-gain proxy. The overall objective is to learn a policy $\pi(a_t \mid \psi_t)$ that maximizes the expected discounted cumulative information gain, $\max_{\pi}\ \mathbb{E}_{\pi}\left[\sum_{t=1}^{T} \gamma^{t-1} r_t^{IG}\right]$, where $\psi_t$ (belief state) is a belief policy input (e.g., concatenating the observation, agent position, and belief features). Finally, we allow early termination using an uncertainty certificate to realize an explicit accuracy--budget trade-off.  Accordingly, \textcolor{cyan}{our objective is to infer the posterior distribution
$p(\boldsymbol{\Theta} \mid o_{1:t})$
over the field-model parameters
$\boldsymbol{\Theta} = [x_s, y_s, q_s, u_s, \phi_s, \alpha, \lambda]^\top$
from sequential observations $o_{1:t}$ and belief state $\psi_t$ at time step $t$ within the POMDP}.

\section{Methodology}
\label{sec:method}


We formulate closed-loop ISLC as a belief-space control problem whose objective is posterior contraction.
Subsec.~\ref{sec:method:overview_} defines the belief state used by the policy.
Subsec.~\ref{sec:method:pf}-\ref{sec:method:student} describe how the belief is maintained and amortized.
Subsec.~\ref{sec:method:kl} specifies the KL-based intrinsic reward used for training.
Subsec.~\ref{sec:policy}-\ref{sec:method:stop} present the belief-conditioned actor-critic and the stopping criterion.

\begin{figure}[htbp]
  \centering
  \includegraphics[width=\linewidth]{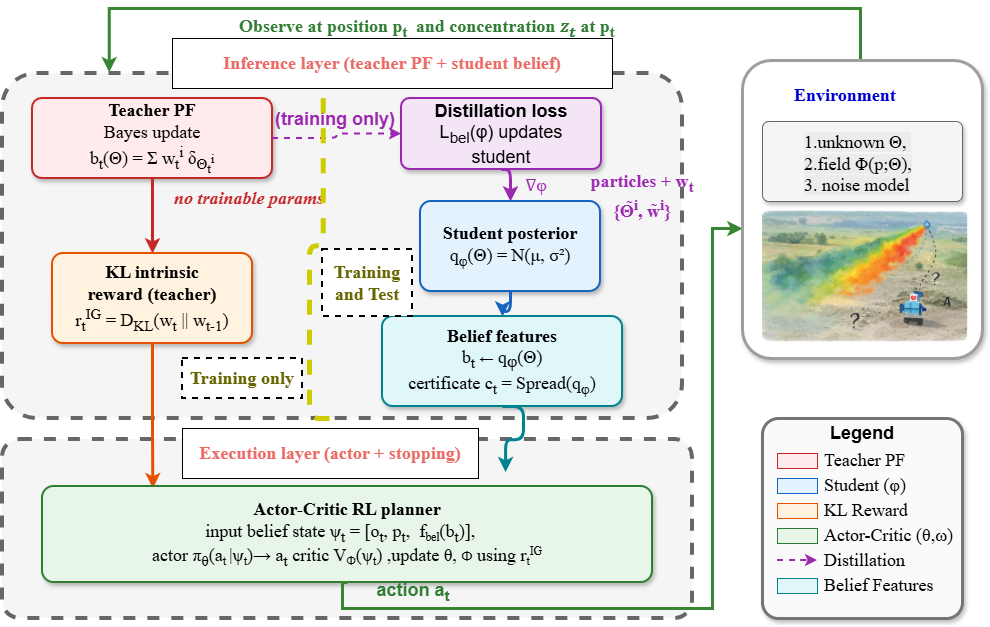}

  \caption{Teacher–Student Belief Distillation}
  \label{fig:framework}
\end{figure}

\subsection{Problem Setup and Belief-State Interface}
\label{sec:method:overview_}

Building on the POMDP formulation in Sec.~\ref{sec:Preliminaries}, we maintain a Bayesian belief
$b_t(\boldsymbol{\Theta}) \approx p(\boldsymbol{\Theta}\mid o_{1:t},\boldsymbol{p}_{1:t})$
over the \emph{full} parameter vector $\boldsymbol{\Theta}\in\mathbb{R}^d$ (e.g., source location/strength and environmental factors) and use it for closed-loop control.
At time $t$ we form a belief state
$\boldsymbol{\psi}_t=\big[ o_t^\top,\boldsymbol{p}_t^\top, f_{\text{bel}}(b_t)^\top \big]^\top$,
where $f_{\text{bel}}(b_t)$ is the belief distribution (Sec.~\ref{sec:policy}), and act via $\pi_\theta(a_t\mid \boldsymbol{\psi}_t)$. We realize this with two coupled layers.
\textbf{Inference layer:} a Bayes-consistent PF teacher updates $b_t$ and defines a KL-based intrinsic reward in belief space, while a student amortizes the teacher posterior via distillation.
\textbf{Execution layer:} an actor--critic learns $\pi_\theta$ conditioned on the student belief features and terminates when a spread-based uncertainty certificate falls below $\zeta$.
The PF teacher has no trainable parameters and never optimizes a policy; intrinsic rewards are computed \emph{only} from the teacher during training to prevent reward hacking.
At test time (deployment), we discard the teacher and run only the student and policy, yielding constant-cost belief features and stopping statistics independent of the particle budget (Framework in Fig \ref{fig:framework}).

\subsection{Teacher Belief via Particle Filtering}
\label{sec:method:pf}

The PF teacher is the only component where we require Bayes-correct updates.
It maintains a weighted particle approximation to the posterior
$b_t(\boldsymbol{\Theta})
= \sum_{i=1}^{N} w_t^{(i)}\, \delta_{\boldsymbol{\Theta}_t^{(i)}}(\boldsymbol{\Theta})$, where $\delta$ denotes the Dirac mass and each particle carries a parameter vector
$\boldsymbol{\Theta}_t^{(i)} \in \mathbb{R}^d$.
Let $\ell(o_t \mid \boldsymbol{p}_t, \boldsymbol{\Theta})$ be the likelihood induced by the Gaussian plume model in Sec.~\ref{sec:Preliminaries}.
In ISLC we treat $\boldsymbol{\Theta}$ as static {during an episode} and adopt the bootstrap choice $q \approx T$ in sequential importance sampling, so the weights are updated by simple reweighting:
$\tilde{w}_t^{(i)} ~\propto~ w_{t-1}^{(i)} \,
\ell\!\big(o_t \mid \boldsymbol{p}_t, \boldsymbol{\Theta}_{t-1}^{(i)}\big),
w_t^{(i)} = \frac{\tilde{w}_t^{(i)}}{\sum_j \tilde{w}_t^{(j)}}$.
We monitor the effective sample size
$\mathrm{ESS} = 1 / \sum_i (w_t^{(i)})^2$ and apply systematic resampling when $\mathrm{ESS} < \rho N$. To mitigate sample impoverishment after resampling without biasing the posterior
$\pi_t(\boldsymbol{\Theta}) \propto p(\boldsymbol{\Theta}\mid o_{1:t},\boldsymbol{p}_{1:t})$,
we apply a short Metropolis-Hastings (MH) move with a Gaussian random-walk proposal:
$ \boldsymbol{\Theta}_{\text{new}}
= \boldsymbol{\Theta}
+ h_t \boldsymbol{\Sigma}_{\Theta,t}^{1/2} \boldsymbol{\xi},
\boldsymbol{\xi} \sim \mathcal{N}(\boldsymbol{0}, \mathbf{I})$,
where $\boldsymbol{\Sigma}_{\Theta,t}$ is the current weighted covariance of the particles (in the full $\boldsymbol{\Theta}$-space) and $h_t$ is a step size.
The proposal is accepted with the usual MH ratio; this leaves $\pi_t$ invariant.
The PF teacher is where \emph{Bayes correctness} lives: it yields a statistically consistent posterior increment under mild assumptions, supports standard SMC diagnostics (ESS, resampling, MH moves), and never learns in weight space, so the target posterior remains unaffected by adaptation.

\subsection{Student Posterior via PF Distillation}
\label{sec:method:student}

While the PF teacher is Bayes-correct, its per-step cost scales with the number of particles $N$, which is undesirable at test time.
We therefore introduce a \emph{student} that serves as a compressed, amortized surrogate of the teacher belief for real-time control.
\textbf{Importantly, the student models the posterior over the parameter vector} $\boldsymbol{\Theta}\in\mathbb{R}^d$. We model the student posterior as a factorized Gaussian:
$q_\varphi(\boldsymbol{\Theta} \mid o_t, \boldsymbol{p}_t)
= \mathcal{N}\big(\boldsymbol{\Theta}; \boldsymbol{\mu}_{t}, \mathrm{diag}(\boldsymbol{\sigma}_{t}^2)\big),
[\boldsymbol{\mu}_{t}, \log \boldsymbol{\sigma}_{t}^2]
= f_\varphi(o_t, \boldsymbol{p}_t)$,
where $f_\varphi$ is a small MLP (two hidden layers with ReLU activations).
(For notational simplicity we write $(o_t,\boldsymbol{p}_t)$; in implementation these inputs may include a history embedding or recurrent state.)
Each dimension of $\boldsymbol{\Theta}$ thus has a learned mean and variance.

During training, after the PF update (including resampling/MH), we obtain a discrete teacher posterior
$\{\tilde{\boldsymbol{\Theta}}^{(i)}, \tilde{w}^{(i)}\}_{i=1}^N$.
We fit the student by minimizing the weighted negative log-likelihood over \emph{full} parameter vectors:
$\mathcal{L}_{\text{bel}}(\varphi)
= - \sum_{i=1}^{N}
\tilde{w}^{(i)} \log \mathcal{N}\!\big(
\tilde{\boldsymbol{\Theta}}^{(i)}; \boldsymbol{\mu}_{t},
\mathrm{diag}(\boldsymbol{\sigma}_{t}^2)
\big)$.

We use three stabilizers:
(i) clipping $\log \boldsymbol{\sigma}_{t}^2 \in [\log \sigma^2_{\min}, \log \sigma^2_{\max}]$ with $\sigma_{\min}\approx 10^{-3}$, $\sigma_{\max}\approx 10$;
(ii) $\varepsilon$-stabilized normalized weights $\tilde{w}^{(i)} = w^{(i)}/(\sum_j w^{(j)}+\varepsilon)$ and stopping gradients through $w^{(i)}$;
(iii) online standardization of inputs $(o_t,\boldsymbol{p}_t)$. At test time the teacher is \emph{dropped} and only the student is used: given $(o_t,\boldsymbol{p}_t)$ we compute $[\boldsymbol{\mu}_{t}, \log \boldsymbol{\sigma}_{t}^2] = f_\varphi(o_t,\boldsymbol{p}_t)$ in $\mathcal{O}(1)$ time.
We treat $q_\varphi(\boldsymbol{\Theta}\mid \cdot)$ as an approximate belief and extract task-aligned features from its moments (Sec.~\ref{sec:policy}).
The full posterior surrogate provides an uncertainty-aware estimate for downstream reporting.The full posterior surrogate provides an uncertainty-aware estimate for downstream reporting, while the policy consumes only a compact,  belief summary for control.

\subsection{KL-based Intrinsic Reward}
\label{sec:method:kl}

We define an intrinsic reward that measures how much a \emph{newly acquired} observation changes the \emph{teacher} belief over $\boldsymbol{\Theta}$.
Let $\mathbf{w}_t \in \Delta^{N-1}$ be the PF weight vector after observing $(o_t,\boldsymbol{p}_t)$.
We define the one-step information-gain associated with action $a_{t-1}$ as
\begin{equation}
r_t^{\mathrm{IG}}
=
D_{\mathrm{KL}}\!\big(\mathbf{w}_t \,\Vert\, \mathbf{w}_{t-1}\big)
=
\sum_{i=1}^N
w_t^{(i)} \log \frac{w_t^{(i)}}{w_{t-1}^{(i)} + \varepsilon},
\label{eq:kl}
\end{equation}
with a small $\varepsilon$ for numerical stability.
Under Sequential Importance Sampling (SIS) assumptions and \emph{before resampling}, $r_t^{\mathrm{IG}}$ is a single-sample Monte Carlo estimator of the Bayes posterior-prior KL, i.e., the conditional mutual information between $\boldsymbol{\Theta}$ and $o_t$ given $o_{1:t-1}$. We compute $r_t^{\mathrm{IG}}$ \emph{exclusively} from the teacher weights and clip rare large values at a high empirical percentile to off-policy updates.

\subsection{Belief Features and Policy Learning}
\label{sec:policy}
\begin{algorithm}[!t]
\caption{Distill-Belief Training (Inference vs.\ Execution)}
\label{alg:train}
\begin{algorithmic}[1]
\REQUIRE Particle budget $N$, student params $\varphi$, policy params $\theta$, stop threshold $\zeta$, horizon $H$
\STATE Initialize student posterior $q_\varphi$ and policy $\pi_\theta$
\FOR{each training episode}
  \STATE Initialize PF particles $\{\boldsymbol{\Theta}^{(i)}\}_{i=1}^N$
  \STATE Initialize weights $w_0^{(i)} \leftarrow \frac{1}{N}$ for all $i$
  \FOR{$t=1$ \TO $H$}
    \STATE Observe $(o_t,\boldsymbol{p}_t)$ \textbf{\textcolor{cyan}{/*Inference layer /}}
    \FOR{$i=1$ \TO $N$}
      \STATE $w_t^{(i)} \leftarrow w_{t-1}^{(i)} \cdot \ell(o_t \mid \boldsymbol{p}_t,\boldsymbol{\Theta}^{(i)})$
    \ENDFOR
    \STATE Normalize $\{w_t^{(i)}\}_{i=1}^N$
    \STATE $r_t^{IG} \leftarrow D_{\mathrm{KL}}(w_t \,\|\, w_{t-1})$ \ \ \textit{/* before resampling /}
    \STATE Optionally resample / MH-move
    \STATE Update $\varphi$ by minimizing $\mathcal{L}_{bel}$ on $\{(\boldsymbol{\Theta}^{(i)}, w_t^{(i)})\}_{i=1}^N$
    \STATE Compute $q_\varphi(\boldsymbol{\Theta}\mid o_t,\boldsymbol{p}_t)$ and features $f_{bel}(b_t)$
    \IF{$\mathrm{Spread}(b_t) < \zeta$} 
      \STATE \textbf{break} \textbf{\textcolor{cyan}{/*Execution layer*/}}
    \ENDIF
    \STATE Sample $a_t \sim \pi_\theta(\cdot\mid \boldsymbol{\psi}_t= \{o_t,\boldsymbol{p}_t,f_{bel}(b_t))\}$ and execute
  \ENDFOR
  \STATE Update $\theta$ with PPO using rewards $\{r_t^{IG}\}_{t=1}^{H}$
\ENDFOR
\end{algorithmic}
\end{algorithm}

The Bayes belief $b_t(\boldsymbol{\Theta}) = p(\boldsymbol{\Theta}\mid o_{1:t}, \boldsymbol{p}_{1:t})$ is a sufficient statistic for optimal control.
However, the belief over $\boldsymbol{\Theta}\in\mathbb{R}^d$ is high-dimensional and expensive to represent.
Our design therefore separates \emph{scientific correctness} from \emph{deployment-time computation}:
(i) we maintain and distill a belief over the \emph{full} physical parameter vector $\boldsymbol{\Theta}$ so that training is driven by posterior contraction over \emph{all} unknown field parameters;
(ii) we expose to the policy only a compact, task-relevant marginal (source location) to stabilize learning and keep the policy state  and efficient.
Crucially, intrinsic rewards are computed from the PF teacher belief, while the student belief is used only as a policy input and a stopping signal, preventing reward hacking by design. {We partition the parameter vector as \(\boldsymbol{\Theta} = [(x_s,y_s)^\top,\ \boldsymbol{\Theta}_R^\top]^\top), where (\boldsymbol{\Theta}_R\in\mathbb{R}^{d-2}\) denotes the \textit{subvector} of the remaining unknown physical parameters (e.g., \(q_s,u_s,\phi_s,\alpha,\lambda\)).} The location marginal belief is $
b_t^L(\boldsymbol{\ell}) \;\triangleq\; \int b_t(\boldsymbol{\ell},\boldsymbol{\Theta}_R)\,d\boldsymbol{\Theta}_R$. We summarize $b_t^L$ by {its mean and covariance}:
$\boldsymbol{\mu}_L(b_t)=\mathbb{E}_{b_t^L}[\boldsymbol{\ell}]\in\mathbb{R}^2,
\boldsymbol{\Sigma}_L(b_t)=\mathrm{Cov}_{b_t^L}[\boldsymbol{\ell}]\in\mathbb{R}^{2\times 2}$. {We restrict to \((x_s,y_s)\) not because other parameters are unobserved, but because localization is the task objective; the remaining parameters are inferred jointly and marginalized out in \(b_t^L\).} For the PF teacher, $\boldsymbol{\mu}_L,\boldsymbol{\Sigma}_L$ are computed from weighted particles in the full $\boldsymbol{\Theta}$-space and then restricted to the $(x_s,y_s)$ coordinates.
For the student Gaussian $q_\varphi(\boldsymbol{\Theta}\mid o_t,\boldsymbol{p}_t)=\mathcal{N}(\boldsymbol{\mu}_t,\mathrm{diag}(\boldsymbol{\sigma}_t^2))$, we have
$\boldsymbol{\Sigma}_L(b_t)=\mathrm{diag}(\sigma_{x_s,t}^2,\sigma_{y_s,t}^2)$. We define a scalar dispersion certificate over location:
$\mathrm{Spread}(b_t)
\;\triangleq\;
\sqrt{\mathrm{tr}(\boldsymbol{\Sigma}_L(b_t))}
=
\sqrt{\lambda_1+\lambda_2}$, where $\lambda_1,\lambda_2$ are the eigenvalues of $\boldsymbol{\Sigma}_L(b_t)$.
This choice has two useful properties.
First, it is rotation-invariant and consistent across teacher and student (even if the teacher covariance has off-diagonal terms).
Second, it admits a direct statistical interpretation:
$\mathrm{Spread}(b_t)^2
=
\mathrm{tr}\!\big(\boldsymbol{\Sigma}_L(b_t)\big)
=
\mathbb{E}[\|\boldsymbol{\ell}-\boldsymbol{\mu}_L(b_t)\|_2^2]$, i.e., $\mathrm{Spread}(b_t)$ is the root-mean-square (RMS) posterior uncertainty of the location.
Moreover, by applying Markov's inequality to $\|\boldsymbol{\ell}-\boldsymbol{\mu}_L\|_2^2$, we obtain a simple certificate:
$\mathbb{P}\!\left(\|\boldsymbol{\ell}-\boldsymbol{\mu}_L(b_t)\|_2 \ge \delta\right)
\le \frac{\mathrm{Spread}(b_t)^2}{\delta^2}$,
so enforcing $\mathrm{Spread}(b_t)\!<\!\zeta$ directly upper-bounds the probability of large localization error at a chosen scale $\delta$.
This makes Spread-based stopping a principled accuracy--budget control mechanism rather than a heuristic termination rule. We expose to the policy from $b_t^L$:
$ g(b_t)\in
\{\boldsymbol{\mu}_L(b_t);$ $
[\boldsymbol{\mu}_L(b_t),$ $~\mathrm{diag}(\boldsymbol{\Sigma}_L(b_t))^{1/2},~\mathrm{Spread}(b_t)]
\}$. Intuitively, $\boldsymbol{\mu}_L$ supports exploitation (move toward the current best estimate), while $\mathrm{diag}(\boldsymbol{\Sigma}_L)^{1/2}$ and $\mathrm{Spread}$ quantify remaining uncertainty to guide exploration and enable self-cessation.
The belief features consumed by the policy are $f_{\text{bel}}(b_t)=g(b_t)$ and the full policy input is $\boldsymbol{\psi}_t=[o_t, \boldsymbol{p}_t, f_{\text{bel}}(b_t)]$. Our control objective is to maximize the expected discounted cumulative information gain,
$J(\theta)=\mathbb{E}_{\pi_\theta}\left[\sum_{t=1}^{T}\gamma^{t-1} r^{\mathrm{IG}}_{t}\right]$,
where $r^{\mathrm{IG}}_{t}$ is computed exclusively from the PF teacher belief (Eq.~\eqref{eq:kl}).
Because $r^{\mathrm{IG}}_{t}$ is a dense but noisy one-step Monte Carlo estimator of posterior contraction, we adopt an on-policy actor--critic with generalized advantage estimation (GAE) to reduce variance and stabilize learning.

We learn a continuous-control policy $\pi_\theta(a_t\mid \boldsymbol{\psi}_t)$ using Proximal Policy Optimization (PPO).
We parameterize $\pi_\theta$ as a diagonal Gaussian,
$\pi_\theta(a_t\mid \boldsymbol{\psi}_t)
= \mathcal{N}\!\big(a_t;\,\boldsymbol{\mu}_\theta(\boldsymbol{\psi}_t),\,\mathrm{diag}(\boldsymbol{\sigma}_\theta(\boldsymbol{\psi}_t)^2)\big)$,
and learn a value function $V_\phi(\boldsymbol{\psi}_t)$.
Using on-policy rollouts, we compute GAE:
$\delta_t = r^{\mathrm{IG}}_{t+1} + \gamma V_\phi(\boldsymbol{\psi}_{t+1}) (1-\texttt{done}_{t+1}) - V_\phi(\boldsymbol{\psi}_t),
\hat{A}_t = \sum_{l=0}^{\infty} (\gamma\lambda)^l \delta_{t+l}$,
with $\lambda\in[0,1]$.
Let $\pi_{\theta_{\mathrm{old}}}$ denote the behavior policy and define the ratio
$r_t(\theta) = \frac{\pi_\theta(a_t\mid \boldsymbol{\psi}_t)}{\pi_{\theta_{\mathrm{old}}}(a_t\mid \boldsymbol{\psi}_t)}$.
PPO maximizes the clipped surrogate objective
$ \mathcal{L}_{\text{PPO}}(\theta)
=
\mathbb{E}_t\!\left[
\min\!(
r_t(\theta)\hat{A}_t,\;
\mathrm{clip}(r_t(\theta),1-\epsilon,1+\epsilon)\hat{A}_t
)
\right]$, and we optimize
$ \max_{\theta,\phi}\;
\mathcal{L}_{\text{PPO}}(\theta)
- c_v\,\mathbb{E}_t\!\left( V_\phi(\boldsymbol{\psi}_t)-\hat{R}_t \right)^2
+ c_{\text{ent}}\,\mathbb{E}_t\!\left[ \mathcal{H}\!\big(\pi_\theta(\cdot\mid \boldsymbol{\psi}_t)\big) \right]$, where $\hat{R}_t$ is the bootstrapped return and $\mathcal{H}(\cdot)$ is entropy.
During data collection, exploration is achieved by sampling $a_t\sim \pi_\theta(\cdot\mid \boldsymbol{\psi}_t)$; at evaluation time, we use the mean action.  \textcolor{cyan}{Training is summarized in Algorithm~\ref{alg:train}; deployment removes the PF teacher, yielding a constant-time loop (Algorithm~\ref{alg:deploy}).}

\subsection{Stopping Rule and Complexity}
\label{sec:method:stop}

We terminate an episode once the posterior over the source location is sufficiently concentrated or a maximum horizon is reached.
Formally, our stopping rule is \textcolor{cyan}{$\mathrm{Spread}(b_t) = \sqrt{\mathrm{tr}(\boldsymbol{\Sigma}(b_t))} < \zeta$}, for a user-chosen threshold $\zeta>0$, which ensures the expected localization error is \textcolor{red}{below a user-specified tolerance
(Appendix~\ref{app:stopping})}. Per time step during training, the PF teacher costs $\mathcal{O}(N)$, while the student forward pass and policy inference are $\mathcal{O}(1)$.
At test time, the teacher is completely absent: we run only the student posterior $q_\varphi(\boldsymbol{\Theta}\mid o_t,\boldsymbol{p}_t)$ and the policy $\pi_\theta$. Let $N$ be the particle budget and $d=\mathrm{dim}(\boldsymbol{\Theta})$.
During training, the PF teacher performs (i) likelihood evaluation and reweighting $O(N\cdot C_{\text{lik}})$, (ii) occasional resampling $O(N)$, and (iii) optional MH rejuvenation whose dominant cost is also $O(N\cdot C_{\text{lik}})$ plus computing particle moments (up to $O(Nd^2)$ if a full covariance is used). Computing the KL-based intrinsic reward from PF weights is $O(N)$. The student and policy forward passes are $O(1)$ w.r.t.\ $N$ (dependent on network size). 
At deployment, we discard the PF entirely and run only the student posterior and policy networks, yielding $O(1)$ per-step time and $O(1)$ memory w.r.t.\ $N$.

\section{Experiments}
\label{sec:exp}

{We evaluate on \textcolor{cyan}{\textbf{ISLCenv (App. ~\ref{app:ISLCenv}}, \cite{chen2021dual}), a suite of \textbf{physics-grounded closed-loop field-sensing environments}} for inverse source localization and characterization.
Each episode instantiates a latent scalar field $\phi(\cdot;\Theta)$ using the steady-state convection--diffusion forward model (Gaussian plume instantiation in Sec.~\ref{sec:Preliminaries} or \ref{app:ISLCenv}), and the agent observes a noisy intensity reading $o_t$ at its pose $\mathbf{p}_t$.} The agent controls its motion via actions $a_t$ and must actively collect informative measurements to infer the unknown field parameters $\boldsymbol{\Theta}$ and localize the source. \textcolor{cyan}{We consider multiple field types (Temp. \cite{hite2019localization}, Conc.\cite{stockie2011mathematics}, Mag.\cite{brandenburg2005astrophysical}, Elec.\cite{cheney1999electrical}, Gas \cite{chen2021dual}, En.\cite{arridge1999optical}, and Noise \cite{picaut2002numerical}) to cover diverse signal characteristics and dynamics.} At the beginning of each episode, we sample the source and environmental parameters from predefined distributions (Table~\ref{tab:training_parameters}), including the source location $(x_s,y_s)$, release strength $q_s$, wind-related parameters, decay, and diffusivity. The agent starts from a random initial position in a designated region and moves with a fixed step size. Each episode terminates when either (i) a maximum horizon $H$ is reached, or (ii) the uncertainty certificate falls below a threshold $\zeta$.

\textbf{Baselines and Metrics:}
\label{sec:baselines}
We compare our method against a diverse set of baselines covering RL, planning, and Bayesian-inference-driven strategies: \textcolor{cyan}{AGDC \cite{Shi2024AutonomousGD}, Infotaxis \cite{vergassola2007infotaxis}, DCEE \cite{chen2021dual}, Entrotaxis \cite{hutchinson2018entrotaxis}, PCDQN \cite{zhao2022deep}, GMM-PFRL \cite{Park2022SourceTE}, and GMM-IG \cite{lee2025enhanced}}. These baselines are grouped into: (i) RL-based approaches that learn a policy from interaction, possibly augmented with belief/inference modules; and (ii) planning-based approaches that explicitly optimize information-related objectives using a Bayesian belief update. All methods are evaluated under the same environment settings and episode budgets. We evaluate both task performance and inference quality using five metrics:
\textbf{(1) \textcolor{cyan}{Success Rate (SR)}}, the fraction of episodes that satisfy the stopping criterion (e.g., $\mathrm{Spread}_t<\zeta$) within the maximum horizon;
\textbf{(2) \textcolor{cyan}{Trajectory Efficiency (TE)}}, measured by the number of steps $T_k$ until termination (lower is better);
\textbf{(3) \textcolor{cyan}{Source Localization Error (SLE)}}, the Euclidean error between the posterior-mean location estimate and the ground-truth source location at termination;
and \textbf{(4) \textcolor{cyan}{Full-Parameter Estimation Error (FPE)}}, which measures estimation accuracy of the full physical parameter vector $\boldsymbol{\Theta}\in\mathbb{R}^d$ at termination and \textbf{(5) \textcolor{cyan}{Uncertainty Quality (UQ)}}, measured by the negative log-likelihood (NLL) of the ground-truth parameters under the predicted posterior, $\mathrm{NLL} = -\log q_{\varphi}(\boldsymbol{\Theta})$ (see Appendix~\ref{app:metrics}).

\textcolor{cyan}{\textbf{Research Questions (RQ):}}
Our experiments are organized to answer the following research questions: 
\begin{enumerate}
\item \textcolor{olive}{\textbf{(RQ1. in Sec. \ref{sec:Protocol})}
Does Distill-Belief outperform consistently strong RL- and planning-based baselines on standard single-source ISLC across diverse physical field modalities in practice, while jointly improving task performance and inference/uncertainty quality?}
\item \textcolor{orange}{\textbf{(RQ2. in Sec. \ref{sec:MultiSource})} How well does the method scale in realistic scenarios as the number of simultaneous sources increases (i.e., increasingly multi-modal posteriors)?}
\item \textcolor{purple}{\textbf{(RQ3. in Sec. \ref{sec:Obstacle})} Can Distill-Belief maintain high success and efficiency in obstacle-constrained (non-convex) environments where reachability limits informative sensing?} 
\item \textcolor{teal}{\textbf{(RQ4. in Sec. \ref{sec:Ablation})} What are the contributions of key design choices , and how do these choices affect robustness to shortcut/reward-hacking behaviors in realistic settings?}
\item \textcolor{brown}{\textbf{(RQ5. in Sec. \ref{sec:Deployment})} What is the deployment-time inference cost in terms of particle budget of the distilled student compared with PF-based inference?} 
\item \textcolor{violet}{\textbf{(RQ6). in Sec. \ref{sec:Sensitivity}} How sensitive are results to PF hyperparameters, and where is the practical performance-cost frontier as particle budgets vary in practice?}
\end{enumerate}

\subsection{\textcolor{olive}{Single-Source Cross-Field Results}}
\label{sec:Protocol}

We evaluate in-distribution (ID) performance on a held-out set of randomly generated scenarios. 
Unless stated otherwise, all methods share the same environment configurations, training budgets, and network architectures (when applicable), and we report mean and standard deviation over multiple random seeds. 
We use the metrics defined in Sec.~\ref{sec:exp} to jointly assess task performance and belief/uncertainty quality, and follow each baseline's recommended hyperparameters unless explicitly stated otherwise. 
Additional implementation details are provided in the appendix \ref{sec:main-results}.

\begin{table}[htbp]
\centering
\caption{Comparison of Baselines Under Different Scenarios}
\label{tab:fundamental_experiments}
\resizebox{\linewidth}{!}{%
\begin{tabular}{llllllll}
\toprule
\textbf{Method} & \textbf{Temp.} & \textbf{Conc.} & \textbf{Mag.} & \textbf{Elec.} & \textbf{Gas} & \textbf{En.} & \textbf{Noise} \\
\midrule
\multicolumn{8}{c}{\textbf{SR (Success Rate) $\uparrow$}} \\
\midrule
\textbf{Distill-Belief} & \textcolor{teal}{0.95±0.05} & \textcolor{teal}{0.94±0.05} & \textcolor{teal}{0.94±0.05} & \textcolor{teal}{0.82±0.04} & \textcolor{teal}{0.96±0.05} & \textcolor{teal}{0.63±0.03} & \textcolor{teal}{0.94±0.05} \\
GMM-IG     & 0.90±0.05 & 0.91±0.05 & 0.89±0.04 & 0.77±0.04 & 0.92±0.05 & 0.61±0.03 & 0.91±0.05 \\
GMM-PFRL  & 0.80±0.04 & 0.81±0.04 & 0.81±0.04 & 0.68±0.03 & 0.79±0.04 & 0.51±0.03 & 0.80±0.04 \\
PCDQN     & 0.87±0.04 & 0.88±0.04 & 0.86±0.04 & 0.74±0.04 & 0.86±0.04 & 0.57±0.03 & 0.86±0.04 \\
AGDC      & 0.90±0.05 & 0.91±0.05 & 0.89±0.04 & 0.73±0.04 & 0.89±0.04 & 0.57±0.03 & 0.89±0.04 \\
Infotaxis & 0.85±0.04 & 0.86±0.04 & 0.85±0.04 & 0.75±0.04 & 0.84±0.04 & 0.55±0.03 & 0.80±0.04 \\
Entrotaxis & 0.24±0.01 & 0.23±0.01 & 0.25±0.01 & 0.15±0.01 & 0.22±0.01 & 0.14±0.01 & 0.23±0.01 \\
DCEE     & 0.58±0.03 & 0.59±0.03 & 0.58±0.03 & 0.43±0.02 & 0.56±0.03 & 0.36±0.02 & 0.57±0.03 \\
\midrule
\multicolumn{8}{c}{\textbf{TE (Trajectory Efficiency) $\downarrow$}} \\
\midrule
\textbf{Distill-Belief} & \textcolor{teal}{20±1.0} & \textcolor{teal}{19±1.0} & \textcolor{teal}{18±0.9} & \textcolor{teal}{19±0.8} & \textcolor{teal}{17±0.9} & \textcolor{teal}{19±0.5} & \textcolor{teal}{19±1.0} \\
GMM -IG        & 23±1.2 & 22±1.1 & 22±1.1 & 23±0.9 & 20±1.0 & 22±0.6 & 21±1.1 \\
GMM-PFRL       & 25±1.3 & 24±1.2 & 25±1.2 & 20±1.0 & 22±1.1 & 24±0.7 & 23±1.2 \\
PCDQN                 & 25±1.3 & 24±1.2 & 24±1.2 & 19±1.0 & 21±1.1 & 23±0.7 & 22±1.1 \\
AGDC                  & 45±2.3 & 43±2.2 & 42±2.1 & 44±1.8 & 40±2.0 & 45±1.3 & 42±2.1 \\
Infotaxis             & 50±2.5 & 48±2.4 & 51±2.4 & 58±1.9 & 43±2.2 & 47±1.4 & 45±2.3 \\
Entrotaxis            & 62±3.1 & 60±3.0 & 59±3.0 & 61±2.5 & 56±2.8 & 55±1.8 & 58±2.9 \\
DCEE                  & 57±2.9 & 55±2.8 & 54±2.7 & 55±2.3 & 51±2.6 & 57±1.6 & 53±2.7 \\
\midrule
\multicolumn{8}{c}{\textbf{REV (Robustness and Error Variation) $\downarrow$}} \\
\midrule
\textbf{Distill-Belief} & \textcolor{teal}{0.15±0.08} & \textcolor{teal}{0.14±0.07} & \textcolor{teal}{0.14±0.07} & \textcolor{teal}{0.12±0.06} & \textcolor{teal}{0.14±0.07} & \textcolor{teal}{0.09±0.05} & \textcolor{teal}{0.13±0.07} \\
GMM-IG           & 0.1 ±0.05 & 0.1 ±0.05 & 0.1 ±0.05 & 0.09±0.05 & 0.1 ±0.05 & 0.07±0.04 & 0.1 ±0.05 \\
GMM-PFRL                  & 0.1 ±0.05 & 0.1 ±0.05 & 0.1 ±0.05 & 0.09±0.05 & 0.1 ±0.05 & 0.07±0.04 & 0.1 ±0.05 \\
PCDQN                 & 0.1 ±0.05 & 0.1 ±0.05 & 0.1 ±0.05 & 0.09±0.05 & 0.1 ±0.05 & 0.07±0.04 & 0.1 ±0.05 \\
AGDC                  & 1.6 ±0.08 & 1.5 ±0.08 & 1.5 ±0.08 & 1.2 ±0.06 & 1.3 ±0.07 & 0.8 ±0.04 & 1.4 ±0.07 \\
Infotaxis             & 1.5 ±0.08 & 1.4 ±0.07 & 1.4 ±0.07 & 1.2 ±0.06 & 1.3 ±0.07 & 0.8 ±0.04 & 1.3 ±0.07 \\
Entrotaxis            & 1.4 ±0.07 & 1.3 ±0.07 & 1.3 ±0.07 & 1.1 ±0.06 & 1.2 ±0.06 & 0.7 ±0.04 & 1.2 ±0.06 \\
DCEE                  & 1.4 ±0.07 & 1.3 ±0.07 & 1.3 ±0.07 & 1.1 ±0.06 & 1.2 ±0.06 & 0.7 ±0.04 & 1.2 ±0.06 \\
\midrule
\multicolumn{8}{c}{\textbf{LPS (Local Posterior Spread) $\downarrow$}} \\
\midrule
\textbf{Distill-Belief} & \textcolor{teal}{0.08±0.01} & \textcolor{teal}{0.05±0.01} & \textcolor{teal}{0.06±0.01} & \textcolor{teal}{0.05±0.01} & \textcolor{teal}{0.06±0.01} & \textcolor{teal}{0.05±0.01}& \textcolor{teal}{0.06±0.01} \\
GMM-IG              & 0.2 ±0.01 & 0.2 ±0.01 & 0.2 ±0.01 & 0.17±0.01 & 0.2 ±0.01 & 0.13±0.01 & 0.2 ±0.01 \\
GMM-PFRL                & 0.25±0.01 & 0.24±0.01 & 0.24±0.01 & 0.20±0.01 & 0.22±0.01 & 0.14±0.01 & 0.23±0.01 \\
PCDQN                 & 0.23±0.01 & 0.22±0.01 & 0.22±0.01 & 0.18±0.01 & 0.20±0.01 & 0.12±0.01 & 0.21±0.01 \\
AGDC                  & 0.25±0.01 & 0.24±0.01 & 0.23±0.01 & 0.19±0.01 & 0.21±0.01 & 0.13±0.01 & 0.22±0.01 \\
Infotaxis             & 0.6 ±0.03 & 0.6 ±0.03 & 0.6 ±0.03 & 0.51±0.02 & 0.6 ±0.03 & 0.39±0.02 & 0.6 ±0.03 \\
Entrotaxis            & 0.7 ±0.04 & 0.7 ±0.04 & 0.7 ±0.04 & 0.60±0.03 & 0.7 ±0.04 & 0.46±0.02 & 0.7 ±0.04 \\
DCEE                  & 0.6 ±0.03 & 0.6 ±0.03 & 0.6 ±0.03 & 0.51±0.02 & 0.6 ±0.03 & 0.39±0.02 & 0.6 ±0.03 \\
\bottomrule
\end{tabular}%
}
\end{table}

Table~\ref{tab:fundamental_experiments} summarizes the core single-source results across seven field types.
Overall, \textsc{Distill-Belief} achieves the strongest \textbf{task success and efficiency}: it consistently attains the highest (or near-highest) SR while requiring markedly fewer steps (TE) to reach the stopping criterion.
The gains are most pronounced on challenging modalities such as \emph{Elec.} and \emph{En.}, where planning-based baselines degrade substantially, while our method maintains a substantially higher success rate with shorter trajectories.

A key observation is that high SR is not obtained by ``early stopping'' alone.
Our method also yields substantially lower LPS across all field types, indicating that the policy is indeed driving \emph{belief contraction} rather than terminating prematurely.
In contrast, planning baselines (Infotaxis / Entrotaxis / DCEE) tend to suffer from either myopic information seeking or overly conservative exploration under noisy observations, leading to long trajectories and low SR. Among RL-based competitors, we observe a clear trade-off between exploration efficiency and inference quality.
While some RL baselines can achieve competitive REV in easier modalities, they typically require longer trajectories and/or exhibit inferior LPS, suggesting that they do not reduce posterior dispersion as reliably.
By explicitly coupling a Bayes-correct teacher reward with amortized belief features, \textsc{Distill-Belief} attains both high success and fast termination, demonstrating that belief-space objectives provide a stronger training signal than sparse or heuristic rewards.

\subsection{\textcolor{orange}{Multi-Source Localization}}
\label{sec:MultiSource}
Table~\ref{tab:multi_source_results_sr_steps} reports multi-source localization performance in the Temperature field with $2/3/4$ sources. While single-source localization is a standard benchmark, many real scenarios involve multiple emitters (e.g., multiple leaks or hotspots), leading to observation superposition and a multi-modal posterior. This setting is therefore substantially more challenging: the agent must both disambiguate multiple peaks in the belief space and allocate sensing trajectories to reduce uncertainty across sources. By increasing the number of sources, we explicitly test whether methods that perform well under near-unimodal posteriors can scale to multi-modal beliefs without collapsing into local sensing patterns. We summarize performance using ASLE, WCSE, and BCR to capture both average accuracy and worst-case reliability across sources.

\begin{table}[htbp]
\centering
\caption{Multi-source performance in the Temperature field.}
\label{tab:multi_source_results_sr_steps}
\resizebox{\linewidth}{!}{%
\begin{tabular}{lccc}
\toprule
\textbf{Method} 
& \textbf{2 Sources} (SR$\uparrow$ / TE$\downarrow$) 
& \textbf{3 Sources} (SR$\uparrow$ / TE$\downarrow$) 
& \textbf{4 Sources} (SR$\uparrow$ / TE$\downarrow$) \\
\midrule
\textbf{Distill-Belief} & \textcolor{teal}{0.77$\pm$0.03 / 28$\pm$3} & \textcolor{teal}{0.70$\pm$0.03 / 34$\pm$4} & \textcolor{teal}{0.61$\pm$0.04 / 40$\pm$5} \\
AGDC          & 0.71$\pm$0.03 / 34$\pm$4 & 0.63$\pm$0.04 / 42$\pm$5 & 0.53$\pm$0.05 / 50$\pm$6 \\
GMM-IG        & 0.66$\pm$0.04 / 39$\pm$5 & 0.58$\pm$0.05 / 48$\pm$6 & 0.47$\pm$0.05 / 58$\pm$7 \\
PCDQN         & 0.63$\pm$0.04 / 42$\pm$6 & 0.53$\pm$0.05 / 52$\pm$7 & 0.41$\pm$0.06 / 62$\pm$8 \\
GMM-PFRL      & 0.68$\pm$0.03 / 55$\pm$6 & 0.56$\pm$0.05 / 65$\pm$8 & 0.43$\pm$0.06 / 78$\pm$10 \\
Infotaxis     & 0.65$\pm$0.04 / 60$\pm$7 & 0.51$\pm$0.06 / 72$\pm$9 & 0.36$\pm$0.07 / 85$\pm$12 \\
DCEE          & 0.40$\pm$0.02 / 72$\pm$6 & 0.33$\pm$0.03 / 86 $\pm$7 & 0.26$\pm$0.04 / 100$\pm$9 \\
\bottomrule
\end{tabular}}
\end{table}

As shown in Table~\ref{tab:multi_source_results_sr_steps}, performance degrades as the number of sources increases, which is expected due to observation superposition and multi-modality.
Nevertheless, \textsc{Distill-Belief} remains the best-performing method in both SR and TE across $2/3/4$ sources.
This indicates that the learned policy does not collapse to a single local sensing pattern; instead, it continues to gather informative measurements that reduce global ambiguity.
Planning baselines exhibit a sharper drop in SR and a rapid increase in TE, consistent with their difficulty in handling multi-modal posteriors and long-horizon disambiguation under motion constraints.

\subsection{\textcolor{purple}{Obstacle-Constrained Environments}}
\label{sec:Obstacle}
Table~\ref{tab:obstacle_results_sr_steps} evaluates performance under obstacle-constrained environments with varying obstacle densities. Obstacles induce non-convex feasible regions and often prevent the agent from taking a direct path to the most informative areas, which can expose brittleness in greedy information-seeking baselines and in policies that do not properly account for reachability. This suite is included to assess practical deployability: the agent must maintain localization quality while producing efficient and feasible trajectories under sparse, moderate, and dense obstacle layouts. We report SR, TE, and LPS to jointly reflect localization performance, sensing efficiency, and path-level behavior in constrained navigation.

\begin{table}[htbp]
\centering
\caption{Performance under obstacle-rich environments.}
\label{tab:obstacle_results_sr_steps}
\resizebox{\columnwidth}{!}{%
\begin{tabular}{lccc}
\toprule
Method
& Sparse (SR$\uparrow$ / TE$\downarrow$)
& Moderate (SR$\uparrow$ / TE$\downarrow$)
& Dense (SR$\uparrow$ / TE$\downarrow$) \\
\midrule
\textbf{Distill-Belief (ours)}
& \textcolor{teal}{0.90$\pm$0.04 / 21$\pm$1}
& \textcolor{teal}{0.86$\pm$0.05 / 25$\pm$1}
& \textcolor{teal}{0.80$\pm$0.06 / 31$\pm$2} \\
GMM-IG
& 0.85$\pm$0.05 / 24$\pm$1
& 0.81$\pm$0.05 / 28$\pm$2
& 0.74$\pm$0.06 / 35$\pm$2 \\
AGDC
& 0.85$\pm$0.05 / 46$\pm$2
& 0.80$\pm$0.05 / 52$\pm$3
& 0.72$\pm$0.06 / 61$\pm$3 \\
PCDQN
& 0.82$\pm$0.05 / 26$\pm$1
& 0.77$\pm$0.06 / 31$\pm$2
& 0.69$\pm$0.06 / 38$\pm$2 \\
GMM-PFRL
& 0.75$\pm$0.05 / 26$\pm$1
& 0.70$\pm$0.06 / 31$\pm$2
& 0.62$\pm$0.07 / 38$\pm$2 \\
Infotaxis
& 0.80$\pm$0.06 / 52$\pm$3
& 0.74$\pm$0.06 / 60$\pm$3
& 0.65$\pm$0.07 / 71$\pm$4 \\
DCEE
& 0.55$\pm$0.07 / 59$\pm$3
& 0.48$\pm$0.08 / 68$\pm$4
& 0.38$\pm$0.09 / 80$\pm$4 \\
\bottomrule
\end{tabular}}
\end{table}

Table~\ref{tab:obstacle_results_sr_steps} shows that obstacles reduce SR and increase TE for all methods as density grows from sparse to dense, since non-convex reachability limits access to maximally informative regions.
Across all densities, \textsc{Distill-Belief} remains the strongest method, achieving the highest success rates (SR: $0.90/0.86/0.80$ under sparse/moderate/dense) while terminating with the fewest steps (TE: $21/25/31$), indicating belief-driven yet constraint-aware exploration.
Among RL baselines, \textsc{GMM-IG} is the closest competitor but still trails in success and efficiency (SR: $0.85/0.81/0.74$, TE: $24/28/35$), while \textsc{AGDC} suffers especially poor efficiency under obstacles (TE: $46/52/61$), suggesting difficulty coordinating informative sensing with feasible navigation.
Planning-based approaches degrade more sharply in dense layouts (\textsc{Infotaxis} TE $71$; \textsc{DCEE} SR $0.38$ with TE $80$), consistent with over-committing to locally informative but globally inefficient routes and failing the certificate within the horizon.

\subsection{\textcolor{teal}{Ablation Studies}}
\label{sec:Ablation}

\begin{figure}[t]
  \centering

  \begin{subfigure}[t]{0.42\linewidth}
    \centering
    \includegraphics[width=\linewidth]{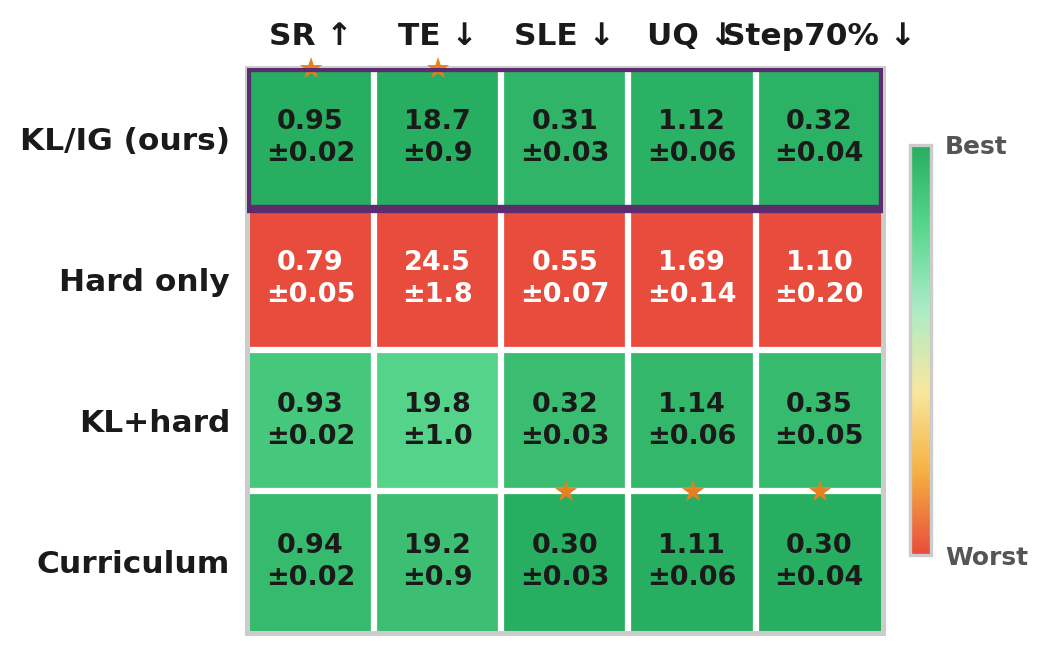}
    \caption{Ablation Reward.}
    \label{fig:ablation_Reward}
  \end{subfigure}
  \hfill
  \begin{subfigure}[t]{0.55\linewidth}
    \centering
    \includegraphics[width=0.78\linewidth]{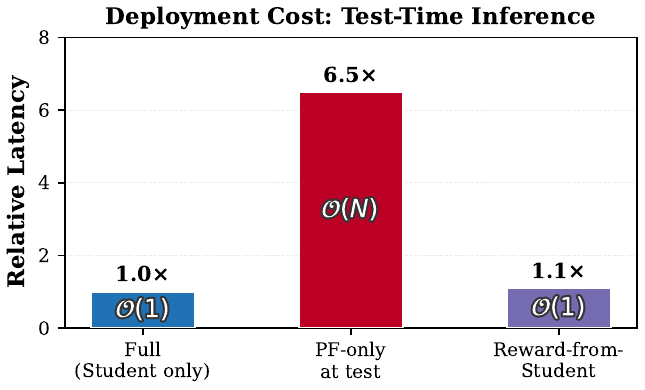}
    \caption{Test-time inference latency. }
    \label{fig:deployment_latency}
  \end{subfigure}

  \begin{subfigure}[t]{\linewidth}
    \centering
    \includegraphics[width=\linewidth]{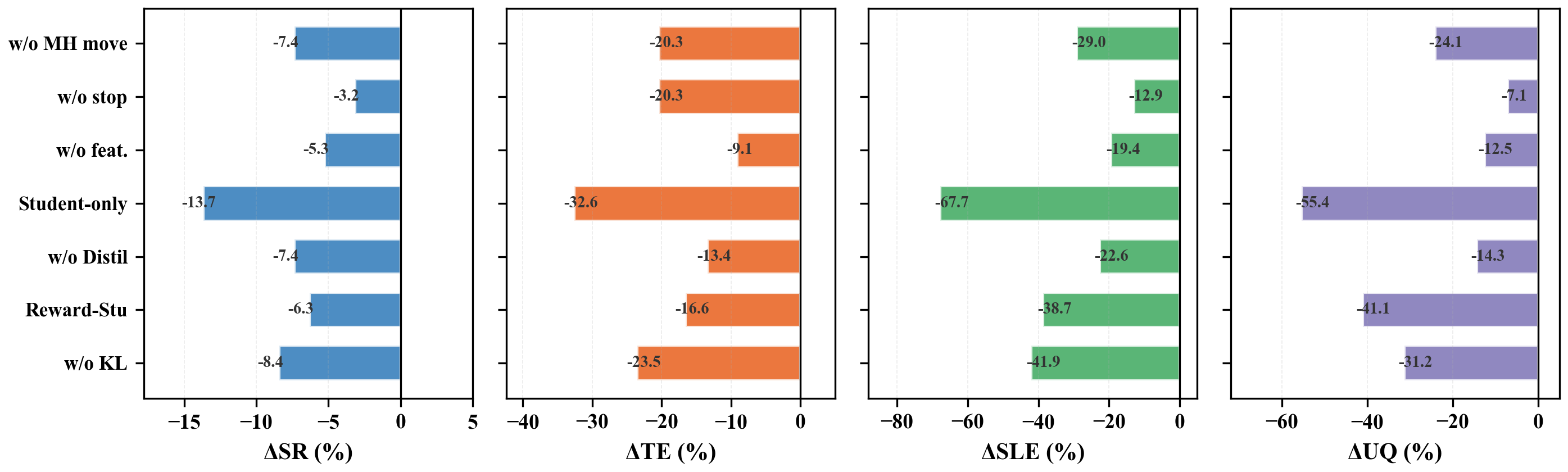}
    \caption{Ablation on Belief-optimization pipeline.}
    \label{fig:ablation_delta}
  \end{subfigure}

  \caption{Ablation study and deployment analysis.}
  \label{fig:ablation_combined}
\end{figure}
We conduct two complementary ablations to isolate the contributions of our belief-optimization pipeline and reward shaping.
Table~\ref{tab:ablation_main} ablates key components in the teacher-student belief pipeline.
Table~\ref{tab:ablation_reward} further isolates reward design by comparing dense KL-based information gain with sparse hard-success feedback.
Unless otherwise noted, we report SR, TE, SLE, FPE and UQ(NLL) for reward design we additionally report Steps@70\% SR to quantify sample efficiency. \textcolor{cyan}{\textbf{Belief-optimization pipeline.}}
Table~\ref{tab:ablation_main} and Figure~\ref{fig:ablation_delta} shows that the gains are not due to a single heuristic.
Removing the KL-based IG reward substantially reduces SR and increases TE, indicating that dense belief-space shaping is important for sample-efficient exploration.
Computing the KL objective from the student belief degrades both performance and UQ, consistent with shortcutting when the same approximation is used for both reward and policy input.
Distillation is critical for efficient deployment: PF-only testing remains competitive but forfeits amortized inference, while student-only training without PF supervision significantly harms SR/SLE/FPE/UQ, highlighting the need for Bayes-correct teacher guidance.
Finally, removing Spread features or Spread-based stopping mainly hurts efficiency and calibration, and disabling MH rejuvenation degrades stability and posterior quality, suggesting that PF diversity improves supervision. \textcolor{cyan}{\textbf{Reward design.}}
Table~\ref{tab:ablation_reward} and Figure~\ref{fig:ablation_Reward} confirms that sparse hard-success feedback is substantially less learnable, leading to lower SR, worse UQ, and markedly poorer sample efficiency.
In contrast, the dense KL-based information-gain reward provides shaped, belief-aligned feedback throughout an episode, accelerating learning and improving both trajectory efficiency and posterior quality.
The mixed and curriculum variants suggest that once reliable exploration emerges, task-success signals can be added without sacrificing the benefits of KL shaping.

\subsection{\textcolor{brown}{Deployment Cost and Amortized Inference}}
\label{sec:Deployment}

Table~\ref{tab:ablation_cost} focuses on test-time cost. PF-based belief updates scale linearly with the number of particles, which can become a bottleneck for real-time decision making or deployment across large numbers of scenarios. Our teacher--student design is motivated precisely to amortize Bayesian inference: the student predicts belief features in constant time while preserving the benefits of Bayes-correct training signals. This table explicitly disentangles \emph{performance} from \emph{deployability} by contrasting student-only inference against PF-only testing, and by showing how methods that rely on PF at test time incur $\mathcal{O}(N)$ per-step overhead. Table~\ref{tab:ablation_cost} and Figure \ref{fig:deployment_latency} highlights the practical motivation of our teacher--student design.
A PF update scales as $\mathcal{O}(N)$ per step and quickly becomes a deployment bottleneck, whereas the distilled student predicts belief features in $\mathcal{O}(1)$ time.
Importantly, the constant-time deployment does not come from weakening the training objective: Bayes correctness is enforced during training via the PF teacher, while the student inherits this behavior through distillation.

\subsection{\textcolor{violet}{Sensitivity to Budget and Thresholds}}
\label{sec:Sensitivity}
Figure \ref{fig:sensitivity_combined} and Table~\ref{fig:sens_line} evaluates sensitivity to PF hyperparameters. Because particle count and resampling/stopping thresholds are common sources of confounding, we include this study to demonstrate that our gains are not due to a narrowly tuned setting. We vary the particle budget $N$, the ESS resampling threshold $\tau_{\text{ESS}}$, and the stopping threshold $\tau_{\text{stop}}$, and report both performance metrics (SR, TE, SLE, UQ) and per-step latency. This analysis clarifies the practical trade-off between computational cost and uncertainty quality, and verifies that our default configuration lies in a stable operating regime rather than a brittle optimum.

\begin{figure}[htbp]
  \centering
  \begin{subfigure}[htbp]{\linewidth}
    \includegraphics[width=\linewidth]{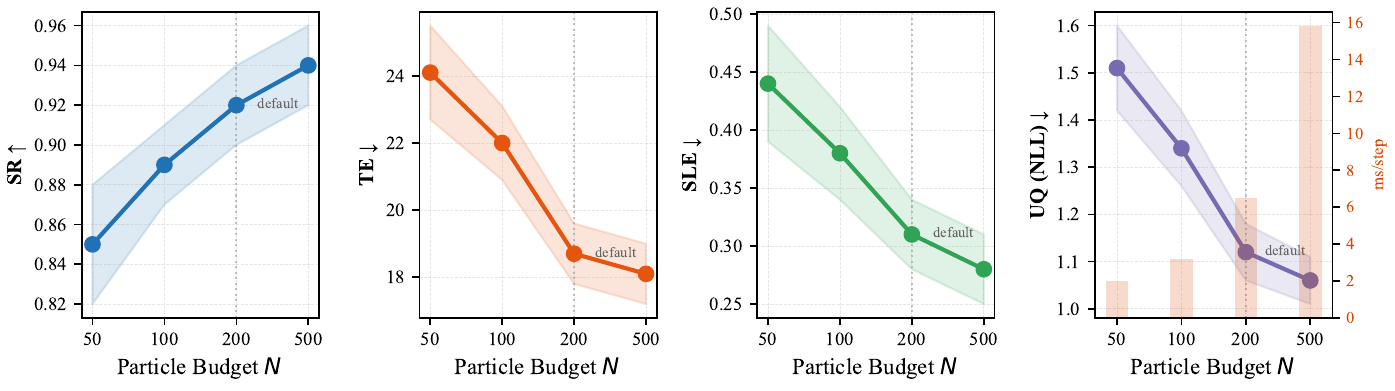}\caption{Sensitivity to particle budget $N$ ($\tau_{\mathrm{ESS}} = 0.5$).}
   \label{fig:sens_line}
  \end{subfigure}

  \begin{subfigure}[htbp]{\linewidth}    \includegraphics[width=\linewidth]{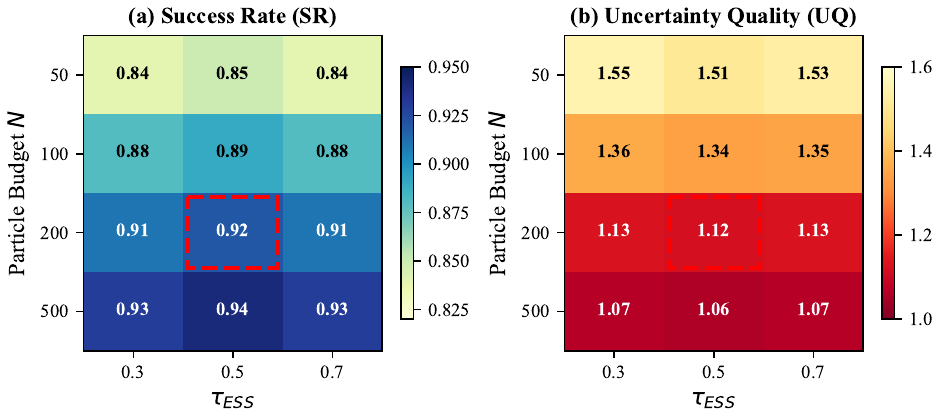}
    \caption{Sensitivity heatmap over the $N \times \tau_{\mathrm{ESS}}$ grid for SR and UQ.}
    \label{fig:sens_heatmap}
  \end{subfigure}

  \caption{Hyperparameter sensitivity analysis.}
  \label{fig:sensitivity_combined}
\end{figure}

Table~\ref{tab:sensitivity_particles_thresholds} and Figure \ref{fig:sens_heatmap} shows a clear performance-cost trade-off with the particle budget.
Increasing $N$ improves SR, reduces TE/SLE, and yields better UQ, but with diminishing returns beyond $N=200$ while training cost grows rapidly.
Varying $\tau_{\text{ESS}}$ has a comparatively smaller impact, indicating that our method operates in a stable regime rather than relying on a narrowly tuned resampling threshold.
These results justify our default configuration as a balanced operating point that provides strong belief supervision without excessive training-time overhead.


\section{Limitations and Ethical Considerations}
Experiments use physics-grounded simulators with stochastic sensing; real deployments may add complexities. Two limitations remain. 1) training depends on a Bayes-consistent particle-filter teacher for information-gain rewards and targets, which can be costly as the parameter space grows. 2) the compact test-time belief may be less effective in multi-source cases with multi-modal posteriors. This study uses no human-subject or personally identifiable data; real deployments should follow institutional policies and consent requirements. Finally, we emphasize that \textcolor{cyan}{we have already validated} the overall sensing-and-localization pipeline in \textcolor{cyan}{real-world/physical experiments} \cite{hutchinson2019experimental,hutchinson2019source} with  non-AI methods, and this paper extends that validated setting with a distillation-based policy.

\section{Conclusion}
We propose Distill-Belief, a teacher–student framework for closed-loop ISLC, where a particle-filter teacher supplies Bayes-consistent KL information-gain rewards during training and a compact student belief enables constant-cost control with uncertainty-based stopping at test time. Across seven physics-grounded modalities and stress tests, it improves success, sample efficiency, and uncertainty quality over strong baselines while mitigating reward hacking.

\bibliographystyle{ACM-Reference-Format}
\bibliography{sample-base}

\clearpage

\appendix
\section{Justification of the Spread-Based Stopping Certificate}
\label{app:stopping}

\textcolor{cyan}{\textbf{Purpose and connection to the main text.}}
In Sec.~\ref{sec:policy}-\ref{sec:method:stop}, we stop an episode once the \emph{location} posterior is sufficiently
concentrated, quantified by $\mathrm{Spread}(b_t)=\sqrt{\mathrm{tr}(\Sigma_L(b_t))}$.
This appendix explains why $\sqrt{\mathrm{tr}(\cdot)}$ is an interpretable and principled
uncertainty certificate: it (i) is a rotation-invariant scalar summary of posterior dispersion,
(ii) equals the Bayes mean-squared error (MSE) of the posterior-mean estimator under squared loss,
and (iii) yields simple expectation and tail bounds for Euclidean localization error.
For notational simplicity, we write the location posterior covariance as $\Sigma_t$ in this appendix;
it corresponds exactly to $\Sigma_L(b_t)$ in the main text.

\textcolor{cyan}{\textbf{Setup and notation.}}
Let the unknown source location be $\theta_L \in \mathbb{R}^2$ (e.g., $\theta_L = (x_s,y_s)$).
Let $D_t$ denote the information available up to time $t$ (e.g., observations and agent poses).
Let the (location) posterior be
\[
b_t^L(\theta_L) := p(\theta_L \mid D_t).
\]
Define its mean and covariance
\[
\mu_t := \mathbb{E}[\theta_L \mid D_t], \qquad
\Sigma_t := \mathrm{Cov}(\theta_L \mid D_t).
\]
We define the scalar spread certificate
\[
\mathrm{Spread}(b_t) := \sqrt{\mathrm{tr}(\Sigma_t)}.
\]

Stopping requires a \emph{one-dimensional} criterion that is easy to compute online and easy to
interpret. The trace summarizes the total marginal uncertainty across coordinates (sum of variances),
is invariant to rotations of the coordinate system, and $\sqrt{\mathrm{tr}(\Sigma_t)}$ has the same
units as distance, making it directly comparable to localization error thresholds.

We stop when $\mathrm{Spread}(b_t) < \zeta$ for a user-chosen tolerance $\zeta>0$.

\subsection{Why $\sqrt{\mathrm{tr}(\Sigma_t)}$ measures posterior concentration}

\textcolor{cyan}{\textbf{Intuition.}}
In 2D, $\mathrm{tr}(\Sigma_t)$ equals the sum of the two principal variances (eigenvalues)
$\lambda_1+\lambda_2$. Thus $\mathrm{Spread}(b_t)=\sqrt{\lambda_1+\lambda_2}$ can be viewed as the
\emph{root-sum-of-variances} across the two spatial coordinates: it is small only when the posterior
mass concentrates tightly around its mean in \emph{all} directions, not merely along one axis.
This makes it a conservative “how concentrated is the belief?” scalar.

\begin{proposition}[Trace--MSE identity]
\label{prop:trace_mse}
Assume $b_t^L$ has finite second moment.
Then
\[
\mathbb{E}\!\left[\|\theta_L-\mu_t\|_2^2 \mid D_t\right] = \mathrm{tr}(\Sigma_t).
\]
Equivalently,
\[
\mathrm{Spread}(b_t) = 
\sqrt{\mathbb{E}\!\left[\|\theta_L-\mu_t\|_2^2 \mid D_t\right]},
\]
i.e., $\mathrm{Spread}(b_t)$ is the posterior root-mean-square (RMS) Euclidean deviation from the posterior mean.
\end{proposition}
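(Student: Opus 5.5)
The plan is to expand the squared norm coordinatewise and use linearity of conditional expectation together with the definition of the covariance. All expectations below are taken with respect to the posterior $b_t^L$, that is, conditionally on $D_t$.

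\textbf{Step 1 (well-definedness).} The finite-second-moment assumption gives $\mathbb{E}[\|\theta_L\|_2^2\mid D_t]<\infty$. By Cauchy--Schwarz (or Jensen), $\mu_t$ is then finite, every entry of $\Sigma_t$ is finite, and $\mathbb{E}[\|\theta_L-\mu_t\|_2^2\mid D_t]<\infty$. This justifies the interchanges in Step 2.

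\textbf{Step 2 (trace identity).} Write $\|\theta_L-\mu_t\|_2^2 = (\theta_L-\mu_t)^\top(\theta_L-\mu_t)$. A scalar equals its own trace, and the trace is cyclic, so this equals $\mathrm{tr}\big((\theta_L-\mu_t)(\theta_L-\mu_t)^\top\big)$. The trace is a finite linear map, so it commutes with the conditional expectation:
\[
\mathbb{E}\!\left[\|\theta_L-\mu_t\|_2^2 \mid D_t\right]
= \mathrm{tr}\,\mathbb{E}\!\left[(\theta_L-\mu_t)(\theta_L-\mu_t)^\top \mid D_t\right]
= \mathrm{tr}(\Sigma_t),
\]
where the last equality is the definition of $\Sigma_t$ as the posterior covariance centered at the posterior mean $\mu_t$. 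Equivalently, one can sum the two coordinate variances: $\mathbb{E}[(\theta_{L,j}-\mu_{t,j})^2\mid D_t]=(\Sigma_t)_{jj}$ for $j=1,2$.

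\textbf{Step 3 (RMS form).} Both sides are nonnegative, so taking square roots and using the definition $\mathrm{Spread}(b_t)=\sqrt{\mathrm{tr}(\Sigma_t)}$ gives the equivalent statement.

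There is no substantive obstacle here. The only point needing care is Step 1, so that the linear interchange is legitimate and no term is infinite.

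One remark applies to the student. For $q_\varphi$ the same argument goes through with $\Sigma_t=\mathrm{diag}(\sigma_{x_s,t}^2,\sigma_{y_s,t}^2)$, but there the identity refers to the student's own distribution, not to the Bayes posterior.
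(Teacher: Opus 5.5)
Your proof is correct and takes the same route as the paper: rewrite $\|\theta_L-\mu_t\|_2^2$ as $\mathrm{tr}\bigl((\theta_L-\mu_t)(\theta_L-\mu_t)^\top\bigr)$, then swap trace and conditional expectation by linearity to obtain $\mathrm{tr}(\Sigma_t)$. Your Step 1 (finiteness) and Step 3 (taking square roots) only make explicit what the paper leaves implicit.
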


\textbf{Remark.}
Proposition~\ref{prop:trace_mse} is purely algebraic and does \emph{not} assume Gaussianity.
It shows that $\mathrm{Spread}(b_t)$ is an exact posterior RMS deviation from the mean---hence a direct,
model-agnostic measure of posterior concentration.

\begin{proof}
Using $\|x\|_2^2 = x^\top x$ and $\mathrm{tr}(a) = a$ for scalars,
\[
\mathbb{E}[\|\theta_L-\mu_t\|_2^2\mid D_t]
= \mathbb{E}[\mathrm{tr}((\theta_L-\mu_t)(\theta_L-\mu_t)^\top)\mid D_t].
\]
Swap trace and expectation (linearity):
\[
= \mathrm{tr}\!\left(\mathbb{E}[(\theta_L-\mu_t)(\theta_L-\mu_t)^\top\mid D_t]\right)
= \mathrm{tr}(\Sigma_t).
\]
\end{proof}

\subsection{Decision-theoretic justification: Spread is the Bayes MSE of the reported estimate}

A stopping rule should certify the quality of the \emph{reported estimate at termination}.
In our setting, the reported location is the posterior mean (consistent with the SLE metric in the main text).
Under squared Euclidean loss, the posterior mean is Bayes-optimal, and its Bayes risk is exactly the posterior variance.
Therefore, controlling $\mathrm{Spread}(b_t)$ directly controls the best achievable posterior-expected squared error
among all point estimators at that time step.

\begin{proposition}[Posterior mean is Bayes-optimal under squared Euclidean loss]
\label{prop:bayes_opt_mean}
For any point estimate $a\in\mathbb{R}^2$, define the conditional squared risk
\[
\mathcal{R}_t(a) := \mathbb{E}\!\left[\|\theta_L-a\|_2^2 \mid D_t\right].
\]
Then
\[
\mathcal{R}_t(a) = \mathrm{tr}(\Sigma_t) + \|\mu_t-a\|_2^2,
\]
so $\mathcal{R}_t(a)$ is minimized at $a=\mu_t$, and the minimum value is
\[
\min_{a\in\mathbb{R}^2}\mathcal{R}_t(a)=\mathcal{R}_t(\mu_t)=\mathrm{tr}(\Sigma_t)=\mathrm{Spread}(b_t)^2.
\]
\end{proposition}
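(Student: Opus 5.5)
The plan is a bias--variance decomposition around the posterior mean, reusing Proposition~\ref{prop:trace_mse} for the variance term. Throughout we assume, as there, that $b_t^L$ has finite second moment. This makes $\mu_t$, $\Sigma_t$ and $\mathcal{R}_t(a)$ finite for every $a\in\mathbb{R}^2$.

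Fix an arbitrary $a\in\mathbb{R}^2$ and write
\[
\theta_L-a=(\theta_L-\mu_t)+(\mu_t-a).
\]
Expanding the squared norm gives
\[
\|\theta_L-a\|_2^2=\|\theta_L-\mu_t\|_2^2+2(\mu_t-a)^\top(\theta_L-\mu_t)+\|\mu_t-a\|_2^2 .
\]
We then take the conditional expectation given $D_t$ term by term.

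\begin{itemize}
\item \emph{First term.} By Proposition~\ref{prop:trace_mse}, its expectation is exactly $\mathrm{tr}(\Sigma_t)$.
\item \emph{Cross term.} The vector $\mu_t-a$ is $D_t$-measurable, i.e.\ deterministic given $D_t$. It therefore factors out of the conditional expectation, leaving $2(\mu_t-a)^\top\,\mathbb{E}[\theta_L-\mu_t\mid D_t]$. This vanishes because $\mathbb{E}[\theta_L\mid D_t]=\mu_t$ by definition.
\item \emph{Last term.} It is constant given $D_t$, so its expectation is $\|\mu_t-a\|_2^2$.
\end{itemize}

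Adding the three pieces yields $\mathcal{R}_t(a)=\mathrm{tr}(\Sigma_t)+\|\mu_t-a\|_2^2$.

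Minimization then follows directly. The first summand does not depend on $a$, and the second is nonnegative and equals zero exactly when $a=\mu_t$. So $\mu_t$ is the unique minimizer and $\min_a\mathcal{R}_t(a)=\mathrm{tr}(\Sigma_t)$. By the definition $\mathrm{Spread}(b_t)=\sqrt{\mathrm{tr}(\Sigma_t)}$, this minimum equals $\mathrm{Spread}(b_t)^2$.

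There is no real obstacle here. The only step needing care is justifying the cross term: integrability (the finite second moment, via Cauchy--Schwarz) ensures the expectation of each summand exists, so linearity may be applied to the split.
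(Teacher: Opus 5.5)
Your proposal is correct and follows the paper's own proof essentially step for step. You split $\theta_L-a$ around $\mu_t$, kill the cross term using $\mathbb{E}[\theta_L-\mu_t\mid D_t]=0$, and identify the remaining variance term as $\mathrm{tr}(\Sigma_t)$ via Proposition~\ref{prop:trace_mse}; your explicit finite-second-moment assumption and the remark that $\mu_t$ is the unique minimizer are welcome additions that the paper leaves implicit.
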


\textbf{Remark.}
Proposition~\ref{prop:bayes_opt_mean} shows that $\mathrm{Spread}(b_t)^2$ is not an arbitrary statistic:
it is precisely the \emph{Bayes MSE} of the estimator we actually output at termination.

\begin{proof}
Decompose $\theta_L-a=(\theta_L-\mu_t)+(\mu_t-a)$ and expand:
\[
\|\theta_L-a\|_2^2 = \|\theta_L-\mu_t\|_2^2 + \|\mu_t-a\|_2^2
+2(\theta_L-\mu_t)^\top(\mu_t-a).
\]
Take $\mathbb{E}[\cdot\mid D_t]$.
The cross term vanishes because $\mathbb{E}[\theta_L-\mu_t\mid D_t]=0$.
Thus
\[
\mathcal{R}_t(a)=\mathbb{E}[\|\theta_L-\mu_t\|_2^2\mid D_t]+\|\mu_t-a\|_2^2
=\mathrm{tr}(\Sigma_t)+\|\mu_t-a\|_2^2,
\]
where the last equality uses Proposition~\ref{prop:trace_mse}.
\end{proof}

\begin{theorem}[Stopping rule as a Bayes-MSE certificate]
\label{thm:stopping_certificate}
Suppose the reported source location estimate at termination is the posterior mean $\hat\theta_t := \mu_t$.
Then the condition
\[
\mathrm{Spread}(b_t) < \zeta
\]
is equivalent to requiring that the conditional Bayes mean-squared localization error is below $\zeta^2$:
\[
\mathbb{E}\!\left[\|\theta_L-\hat\theta_t\|_2^2\mid D_t\right] < \zeta^2.
\]
\end{theorem}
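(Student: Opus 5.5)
The plan is to reduce the theorem to Proposition~\ref{prop:bayes_opt_mean} (equivalently Proposition~\ref{prop:trace_mse}) followed by one monotonicity step. Throughout, I assume $b_t^L$ has finite second moment, which is the standing hypothesis of Proposition~\ref{prop:trace_mse}, so that $\Sigma_t$ and hence $\mathrm{Spread}(b_t)$ are well defined and finite.

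First, I substitute $\hat\theta_t=\mu_t$ into the conditional squared risk and apply Proposition~\ref{prop:bayes_opt_mean} with $a=\mu_t$. The bias term $\|\mu_t-a\|_2^2$ vanishes, which gives
\[
\mathbb{E}\!\left[\|\theta_L-\hat\theta_t\|_2^2\mid D_t\right]=\mathrm{tr}(\Sigma_t)=\mathrm{Spread}(b_t)^2 .
\]
This could also be read off directly from Proposition~\ref{prop:trace_mse}.

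Second, I convert the strict inequality on $\mathrm{Spread}(b_t)$ into one on its square. By definition $\mathrm{Spread}(b_t)=\sqrt{\mathrm{tr}(\Sigma_t)}\ge 0$, since $\Sigma_t$ is positive semidefinite and so its trace is nonnegative. Also $\zeta>0$ by assumption. On the nonnegative reals, $x\mapsto x^2$ is strictly increasing. Hence $\mathrm{Spread}(b_t)<\zeta$ if and only if $\mathrm{Spread}(b_t)^2<\zeta^2$. Combining this with the identity from the first step gives both directions of the claimed equivalence at once.

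There is no substantive obstacle here; the argument is bookkeeping. The only point that needs care is the sign and monotonicity justification in the second step: strict monotonicity of squaring requires both sides to be nonnegative. I would also note explicitly that the statement holds pointwise for each realization of $D_t$, so no Gaussianity is needed, consistent with the remark after Proposition~\ref{prop:trace_mse}.
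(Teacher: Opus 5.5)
Your proposal is correct and follows the paper's proof: apply Proposition~\ref{prop:bayes_opt_mean} with $a=\mu_t$ to get $\mathbb{E}[\|\theta_L-\hat\theta_t\|_2^2\mid D_t]=\mathrm{tr}(\Sigma_t)=\mathrm{Spread}(b_t)^2$, then square the inequality $\mathrm{Spread}(b_t)<\zeta$. The one addition in yours is the explicit justification of the squaring step ($\mathrm{Spread}(b_t)\ge 0$ and $\zeta>0$), which the paper leaves implicit.
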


\textcolor{cyan}{\textbf{Remark.}}
Theorem~\ref{thm:stopping_certificate} provides the key semantics used in the main text:
$\zeta$ is an \emph{accuracy tolerance} in RMS sense. This is exactly the kind of interpretable
certificate needed to realize an explicit accuracy--budget trade-off.

\begin{proof}
By Proposition~\ref{prop:bayes_opt_mean}, with $\hat\theta_t=\mu_t$,
\[
\mathbb{E}[\|\theta_L-\hat\theta_t\|_2^2\mid D_t]
=\mathrm{tr}(\Sigma_t)=\mathrm{Spread}(b_t)^2.
\]
Thus $\mathrm{Spread}(b_t)<\zeta \iff \mathrm{Spread}(b_t)^2<\zeta^2$.
\end{proof}

\subsection{From squared error to Euclidean error and high-probability bounds}

In experiments, localization quality is typically reported in Euclidean distance (e.g., SLE).
The certificate above controls a squared-error quantity. This subsection translates the squared-error
control into (i) an expected Euclidean error bound and (ii) a simple tail bound on large Euclidean errors.
These results justify using $\mathrm{Spread}(b_t)$ as a stopping trigger even when the evaluation metric is Euclidean.

\begin{corollary}[Expected Euclidean error bound]
\label{cor:expected_L2}
Let $\hat\theta_t=\mu_t$. Then
\[
\mathbb{E}\!\left[\|\theta_L-\hat\theta_t\|_2 \mid D_t\right] 
\le \mathrm{Spread}(b_t).
\]
Consequently, $\mathrm{Spread}(b_t)<\zeta$ implies
$\mathbb{E}[\|\theta_L-\hat\theta_t\|_2 \mid D_t] < \zeta$.
\end{corollary}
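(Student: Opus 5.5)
The plan is to combine Jensen's inequality, or equivalently Cauchy--Schwarz, with the trace--MSE identity already established. Let $E := \|\theta_L-\hat\theta_t\|_2$ with $\hat\theta_t=\mu_t$. This is a nonnegative random variable under the posterior $b_t^L$, and by Proposition~\ref{prop:trace_mse} it has finite conditional second moment:
\[
\mathbb{E}[E^2\mid D_t]=\mathrm{tr}(\Sigma_t)=\mathrm{Spread}(b_t)^2 .
\]

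The key step is to bound the first moment by the square root of the second. Since $x\mapsto x^2$ is convex, Jensen gives $(\mathbb{E}[E\mid D_t])^2 \le \mathbb{E}[E^2\mid D_t]$. Alternatively, Cauchy--Schwarz applied to $E\cdot 1$ under the conditional law gives the same inequality. Taking nonnegative square roots, which preserves order, yields
\[
\mathbb{E}[E\mid D_t]\le\sqrt{\mathrm{tr}(\Sigma_t)}=\mathrm{Spread}(b_t).
\]
The same conclusion can be read off Theorem~\ref{thm:stopping_certificate}, which identifies $\mathrm{Spread}(b_t)^2$ as the conditional Bayes MSE of $\hat\theta_t$.

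The consequence follows by chaining inequalities. If $\mathrm{Spread}(b_t)<\zeta$, then $\mathbb{E}[E\mid D_t]\le \mathrm{Spread}(b_t)<\zeta$, and the strict inequality is inherited from the second link.

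There is essentially no obstacle. The only points to check are that the conditional expectation of $E$ is well defined and finite, which follows from the finite second moment assumed in Proposition~\ref{prop:trace_mse}, and that Jensen is applied to the conditional (posterior) distribution rather than to the marginal one. Both are routine. I would also note that equality holds only when $E$ is a.s. constant under the posterior, so the bound is typically slack, which is consistent with its use as a conservative certificate.
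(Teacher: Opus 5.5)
Your proof is correct and matches the paper's. The paper applies Jensen to the concave map $\sqrt{\cdot}$ with $X=\|\theta_L-\mu_t\|_2^2$ and then uses the trace--MSE identity; your convex-square (or Cauchy--Schwarz) form is the same inequality, and the strict implication for $\mathrm{Spread}(b_t)<\zeta$ follows by the same chaining.
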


\textcolor{cyan}{\textbf{Takeaway.}}
Corollary~\ref{cor:expected_L2} explains why $\mathrm{Spread}(b_t)$ is often read as an “RMS distance scale”:
it upper-bounds the posterior-expected Euclidean localization error.

\begin{proof}
Apply Jensen to the concave function $\sqrt{\cdot}$ with $X=\|\theta_L-\mu_t\|_2^2$:
\[
\mathbb{E}[\|\theta_L-\mu_t\|_2\mid D_t]
=\mathbb{E}[\sqrt{X}\mid D_t]
\le \sqrt{\mathbb{E}[X\mid D_t]}
=\sqrt{\mathrm{tr}(\Sigma_t)}.
\]
\end{proof}

\begin{proposition}[Distribution-free tail bound (Markov)]
\label{prop:markov_tail}
For any $\varepsilon>0$,
\[
\mathbb{P}\!\left(\|\theta_L-\mu_t\|_2 \ge \varepsilon \mid D_t\right)
\le \frac{\mathrm{Spread}(b_t)^2}{\varepsilon^2}.
\]
\end{proposition}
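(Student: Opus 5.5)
The plan is to apply Markov's inequality to the nonnegative random variable $X := \|\theta_L-\mu_t\|_2^2$ under the posterior $b_t^L$. The mean of $X$ is already given by Proposition~\ref{prop:trace_mse}, so only two observations are needed. The first is an event identity. The second is the standard Markov bound for a nonnegative variable with finite mean.

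\emph{Event identity.} Since $\varepsilon>0$ and $u\mapsto u^2$ is strictly increasing on $[0,\infty)$, we have $\{\|\theta_L-\mu_t\|_2 \ge \varepsilon\} = \{X \ge \varepsilon^2\}$. The two conditional probabilities given $D_t$ therefore coincide.

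\emph{Markov step.} For $X\ge 0$ with finite conditional mean, use the pointwise inequality $\varepsilon^2\,\mathbf{1}\{X\ge\varepsilon^2\} \le X$. Taking $\mathbb{E}[\cdot\mid D_t]$ of both sides gives
\[
\mathbb{P}\!\left(X\ge \varepsilon^2 \mid D_t\right) \le \frac{\mathbb{E}[X\mid D_t]}{\varepsilon^2}.
\]
Proposition~\ref{prop:trace_mse} gives $\mathbb{E}[X\mid D_t]=\mathrm{tr}(\Sigma_t)=\mathrm{Spread}(b_t)^2$. Substituting this completes the proof.

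The argument has no genuine obstacle. The one point requiring care is the standing assumption that $b_t^L$ has a finite second moment, which Proposition~\ref{prop:trace_mse} also uses. Without it, $\Sigma_t$ (and hence $\mathrm{Spread}(b_t)$) is undefined, so it is stated explicitly rather than silently assumed.

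It is also worth noting that the bound is informative only when $\varepsilon > \mathrm{Spread}(b_t)$; otherwise the right-hand side exceeds $1$. Under the stopping rule $\mathrm{Spread}(b_t)<\zeta$, this yields $\mathbb{P}(\|\theta_L-\mu_t\|_2\ge\varepsilon\mid D_t) < \zeta^2/\varepsilon^2$, which is the certificate quoted in Sec.~\ref{sec:policy}.
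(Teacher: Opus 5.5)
Your proof is correct and is the same argument the paper gives: it rewrites the event as $\{\|\theta_L-\mu_t\|_2^2\ge\varepsilon^2\}$, applies Markov's inequality, and identifies $\mathbb{E}[\|\theta_L-\mu_t\|_2^2\mid D_t]$ with $\mathrm{tr}(\Sigma_t)=\mathrm{Spread}(b_t)^2$ via the trace--MSE identity. You also derive Markov from the pointwise inequality and state the finite-second-moment assumption explicitly, which are small clarifications that the paper leaves implicit.
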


\textcolor{cyan}{\textbf{Remark.}}
Proposition~\ref{prop:markov_tail} is conservative but assumption-light:
it gives a distribution-free bound on the probability of a large localization error event.
In particular, if $\mathrm{Spread}(b_t)\ll \varepsilon$, then the posterior mass outside radius $\varepsilon$
around the mean must be small. When additional distributional assumptions (e.g., near-Gaussian posteriors)
are appropriate, stronger credible-radius statements are available.

\begin{proof}
Let $Z=\|\theta_L-\mu_t\|_2^2\ge 0$. Then
$\mathbb{P}(\|\theta_L-\mu_t\|_2\ge \varepsilon\mid D_t)=\mathbb{P}(Z\ge \varepsilon^2\mid D_t)$.
By Markov,
\[
\mathbb{P}(Z\ge \varepsilon^2\mid D_t)\le \frac{\mathbb{E}[Z\mid D_t]}{\varepsilon^2}
= \frac{\mathrm{tr}(\Sigma_t)}{\varepsilon^2}
= \frac{\mathrm{Spread}(b_t)^2}{\varepsilon^2}.
\]
\end{proof}

\subsection{Why additional measurements help}

A sequential information-gathering policy is expected to reduce uncertainty as new data arrive.
However, due to noise and partial observability, a \emph{single realized} measurement can occasionally increase posterior variance.
This subsection clarifies the correct notion of monotonicity: while variance need not decrease pointwise for every measurement outcome,
it \emph{does} decrease \emph{in expectation} over the next observation. This supports the use of $\mathrm{Spread}^2$ as a progress measure
for termination and explains why early stopping will typically be reached under informative sensing.

\begin{proposition}[Law of total variance (conditioning reduces variance in expectation)]
\label{prop:total_variance}
Let $D_t = (D_{t-1}, \text{new data at }t)$.
Then
\[
\mathrm{Cov}(\theta_L\mid D_{t-1})
=
\mathbb{E}\!\left[\mathrm{Cov}(\theta_L\mid D_t)\mid D_{t-1}\right]
+
\mathrm{Cov}\!\left(\mathbb{E}[\theta_L\mid D_t]\mid D_{t-1}\right).
\]
The second term is positive semidefinite, hence
\[
\mathbb{E}\!\left[\mathrm{Cov}(\theta_L\mid D_t)\mid D_{t-1}\right]
\preceq \mathrm{Cov}(\theta_L\mid D_{t-1}).
\]
Taking trace yields
\[
\mathbb{E}\!\left[\mathrm{Spread}(b_t)^2\mid D_{t-1}\right]\le \mathrm{Spread}(b_{t-1})^2.
\]
\end{proposition}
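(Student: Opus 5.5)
The plan is to prove the conditional law of total covariance directly by expanding around the coarser-conditioning mean, and then derive the two consequences: the Loewner-order inequality and the trace inequality. Throughout, assume $\theta_L$ has finite second moment under $p(\cdot\mid D_{t-1})$. This makes all covariances below finite, and $\mathbb{E}[\cdot\mid D_t]$ is a well-defined $L^2$ projection. Write $m_{t-1}:=\mathbb{E}[\theta_L\mid D_{t-1}]$ and $\mu_t:=\mathbb{E}[\theta_L\mid D_t]$. The key structural fact is the tower property: since $D_{t-1}$ is a function of $D_t$ (the $\sigma$-algebras are nested), $\mathbb{E}[\mu_t\mid D_{t-1}]=m_{t-1}$.

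First, decompose $\theta_L-m_{t-1}=(\theta_L-\mu_t)+(\mu_t-m_{t-1})$ and form the outer product. This gives
\[
(\theta_L-m_{t-1})(\theta_L-m_{t-1})^\top=(\theta_L-\mu_t)(\theta_L-\mu_t)^\top+(\mu_t-m_{t-1})(\mu_t-m_{t-1})^\top+C+C^\top,
\]
where $C=(\theta_L-\mu_t)(\mu_t-m_{t-1})^\top$. Take $\mathbb{E}[\cdot\mid D_{t-1}]$ by first conditioning on $D_t$ and then applying the tower property.

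\begin{itemize}
\item The first term becomes $\mathbb{E}[\mathrm{Cov}(\theta_L\mid D_t)\mid D_{t-1}]$.
\item The second term is exactly $\mathrm{Cov}(\mu_t\mid D_{t-1})$, because $\mu_t$ has conditional mean $m_{t-1}$.
\item The cross term vanishes. The factor $\mu_t-m_{t-1}$ is $D_t$-measurable, so it can be pulled out of $\mathbb{E}[\cdot\mid D_t]$, leaving $\mathbb{E}[\theta_L-\mu_t\mid D_t]=0$. This mirrors the argument in the proof of Proposition~\ref{prop:bayes_opt_mean}.
\end{itemize}

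Together these give the displayed identity.

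For positive semidefiniteness, take any $v\in\mathbb{R}^2$. Then $v^\top\mathrm{Cov}(\mu_t\mid D_{t-1})v=\mathbb{E}[(v^\top(\mu_t-m_{t-1}))^2\mid D_{t-1}]\ge 0$. Rearranging the identity gives $\mathbb{E}[\mathrm{Cov}(\theta_L\mid D_t)\mid D_{t-1}]\preceq\mathrm{Cov}(\theta_L\mid D_{t-1})$.

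For the trace statement, two facts are needed. The trace is linear, so it commutes with conditional expectation. The trace is also monotone in the Loewner order, since $A\preceq B$ implies $\mathrm{tr}(B-A)\ge 0$ because $\mathrm{tr}(M)=\sum_i e_i^\top M e_i$. By Proposition~\ref{prop:trace_mse}, $\mathrm{tr}(\Sigma_t)=\mathrm{Spread}(b_t)^2$, which yields $\mathbb{E}[\mathrm{Spread}(b_t)^2\mid D_{t-1}]\le\mathrm{Spread}(b_{t-1})^2$.

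The main obstacle is the cross-term step. It needs the conditioning to be genuinely nested, so that $D_{t-1}\subseteq D_t$ as information, and it needs the expectation over the new data to be taken under the Bayes predictive distribution $p(\cdot\mid D_{t-1})$. Only then are the tower property and pull-out valid. I would state explicitly that the result concerns the exact Bayes posterior, not the PF or student approximations; for those the inequality holds only approximately. The agent's chosen pose $\boldsymbol{p}_t$ is itself $D_{t-1}$-measurable (up to policy randomization), so it can be absorbed into the conditioning, and nestedness holds.
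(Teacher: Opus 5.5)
Your proposal is correct and follows the paper's route. The paper (in its 3D restatement; the 2D version has no separate proof) simply invokes the law of total covariance, notes that $\mathrm{Cov}(\mathbb{E}[\theta_L\mid D_t]\mid D_{t-1})$ is PSD, and takes the trace, whereas you also derive the identity itself via the decomposition around $\mathbb{E}[\theta_L\mid D_{t-1}]$, with the tower property killing the cross term. One harmless detail: $\mathrm{tr}(\Sigma_t)=\mathrm{Spread}(b_t)^2$ holds by the definition of Spread, so you do not need to cite Proposition~\ref{prop:trace_mse} for it.
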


\textcolor{cyan}{\textbf{Remark.}}
In expectation over the next measurement, posterior uncertainty contracts.
Equivalently, $\mathrm{Spread}^2$ forms a supermartingale-like progress statistic for sequential data acquisition.
This justifies using $\mathrm{Spread}$ as a termination signal: while it may fluctuate locally, it is biased toward decreasing
as informative measurements accumulate.

\subsection{Gaussian interpretation as a credible-radius bound}

The preceding results are distribution-free and thus conservative.
If the posterior is approximately Gaussian (e.g., after sufficient data or under a Gaussian-belief approximation),
one can map $\mathrm{Spread}(b_t)$ to an explicit $(1-\delta)$ credible radius. This can be useful when users want to pick $\zeta$
to match a target confidence level.

\begin{proposition}[2D Gaussian credible radius]
\label{prop:gaussian_credible_full}
Assume $\theta_L \mid D_t \sim \mathcal{N}(\mu_t,\Sigma_t)$ in $\mathbb{R}^2$ and
$\Sigma_t \succ 0$ (symmetric positive definite).
Let $\delta\in(0,1)$ and define $c_\delta := -2\ln\delta$.
Then
\[
\mathbb{P}\!\left(\|\theta_L-\mu_t\|_2 \le \sqrt{c_\delta}\,\sqrt{\mathrm{tr}(\Sigma_t)} \ \middle|\ D_t\right)\ge 1-\delta.
\]
\end{proposition}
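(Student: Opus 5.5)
The plan is to whiten the Gaussian posterior, reduce the Euclidean error event to a statement about a standard 2D normal vector, and then use the exact tail of the $\chi^2_2$ distribution. This tail is what produces the constant $c_\delta=-2\ln\delta$.

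\textbf{Step 1 (whitening).} Since $\Sigma_t\succ 0$, write its spectral decomposition $\Sigma_t = U\Lambda U^\top$ with $\Lambda=\mathrm{diag}(\lambda_1,\lambda_2)$, $\lambda_i>0$, and let $\Sigma_t^{1/2}=U\Lambda^{1/2}U^\top$. Conditional on $D_t$ we can represent $\theta_L-\mu_t = \Sigma_t^{1/2} Z$ with $Z\sim\mathcal{N}(0,I_2)$. Then
\[
\|\theta_L-\mu_t\|_2^2 = Z^\top \Sigma_t Z \le \lambda_{\max}(\Sigma_t)\,\|Z\|_2^2 \le (\lambda_1+\lambda_2)\,\|Z\|_2^2 = \mathrm{tr}(\Sigma_t)\,\|Z\|_2^2 .
\]
The first inequality is the Rayleigh-quotient bound. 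The second uses $\lambda_{\min}\ge 0$, and this is exactly where the trace (rather than $\lambda_{\max}$) enters, making the statement slightly conservative.

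\textbf{Step 2 (event inclusion).} By Step 1, the event $\{\|Z\|_2^2\le c_\delta\}$ implies $\{\|\theta_L-\mu_t\|_2^2\le c_\delta\,\mathrm{tr}(\Sigma_t)\}$. Taking square roots, it implies $\{\|\theta_L-\mu_t\|_2\le \sqrt{c_\delta}\sqrt{\mathrm{tr}(\Sigma_t)}\}$. Hence the target probability is at least $\mathbb{P}(\|Z\|_2^2\le c_\delta)$.

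\textbf{Step 3 ($\chi^2_2$ tail).} The quantity $\|Z\|_2^2=Z_1^2+Z_2^2$ is $\chi^2_2$, i.e.\ exponential with mean $2$. This can be verified directly by passing to polar coordinates: $\mathbb{P}(\|Z\|_2^2>r^2)=\int_r^\infty s e^{-s^2/2}\,ds = e^{-r^2/2}$. With $r^2=c_\delta=-2\ln\delta$ this gives $\mathbb{P}(\|Z\|_2^2>c_\delta)=\delta$, so $\mathbb{P}(\|Z\|_2^2\le c_\delta)=1-\delta$. Combining with Step 2 yields the claim.

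I do not expect a genuine obstacle. The only points needing care are the justification $\lambda_{\max}\le\mathrm{tr}(\Sigma_t)$, which requires positive semidefiniteness, and the exact closed-form $\chi^2_2$ tail, which is special to dimension two. The polar-coordinate computation settles the latter cleanly, and it is why $c_\delta$ takes the simple logarithmic form.
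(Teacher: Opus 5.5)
Your proposal is correct and follows the paper's proof essentially step for step: whitening to $Z\sim\mathcal{N}(0,I_2)$, the Rayleigh-quotient bound $Z^\top\Sigma_t Z\le\lambda_{\max}(\Sigma_t)\|Z\|_2^2$, the relaxation $\lambda_{\max}(\Sigma_t)\le\mathrm{tr}(\Sigma_t)$, the event inclusion, and the exact $\chi^2_2$ tail $e^{-c_\delta/2}=\delta$. The only difference is cosmetic: you derive the $\chi^2_2$ tail from the Rayleigh radial density in polar coordinates, whereas the paper quotes the closed-form CDF $1-e^{-x/2}$ directly.
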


\begin{proof}
\textbf{Step 1: Whiten the Gaussian and obtain a $\chi^2$ variable.}
Since $\Sigma_t \succ 0$, it admits a symmetric square root $\Sigma_t^{1/2}$ and inverse square root
$\Sigma_t^{-1/2}$ satisfying $\Sigma_t^{1/2}\Sigma_t^{1/2}=\Sigma_t$ and
$\Sigma_t^{-1/2}\Sigma_t^{-1/2}=\Sigma_t^{-1}$.
Define the whitened variable
\[
z := \Sigma_t^{-1/2}(\theta_L-\mu_t).
\]
Because $\theta_L \mid D_t$ is Gaussian and $z$ is an affine transform of $\theta_L$,
$z \mid D_t$ is also Gaussian. Its conditional mean and covariance are
\[
\mathbb{E}[z\mid D_t]
= \Sigma_t^{-1/2}\big(\mathbb{E}[\theta_L\mid D_t]-\mu_t\big)=0,
\]
and
\[
\mathrm{Cov}(z\mid D_t)
= \Sigma_t^{-1/2}\,\mathrm{Cov}(\theta_L\mid D_t)\,\Sigma_t^{-1/2}
= \Sigma_t^{-1/2}\Sigma_t\Sigma_t^{-1/2}
= I_2.
\]
Hence
\[
z \mid D_t \sim \mathcal{N}(0,I_2).
\]
In particular, the coordinates $(z_1,z_2)$ are i.i.d.\ $\mathcal{N}(0,1)$, and therefore
\[
\|z\|_2^2 = z_1^2+z_2^2 \sim \chi^2_2.
\]
Moreover,
\[
\|z\|_2^2
= z^\top z
= (\theta_L-\mu_t)^\top \Sigma_t^{-1/2}\Sigma_t^{-1/2}(\theta_L-\mu_t)
= (\theta_L-\mu_t)^\top \Sigma_t^{-1}(\theta_L-\mu_t).
\]
\textbf{Step 2: Choose $c_\delta$ so that $\mathbb{P}(\chi^2_2 \le c_\delta)=1-\delta$.}
For a $\chi^2_2$ random variable, the CDF has the closed form
\[
\mathbb{P}(\chi^2_2 \le x) = 1-e^{-x/2}, \qquad x\ge 0.
\]
Setting $x=c_\delta$ and choosing $c_\delta=-2\ln\delta$ gives
\[
\mathbb{P}(\chi^2_2 \le c_\delta) = 1-e^{-c_\delta/2}
= 1-e^{\ln\delta}
= 1-\delta.
\]
Thus,
\[
\mathbb{P}\!\left(\|z\|_2^2 \le c_\delta \ \middle|\ D_t\right)=1-\delta.
\]
\textbf{Step 3: Convert the $\chi^2$ ellipsoid to a Euclidean ball using $\lambda_{\max}(\Sigma_t)$.}
We relate $\|\theta_L-\mu_t\|_2$ to $\|z\|_2$ by writing $\theta_L-\mu_t=\Sigma_t^{1/2}z$:
\[
\|\theta_L-\mu_t\|_2^2
= \|\Sigma_t^{1/2}z\|_2^2
= z^\top (\Sigma_t^{1/2})^\top \Sigma_t^{1/2} z
= z^\top \Sigma_t z,
\]
where we used symmetry of $\Sigma_t^{1/2}$.
Since $\Sigma_t$ is symmetric PSD, for any vector $u$ we have the Rayleigh-quotient bound
\[
u^\top \Sigma_t u \le \lambda_{\max}(\Sigma_t)\,u^\top u
= \lambda_{\max}(\Sigma_t)\,\|u\|_2^2.
\]
Applying this with $u=z$ yields
\[
\|\theta_L-\mu_t\|_2^2
= z^\top \Sigma_t z
\le \lambda_{\max}(\Sigma_t)\,\|z\|_2^2.
\]
Therefore, on the event $\{\|z\|_2^2 \le c_\delta\}$ we obtain
\[
\|\theta_L-\mu_t\|_2^2 \le \lambda_{\max}(\Sigma_t)\,c_\delta
\quad\Longrightarrow\quad
\|\theta_L-\mu_t\|_2 \le \sqrt{\lambda_{\max}(\Sigma_t)}\,\sqrt{c_\delta}.
\]
\textbf{Step 4: Upper bound $\lambda_{\max}(\Sigma_t)$ by $\mathrm{tr}(\Sigma_t)$ to match the Spread certificate.}
Let the eigenvalues of $\Sigma_t$ be $\lambda_1,\lambda_2\ge 0$. Then
\[
\lambda_{\max}(\Sigma_t)=\max\{\lambda_1,\lambda_2\}
\le \lambda_1+\lambda_2
= \mathrm{tr}(\Sigma_t).
\]
Hence,
\[
\sqrt{\lambda_{\max}(\Sigma_t)} \le \sqrt{\mathrm{tr}(\Sigma_t)}.
\]
Combining with Step 3, on $\{\|z\|_2^2 \le c_\delta\}$ we have
\[
\|\theta_L-\mu_t\|_2 \le \sqrt{c_\delta}\,\sqrt{\mathrm{tr}(\Sigma_t)}.
\]
\textbf{Step 5: Take probabilities.}
Because the event $\{\|z\|_2^2 \le c_\delta\}$ implies the event
$\{\|\theta_L-\mu_t\|_2 \le \sqrt{c_\delta}\sqrt{\mathrm{tr}(\Sigma_t)}\}$, we get
\[
\mathbb{P}\!\left(\|\theta_L-\mu_t\|_2 \le \sqrt{c_\delta}\,\sqrt{\mathrm{tr}(\Sigma_t)} \ \middle|\ D_t\right)
\ge
\mathbb{P}\!\left(\|z\|_2^2 \le c_\delta \ \middle|\ D_t\right)
= 1-\delta,
\]
where the last equality is Step 2. This proves the claim.
\end{proof}

\textcolor{cyan}{\textbf{Remark.}}
This bound says that, under Gaussianity, $\mathrm{Spread}(b_t)$ controls a high-probability Euclidean error radius up to a factor
$\sqrt{c_\delta}$. The use of $\lambda_{\max}(\Sigma_t)\le \mathrm{tr}(\Sigma_t)$ makes the Euclidean-ball statement conservative
(it upper-bounds the covariance ellipsoid by a ball).

\subsection{Practical computation under PF and diagonal-Gaussian student}

The stopping rule must be computable both (i) from the PF teacher during training and (ii) from the distilled student at deployment.
This subsection provides the explicit estimators used in both cases and makes the complexity claim transparent:
PF computation is $\mathcal{O}(N)$ in the number of particles, whereas the student computation is constant-time.

\textbf{Weighted particles (PF teacher).}
Given location particles $\theta_L^{(i)}$ with normalized weights $w^{(i)}$:
\[
\hat\mu_t = \sum_{i=1}^N w^{(i)}\theta_L^{(i)},\qquad
\hat\Sigma_t = \sum_{i=1}^N w^{(i)}(\theta_L^{(i)}-\hat\mu_t)(\theta_L^{(i)}-\hat\mu_t)^\top,
\]
and
\[
\mathrm{Spread}(b_t)^2=\mathrm{tr}(\hat\Sigma_t)=\sum_{i=1}^N w^{(i)}\|\theta_L^{(i)}-\hat\mu_t\|_2^2.
\]

\textcolor{cyan}{\textbf{Diagonal-Gaussian student.}}
If $\Sigma_t=\mathrm{diag}(\sigma_{x,t}^2,\sigma_{y,t}^2)$, then
\[
\mathrm{Spread}(b_t) = \sqrt{\sigma_{x,t}^2+\sigma_{y,t}^2}.
\]

\textcolor{cyan}{\textbf{Note on calibration.}}
All guarantees above are \emph{posterior-conditional}: they certify error under the inferred belief.
In practice, this is meaningful when the belief is reasonably calibrated; in the main paper we therefore also report uncertainty-quality
metrics (e.g., NLL) to empirically validate calibration.

\section{Extension to 3D Space}
\label{app:3d}

\subsection{Motivation and overview}
The main paper instantiates closed-loop inverse source localization and characterization (ISLC) in a 2D workspace, where the agent pose is
$\mathbf{p}_t=(x_t,y_t)\in\mathbb{R}^2$ and the (single) source location is $(x_s,y_s)$.
Many practical deployments, however, are inherently three-dimensional:
e.g., gas leakage localization in buildings, underwater plume tracing,
and aerial sensing in atmospheric boundary layers.
This appendix extends the entire Distill-Belief pipeline---field model,
Bayes-consistent PF teacher, student posterior distillation, KL-based intrinsic reward,
and the spread-based stopping certificate---to \textcolor{cyan}{\textbf{3D physical space}}.
Importantly, the teacher--student belief-optimization design is dimension-agnostic:
only the forward model and the location-belief geometry change.

\subsection{3D POMDP and parameterization}
\textbf{\textcolor{cyan}{3D workspace.}}
Let the agent pose be $\mathbf{p}_t=(x_t,y_t,z_t)\in\mathbb{R}^3$.
At each step the agent chooses an action $\mathbf{a}_t\in\mathcal{A}$
(e.g., a velocity command or displacement in $\mathbb{R}^3$),
and receives a scalar observation $o_t\in\Omega$ (e.g., concentration, temperature).

\textbf{unknown parameters value.}
We extend the physical parameter vector from 2D to 3D by including
the source altitude and vertical transport:
\begin{equation}
\boldsymbol{\Theta}^{3D}
=
\big[
x_s, y_s, z_s,\;
q_s,\;
\mathbf{v},\;
\alpha,\;
\lambda
\big]^\top
\in\mathbb{R}^d,
\end{equation}
where $\mathbf{p}_s=(x_s,y_s,z_s)$ is the source location,
$q_s>0$ is source strength, $\mathbf{v}\in\mathbb{R}^3$ is the (constant) convection velocity,
$\alpha>0$ is the (isotropic) diffusivity, and $\lambda>0$ is an effective decay length-scale.
If one prefers a directional parameterization, we can write
$\mathbf{v}=u[\cos\phi\cos\vartheta,\;\sin\phi\cos\vartheta,\;\sin\vartheta]^\top$
with speed $u\ge 0$, azimuth $\phi\in[0,2\pi)$, and elevation $\vartheta\in[-\pi/2,\pi/2]$.
All results below hold for either representation.

\textbf{3D belief.}
Let $D_t$ denote the information up to time $t$ (e.g., $(o_{1:t},\mathbf{p}_{1:t})$).
The Bayesian belief is
\begin{equation}
b_t(\boldsymbol{\Theta}) := p(\boldsymbol{\Theta}\mid D_t).
\end{equation}
The control objective remains belief-space contraction via information gain, and the policy
conditions on observations, pose, and distilled belief features.

\subsection{3D field: convection-diffusion-reaction}
\subsubsection{3D steady-state PDE}
We adopt the same diffusion--transport--source abstraction as the main text, now in 3D.
A common steady-state advection--diffusion--reaction model in $\mathbb{R}^3$ is
\begin{equation}
-\alpha \nabla^2 \phi(\mathbf{p})
\;+\;
\mathbf{v}\cdot\nabla \phi(\mathbf{p})
\;+\;
\kappa\,\phi(\mathbf{p})
\;=\;
q_s\,\delta(\mathbf{p}-\mathbf{p}_s),
\label{eq:3d_cdr}
\end{equation}
where $\phi(\mathbf{p})$ is the scalar field,
$\kappa\ge 0$ is an effective linear ``reaction/decay'' coefficient,
and $\delta(\cdot)$ is the Dirac delta. This PDE is translation-invariant under constant coefficients,
so it is natural to derive the Green's function in coordinates
$\mathbf{r}=\mathbf{p}-\mathbf{p}_s$ and then shift back.

\textbf{Connection to the main paper's $\lambda$.}
In the main text, $\lambda$ is used as an exponential decay parameter.
In 3D, it is convenient to define the effective decay rate
\begin{equation}
m := \sqrt{\frac{\kappa}{\alpha} + \frac{\|\mathbf{v}\|_2^2}{4\alpha^2}} \quad (>0),
\label{eq:m_def}
\end{equation}
and then set
\begin{equation}
\lambda := \frac{1}{m}.
\label{eq:lambda_def}
\end{equation}
Equivalently, $\kappa=\alpha/\lambda^2 - \|\mathbf{v}\|_2^2/(4\alpha)$.
This mapping ensures that the closed-form solution can be written with a clean
$\exp(-\|\mathbf{r}\|/\lambda)$ factor, matching the ``decay length'' interpretation.

\subsubsection{Derivation of the 3D Green's function (detailed)}
For clarity, set $\mathbf{p}_s=\mathbf{0}$ (source at the origin) and solve
\begin{equation}
-\alpha \nabla^2 \phi(\mathbf{p})
+\mathbf{v}\cdot\nabla \phi(\mathbf{p})
+\kappa\,\phi(\mathbf{p})
=
q_s\,\delta(\mathbf{p}).
\label{eq:3d_origin}
\end{equation}

\textbf{Step 1: remove the first-order (advection) term.}
Let $\boldsymbol{\beta}:=\mathbf{v}/(2\alpha)$ and define the exponential transform
\begin{equation}
\phi(\mathbf{p}) = \exp(\boldsymbol{\beta}^\top \mathbf{p})\,u(\mathbf{p}).
\label{eq:exp_transform}
\end{equation}
Compute the gradient and Laplacian:
\begin{align}
\nabla \phi
&=
\exp(\boldsymbol{\beta}^\top \mathbf{p})\big(\nabla u + \boldsymbol{\beta} u\big),\\
\nabla^2 \phi
&=
\exp(\boldsymbol{\beta}^\top \mathbf{p})
\big(\nabla^2 u + 2\boldsymbol{\beta}^\top \nabla u + \|\boldsymbol{\beta}\|_2^2 u\big).
\end{align}
Substitute into~\eqref{eq:3d_origin}:
\begin{align}
&-\alpha \exp(\boldsymbol{\beta}^\top \mathbf{p})
\big(\nabla^2 u + 2\boldsymbol{\beta}^\top \nabla u + \|\boldsymbol{\beta}\|_2^2 u\big)
+\mathbf{v}^\top \exp(\boldsymbol{\beta}^\top \mathbf{p})
\big(\nabla u + \boldsymbol{\beta}u\big)
\\&+\kappa \exp(\boldsymbol{\beta}^\top \mathbf{p})u \nonumber=
q_s\,\delta(\mathbf{p}).
\end{align}
Factor out $\exp(\boldsymbol{\beta}^\top \mathbf{p})$ and use $\mathbf{v}=2\alpha\boldsymbol{\beta}$:
\begin{align}
\exp(\boldsymbol{\beta}^\top \mathbf{p})
\Big(\text{term}_1 - \text{term}_2\Big)
&=q_s\,\delta(\mathbf{p}) \nonumber\\
\exp(\boldsymbol{\beta}^\top \mathbf{p})
\Big(-\alpha \nabla^2 u +(\kappa + \alpha \|\boldsymbol{\beta}\|_2^2)u \Big)
&= q_s\,\delta(\mathbf{p}).
\label{eq:u_pde_pre}
\end{align}
where $\text{term}_1 = 2\alpha \boldsymbol{\beta}^\top \nabla u
+2\alpha \|\boldsymbol{\beta}\|_2^2 u
+\kappa u$, $\text{term}_2 = -\alpha \nabla^2 u -2\alpha \boldsymbol{\beta}^\top \nabla u
-\alpha \|\boldsymbol{\beta}\|_2^2 u $
Because $\delta(\mathbf{p})$ concentrates at $\mathbf{p}=0$ and $\exp(\boldsymbol{\beta}^\top\mathbf{0})=1$,
we obtain the transformed PDE:
\begin{equation}
-\alpha \nabla^2 u(\mathbf{p})
+
(\kappa + \alpha \|\boldsymbol{\beta}\|_2^2)\,u(\mathbf{p})
=
q_s\,\delta(\mathbf{p}).
\label{eq:modified_helmholtz}
\end{equation}
Define
\begin{equation}
m^2 := \frac{\kappa}{\alpha} + \|\boldsymbol{\beta}\|_2^2
=
\frac{\kappa}{\alpha} + \frac{\|\mathbf{v}\|_2^2}{4\alpha^2}.
\end{equation}
Then \eqref{eq:modified_helmholtz} becomes the 3D modified Helmholtz equation
\begin{equation}
-\nabla^2 u(\mathbf{p}) + m^2 u(\mathbf{p}) = \frac{q_s}{\alpha}\,\delta(\mathbf{p}).
\label{eq:helmholtz_std}
\end{equation}

\textbf{Step 2: solve the modified Helmholtz equation in 3D.}
The fundamental solution of \eqref{eq:helmholtz_std} in $\mathbb{R}^3$ is well-known
and can be verified by standard distributional calculus:
\begin{equation}
u(\mathbf{p})
=
\frac{q_s}{4\pi \alpha}\,\frac{\exp(-m\|\mathbf{p}\|_2)}{\|\mathbf{p}\|_2}.
\label{eq:u_solution}
\end{equation}
One can directly check that for $\mathbf{p}\neq 0$ it satisfies the homogeneous PDE
$-\nabla^2 u + m^2 u = 0$, and that the singularity at the origin produces the correct delta mass.

\textcolor{cyan}{\textbf{Step 3: transform back and shift the source.}}
Combining \eqref{eq:exp_transform} and \eqref{eq:u_solution} gives, for a source at the origin,
\begin{equation}
\phi(\mathbf{p})
=
\frac{q_s}{4\pi \alpha}\,\frac{1}{\|\mathbf{p}\|_2}
\exp\!\Big(
\boldsymbol{\beta}^\top \mathbf{p} - m\|\mathbf{p}\|_2
\Big).
\label{eq:phi_origin}
\end{equation}
For a source at $\mathbf{p}_s$, replace $\mathbf{p}$ by $\mathbf{r}=\mathbf{p}-\mathbf{p}_s$:
\begin{equation}
\phi(\mathbf{p};\boldsymbol{\Theta}^{3D})
=
\frac{q_s}{4\pi \alpha}\,\frac{1}{\|\mathbf{p}-\mathbf{p}_s\|_2}
\exp\!\Big(
\boldsymbol{\beta}^\top (\mathbf{p}-\mathbf{p}_s)
-
m\|\mathbf{p}-\mathbf{p}_s\|_2
\Big),
\quad
\boldsymbol{\beta}=\frac{\mathbf{v}}{2\alpha}.
\label{eq:phi_general_3d}
\end{equation}
Using $\lambda=1/m$ yields the ``decay length'' form:
\begin{equation}
\phi(\mathbf{p};\boldsymbol{\Theta}^{3D})
=
\frac{q_s}{4\pi \alpha \|\mathbf{p}-\mathbf{p}_s\|_2}
\exp\!\Big(
\frac{\mathbf{v}^\top(\mathbf{p}-\mathbf{p}_s)}{2\alpha}
-
\frac{\|\mathbf{p}-\mathbf{p}_s\|_2}{\lambda}
\Big).
\label{eq:phi_lambda_form}
\end{equation}

\textcolor{cyan}{\textbf{Remark (sign convention to match a wind ``blowing away'').}}
If one prefers the exponent to be $-\mathbf{v}^\top(\mathbf{p}-\mathbf{p}_s)/(2\alpha)$
(as in the main paper's 2D plume expression), simply redefine the convection parameter
as $\tilde{\mathbf{v}}:=-\mathbf{v}$, i.e., store and infer the wind vector with the opposite sign.
The rest of the framework is unchanged.

\subsubsection{ground plane boundary via method of images (Neumann reflection)}
In atmospheric or indoor settings, the field is often restricted to a half-space $z\ge 0$ with
no-flux boundary at the ground (Neumann condition) $\partial \phi/\partial z=0$ at $z=0$.
A simple analytic approximation is obtained by adding an ``image source'' at
$\mathbf{p}_s'=(x_s,y_s,-z_s)$:
\begin{equation}
\phi_{\text{half}}(\mathbf{p})
=
\phi(\mathbf{p};\mathbf{p}_s)
+
\phi(\mathbf{p};\mathbf{p}_s'),
\label{eq:images}
\end{equation}
where $\phi(\cdot;\mathbf{p}_s)$ is given by \eqref{eq:phi_general_3d}.
This preserves the closed form and can be dropped-in as the forward model used by the PF likelihood.

\subsection{3D observation model and likelihood}
We keep the same sensor abstraction: the observation is a noisy measurement of the field.
A standard choice is additive Gaussian noise:
\begin{equation}
o_t = \phi(\mathbf{p}_t;\boldsymbol{\Theta}^{3D}) + \epsilon_t,
\quad
\epsilon_t\sim\mathcal{N}(0,\sigma_{\text{obs}}^2).
\end{equation}
The likelihood needed by the PF teacher is then
\begin{equation}
\ell(o_t \mid \mathbf{p}_t,\boldsymbol{\Theta}^{3D})
=
\mathcal{N}\big(o_t;\,\phi(\mathbf{p}_t;\boldsymbol{\Theta}^{3D}),\,\sigma_{\text{obs}}^2\big).
\label{eq:likelihood_3d}
\end{equation}
All other noise models (e.g., log-normal for strictly positive sensors, censoring, saturation)
can be incorporated similarly; the only requirement for the teacher PF is the ability to evaluate
$\ell(o_t\mid \mathbf{p}_t,\Theta)$ up to a constant factor.

\subsection{Teacher PF in 3D }
\textcolor{cyan}{\textbf{Particle representation.}}
The PF teacher maintains a weighted particle approximation of the full posterior over
$\boldsymbol{\Theta}^{3D}\in\mathbb{R}^d$:
\begin{equation}
b_t(\boldsymbol{\Theta})
\approx
\sum_{i=1}^N w_t^{(i)}\,\delta_{\boldsymbol{\Theta}_t^{(i)}}(\boldsymbol{\Theta}).
\end{equation}

\textcolor{cyan}{\textbf{Weight update.}}
Under the bootstrap (static-parameter) SIS choice, the reweighting step is identical:
\begin{equation}
\tilde w_t^{(i)} \propto w_{t-1}^{(i)}\,\ell(o_t\mid \mathbf{p}_t,\boldsymbol{\Theta}_{t-1}^{(i)}),
\qquad
w_t^{(i)} = \frac{\tilde w_t^{(i)}}{\sum_j \tilde w_t^{(j)}}.
\label{eq:pf_weights_3d}
\end{equation}
Resampling (via ESS) and MH rejuvenation apply without change; only the dimensionality of
$\boldsymbol{\Theta}$ increases (e.g., the location subvector is now 3D).

\subsection{KL-based intrinsic reward remains valid in 3D}
The intrinsic reward in the main text is the one-step KL divergence between consecutive PF beliefs.
This construction does not depend on the spatial dimension; it only depends on the Bayes update
over the parameter vector $\boldsymbol{\Theta}$.

{Population identity (dimension-free).}
Let $D_{t-1}$ denote past data and let $o_t$ be the next observation.
Then the conditional mutual information satisfies the identity
\begin{equation}
I(\boldsymbol{\Theta};o_t\mid D_{t-1})
=
\mathbb{E}_{o_t\sim p(\cdot\mid D_{t-1})}
\big[
D_{\mathrm{KL}}(p(\boldsymbol{\Theta}\mid D_t)\,\|\,p(\boldsymbol{\Theta}\mid D_{t-1}))
\big],
\label{eq:mi_ig_identity}
\end{equation}
which is purely probabilistic and holds for any $\boldsymbol{\Theta}$ (including the 3D case).
Therefore, using a Monte Carlo approximation of the posterior--prior KL remains a principled
information-gain proxy in 3D.

\textcolor{cyan}{\textbf{PF approximation via weight KL.}}
Before resampling, the PF posterior is represented by the same particle support with updated weights.
Thus a natural discrete approximation is
\begin{equation}
r_t^{IG}
=
D_{\mathrm{KL}}(\mathbf{w}_t\|\mathbf{w}_{t-1})
=
\sum_{i=1}^N w_t^{(i)}\log\frac{w_t^{(i)}}{w_{t-1}^{(i)}+\varepsilon},
\label{eq:weight_kl_3d}
\end{equation}
where $\mathbf{w}_t\in\Delta^{N-1}$ is the normalized weight vector and $\varepsilon>0$ stabilizes rare zeros.
This is the same quantity as in the main paper, and its interpretation as a one-step information-gain
estimator is unaffected by moving from 2D to 3D.

\subsection{Student posterior distillation in 3D}
The student remains a constant-time amortized posterior over the parameter vector:
\begin{equation}
q_\varphi(\boldsymbol{\Theta}^{3D}\mid o_t,\mathbf{p}_t)
=
\mathcal{N}\!\Big(\boldsymbol{\Theta}^{3D};\,\boldsymbol{\mu}_t,\mathrm{diag}(\boldsymbol{\sigma}_t^2)\Big),
\quad
[\boldsymbol{\mu}_t,\log \boldsymbol{\sigma}_t^2] = f_\varphi(o_t,\mathbf{p}_t).
\end{equation}
The distillation loss remains the weighted negative log-likelihood (NLL) on teacher particles:
\begin{equation}
\mathcal{L}_{bel}(\varphi)
=
-\sum_{i=1}^N \tilde w_t^{(i)}
\log
\mathcal{N}\!\Big(\tilde{\boldsymbol{\Theta}}_t^{(i)};\,\boldsymbol{\mu}_t,\mathrm{diag}(\boldsymbol{\sigma}_t^2)\Big).
\label{eq:distill_3d}
\end{equation}
No algorithmic changes are required; only the dimensionality of $\boldsymbol{\Theta}$ changes.

\subsection{3D belief features and 3D spread-based stopping certificate}
The policy typically needs only the source-location marginal (and its uncertainty) for control.
Let the 3D location parameter be $\boldsymbol{\theta}_L=(x_s,y_s,z_s)\in\mathbb{R}^3$.
Let the location posterior be $b_{L,t}(\boldsymbol{\theta}_L):=p(\boldsymbol{\theta}_L\mid D_t)$.
Define its mean and covariance:
\begin{equation}
\boldsymbol{\mu}_t := \mathbb{E}[\boldsymbol{\theta}_L\mid D_t]\in\mathbb{R}^3,
\qquad
\boldsymbol{\Sigma}_t := \mathrm{Cov}(\boldsymbol{\theta}_L\mid D_t)\in\mathbb{R}^{3\times 3}.
\end{equation}

\textcolor{cyan}{\textbf{3D Spread.}}
We generalize the spread certificate to 3D as
\begin{equation}
\mathrm{Spread}(b_t)
:=
\sqrt{\mathrm{tr}(\boldsymbol{\Sigma}_t)}.
\label{eq:spread_3d}
\end{equation}
This is rotation-invariant, has units of distance, and summarizes total marginal uncertainty across
all three spatial coordinates.

\textcolor{cyan}{\textbf{Stopping rule.}}
The 3D stopping rule mirrors the main text:
\begin{equation}
\mathrm{Spread}(b_t) < \zeta,
\label{eq:stop_3d}
\end{equation}
for a user-chosen tolerance $\zeta>0$.

\subsubsection{Theory: why $\sqrt{\mathrm{tr}(\Sigma_t)}$ is a principled 3D certificate}
The arguments in Appendix~A of the main paper extend \emph{verbatim} to 3D (indeed, to any dimension).
We restate the key results with $\boldsymbol{\theta}_L\in\mathbb{R}^3$ for completeness.

\begin{proposition}[Trace--MSE identity in $\mathbb{R}^3$]
\label{pb1}
Assume $b_{L,t}$ has finite second moment. Then
\begin{equation}
\mathbb{E}\!\left[\|\boldsymbol{\theta}_L-\boldsymbol{\mu}_t\|_2^2\mid D_t\right]
= \mathrm{tr}(\boldsymbol{\Sigma}_t),
\end{equation}
and therefore
\begin{equation}
\mathrm{Spread}(b_t)
=
\sqrt{
\mathbb{E}\!\left[\|\boldsymbol{\theta}_L-\boldsymbol{\mu}_t\|_2^2\mid D_t\right]
}.
\end{equation}
\end{proposition}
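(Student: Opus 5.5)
The plan is to repeat the argument of Proposition~\ref{prop:trace_mse} verbatim. Nothing there used the ambient dimension being two: it relied only on linearity of conditional expectation and of the trace, together with the scalar identity $\|x\|_2^2=\mathrm{tr}(xx^\top)$, and both hold in $\mathbb{R}^3$ (indeed in any $\mathbb{R}^k$).

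\textbf{Step 1: well-definedness.} The finite-second-moment assumption on $b_{L,t}$ guarantees that $\boldsymbol{\mu}_t$ and $\boldsymbol{\Sigma}_t$ exist and are finite. It also makes each entry of the random matrix $(\boldsymbol{\theta}_L-\boldsymbol{\mu}_t)(\boldsymbol{\theta}_L-\boldsymbol{\mu}_t)^\top$ integrable, since by Cauchy--Schwarz $|(\theta_i-\mu_i)(\theta_j-\mu_j)|\le \tfrac12\big((\theta_i-\mu_i)^2+(\theta_j-\mu_j)^2\big)$.

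\textbf{Step 2: rewrite and exchange.} Write
\[
\|\boldsymbol{\theta}_L-\boldsymbol{\mu}_t\|_2^2=\mathrm{tr}\big((\boldsymbol{\theta}_L-\boldsymbol{\mu}_t)(\boldsymbol{\theta}_L-\boldsymbol{\mu}_t)^\top\big).
\]
The trace is a finite sum of the three diagonal entries, so linearity of conditional expectation lets us move $\mathbb{E}[\cdot\mid D_t]$ inside the trace. This gives $\mathrm{tr}\big(\mathbb{E}[(\boldsymbol{\theta}_L-\boldsymbol{\mu}_t)(\boldsymbol{\theta}_L-\boldsymbol{\mu}_t)^\top\mid D_t]\big)=\mathrm{tr}(\boldsymbol{\Sigma}_t)$ by the definition of the conditional covariance.

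\textbf{Step 3: square root.} Both sides are nonnegative, so taking square roots and using the definition \eqref{eq:spread_3d} yields the stated expression for $\mathrm{Spread}(b_t)$.

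There is no real obstacle. The only point worth stating explicitly is the integrability in Step~1, which justifies the exchange of expectation and trace; the rest is algebra identical to the 2D case.
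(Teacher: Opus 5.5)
Your proof is correct and follows the paper's own argument: write $\|\boldsymbol{\theta}_L-\boldsymbol{\mu}_t\|_2^2$ as the trace of the outer product, exchange trace and conditional expectation by linearity, and recognize $\mathrm{tr}(\boldsymbol{\Sigma}_t)$. Your Step~1 integrability check is a small addition the paper leaves implicit. The inequality you cite there is AM--GM (Young's inequality) rather than Cauchy--Schwarz, though it is correct as stated.
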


\emph{Proof.}
Use $\|\mathbf{x}\|_2^2=\mathrm{tr}(\mathbf{x}\mathbf{x}^\top)$ and linearity of trace/expectation:
\begin{align}
\mathbb{E}\!\left[\|\boldsymbol{\theta}_L-\boldsymbol{\mu}_t\|_2^2\mid D_t\right]
&=
\mathbb{E}\!\left[\mathrm{tr}\big((\boldsymbol{\theta}_L-\boldsymbol{\mu}_t)(\boldsymbol{\theta}_L-\boldsymbol{\mu}_t)^\top\big)\mid D_t\right]\\
&=
\mathrm{tr}\!\left(
\mathbb{E}\!\left[(\boldsymbol{\theta}_L-\boldsymbol{\mu}_t)(\boldsymbol{\theta}_L-\boldsymbol{\mu}_t)^\top\mid D_t\right]
\right)
=
\mathrm{tr}(\boldsymbol{\Sigma}_t).
\end{align}
$\square$

\begin{proposition}[Posterior mean is Bayes-optimal under squared loss in $\mathbb{R}^3$)]
\label{pb2}
For any estimate $\mathbf{a}\in\mathbb{R}^3$, define the conditional risk
$R_t(\mathbf{a})=\mathbb{E}[\|\boldsymbol{\theta}_L-\mathbf{a}\|_2^2\mid D_t]$.
Then
\begin{equation}
R_t(\mathbf{a}) = \mathrm{tr}(\boldsymbol{\Sigma}_t) + \|\boldsymbol{\mu}_t-\mathbf{a}\|_2^2,
\end{equation}
so the minimizer is $\mathbf{a}^\star=\boldsymbol{\mu}_t$ and
$\min_{\mathbf{a}}R_t(\mathbf{a})=\mathrm{tr}(\boldsymbol{\Sigma}_t)=\mathrm{Spread}(b_t)^2$.
\end{proposition}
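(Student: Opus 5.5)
The plan is to reproduce the two-dimensional argument of Proposition~\ref{prop:bayes_opt_mean} essentially verbatim. Nothing in it used the dimension beyond the Euclidean inner product on the ambient space. Fix an arbitrary $\mathbf{a}\in\mathbb{R}^3$ and split the error around the posterior mean, $\boldsymbol{\theta}_L-\mathbf{a}=(\boldsymbol{\theta}_L-\boldsymbol{\mu}_t)+(\boldsymbol{\mu}_t-\mathbf{a})$. Expanding the squared norm gives
\[
\|\boldsymbol{\theta}_L-\mathbf{a}\|_2^2=\|\boldsymbol{\theta}_L-\boldsymbol{\mu}_t\|_2^2+\|\boldsymbol{\mu}_t-\mathbf{a}\|_2^2+2(\boldsymbol{\theta}_L-\boldsymbol{\mu}_t)^\top(\boldsymbol{\mu}_t-\mathbf{a}).
\]

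Next, take $\mathbb{E}[\cdot\mid D_t]$ term by term. Since $\boldsymbol{\mu}_t$ and $\mathbf{a}$ are $D_t$-measurable, that is, deterministic given the data, the middle term $\|\boldsymbol{\mu}_t-\mathbf{a}\|_2^2$ passes through the expectation unchanged. The cross term becomes $2\,\mathbb{E}[\boldsymbol{\theta}_L-\boldsymbol{\mu}_t\mid D_t]^\top(\boldsymbol{\mu}_t-\mathbf{a})$, which vanishes by the definition of $\boldsymbol{\mu}_t$. The first term equals $\mathrm{tr}(\boldsymbol{\Sigma}_t)$ by Proposition~\ref{pb1}. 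This yields the identity $R_t(\mathbf{a})=\mathrm{tr}(\boldsymbol{\Sigma}_t)+\|\boldsymbol{\mu}_t-\mathbf{a}\|_2^2$.

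Minimization is then immediate. The first summand does not depend on $\mathbf{a}$, and the second is nonnegative with equality if and only if $\mathbf{a}=\boldsymbol{\mu}_t$. So the unique minimizer is $\mathbf{a}^\star=\boldsymbol{\mu}_t$, and the minimum value is $\mathrm{tr}(\boldsymbol{\Sigma}_t)$, which equals $\mathrm{Spread}(b_t)^2$ by definition~\eqref{eq:spread_3d}.

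The only point that needs care is integrability. We assume, as in Proposition~\ref{pb1}, that $b_{L,t}$ has a finite second moment. This makes $\boldsymbol{\mu}_t$ and $\boldsymbol{\Sigma}_t$ well defined and makes every term above integrable; in particular the cross term is integrable by Cauchy--Schwarz, so splitting the expectation across the sum is legitimate. Without this assumption $R_t(\mathbf{a})=\infty$ for every $\mathbf{a}$ and the statement is vacuous.
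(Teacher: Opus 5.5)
Your proof is correct and matches the paper's argument: the same split of $\boldsymbol{\theta}_L-\mathbf{a}$ around $\boldsymbol{\mu}_t$, the cross term vanishing because $\mathbb{E}[\boldsymbol{\theta}_L-\boldsymbol{\mu}_t\mid D_t]=0$, and Proposition~\ref{pb1} for the first term. Your explicit finite-second-moment remark, which the paper leaves implicit by inheriting it from Proposition~\ref{pb1}, is a sound addition that justifies splitting the expectation across the sum.
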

\emph{Proof.}
Decompose $\boldsymbol{\theta}_L-\mathbf{a}=(\boldsymbol{\theta}_L-\boldsymbol{\mu}_t)+(\boldsymbol{\mu}_t-\mathbf{a})$,
expand the square, and note the cross term vanishes since
$\mathbb{E}[\boldsymbol{\theta}_L-\boldsymbol{\mu}_t\mid D_t]=0$.
Then apply Proposition~\ref{pb1}. $\square$

\begin{theorem}[Stopping is a Bayes-MSE certificate in 3D]
Let the reported location estimate be the posterior mean $\hat{\boldsymbol{\theta}}_t:=\boldsymbol{\mu}_t$.
Then the stopping condition $\mathrm{Spread}(b_t)<\zeta$ is equivalent to
\begin{equation}
\mathbb{E}\!\left[\|\boldsymbol{\theta}_L-\hat{\boldsymbol{\theta}}_t\|_2^2\mid D_t\right]
<
\zeta^2.
\end{equation}
\end{theorem}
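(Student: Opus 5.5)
The plan is to reduce the statement to Proposition~\ref{pb2} (equivalently Proposition~\ref{pb1}) and then make a monotonicity observation about squaring nonnegative reals. The argument mirrors the 2D proof of Theorem~\ref{thm:stopping_certificate} verbatim; only the ambient dimension changes.

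\textbf{Step 1 (identify the conditional MSE).} Take $\hat{\boldsymbol{\theta}}_t=\boldsymbol{\mu}_t$. Assume, as in Proposition~\ref{pb1}, that $b_{L,t}$ has finite second moment, so that $\boldsymbol{\Sigma}_t$ is well defined. Evaluate the conditional risk of Proposition~\ref{pb2} at $\mathbf{a}=\boldsymbol{\mu}_t$; the bias term $\|\boldsymbol{\mu}_t-\mathbf{a}\|_2^2$ vanishes. This gives
\[
\mathbb{E}\!\left[\|\boldsymbol{\theta}_L-\hat{\boldsymbol{\theta}}_t\|_2^2\mid D_t\right]=\mathrm{tr}(\boldsymbol{\Sigma}_t)=\mathrm{Spread}(b_t)^2,
\]
where the last equality is the definition \eqref{eq:spread_3d}.

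\textbf{Step 2 (pass between Spread and its square).} $\boldsymbol{\Sigma}_t$ is a covariance matrix, hence positive semidefinite, so $\mathrm{tr}(\boldsymbol{\Sigma}_t)\ge 0$ and $\mathrm{Spread}(b_t)\ge 0$ is a well-defined real number. Since $\zeta>0$ and $x\mapsto x^2$ is strictly increasing on $[0,\infty)$, we have $\mathrm{Spread}(b_t)<\zeta \iff \mathrm{Spread}(b_t)^2<\zeta^2$. Substituting Step 1 then yields the claimed equivalence in both directions.

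\textbf{Main obstacle.} There is no substantive obstacle. The only points to check are that the finite-second-moment hypothesis of Proposition~\ref{pb1} is in force (otherwise neither side is defined), and that the squaring step really uses nonnegativity of both $\mathrm{Spread}(b_t)$ and $\zeta$. I would also remark that the result is posterior-conditional. When the student's diagonal Gaussian replaces the exact posterior, the same identity holds for that surrogate belief. Whether it transfers to the true posterior is then a calibration question, not part of this theorem.
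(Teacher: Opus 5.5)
Your proposal is correct and follows the paper's proof: evaluate the risk identity of Proposition~\ref{pb2} at $\mathbf{a}=\boldsymbol{\mu}_t$ to get $\mathbb{E}[\|\boldsymbol{\theta}_L-\hat{\boldsymbol{\theta}}_t\|_2^2\mid D_t]=\mathrm{tr}(\boldsymbol{\Sigma}_t)=\mathrm{Spread}(b_t)^2$, then pass from $\mathrm{Spread}(b_t)<\zeta$ to its square. Your explicit appeal to the nonnegativity of $\mathrm{Spread}(b_t)$ and $\zeta$ in the squaring step spells out what the paper's 3D proof leaves implicit (its 2D counterpart states that step in one line).
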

\emph{Proof.}
By Proposition~\ref{pb2}, the conditional MSE of the posterior mean equals $\mathrm{tr}(\boldsymbol{\Sigma}_t)=\mathrm{Spread}(b_t)^2$.
$\square$

\begin{corollary}[Expected Euclidean error bound.]
\begin{equation}
\mathbb{E}\!\left[\|\boldsymbol{\theta}_L-\hat{\boldsymbol{\theta}}_t\|_2\mid D_t\right]
\le
\mathrm{Spread}(b_t).
\end{equation}
\end{corollary}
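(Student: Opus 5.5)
The plan is to mirror the 2D argument of Corollary~\ref{cor:expected_L2}: reduce the Euclidean error to the square root of the squared error, and then apply Jensen's inequality conditionally on $D_t$. Set $\hat{\boldsymbol{\theta}}_t=\boldsymbol{\mu}_t$ and define the nonnegative scalar random variable
\[
X:=\|\boldsymbol{\theta}_L-\boldsymbol{\mu}_t\|_2^2 .
\]
Then $\|\boldsymbol{\theta}_L-\hat{\boldsymbol{\theta}}_t\|_2=\sqrt{X}$. Since $\sqrt{\cdot}$ is concave on $[0,\infty)$, conditional Jensen gives
\[
\mathbb{E}[\sqrt{X}\mid D_t]\le\sqrt{\mathbb{E}[X\mid D_t]} .
\]
We only need $X$ to be integrable, and this is exactly the finite-second-moment assumption of Proposition~\ref{pb1}. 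If one prefers to avoid appealing to Jensen, Cauchy--Schwarz gives the same bound directly: $\mathbb{E}[\sqrt{X}\cdot 1\mid D_t]\le(\mathbb{E}[X\mid D_t])^{1/2}$.

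Next, invoke Proposition~\ref{pb1}, which states $\mathbb{E}[X\mid D_t]=\mathrm{tr}(\boldsymbol{\Sigma}_t)$. Taking square roots identifies the right-hand side with $\mathrm{Spread}(b_t)$ as defined in \eqref{eq:spread_3d}. This yields the claimed inequality. As an immediate consequence, under the stopping rule \eqref{eq:stop_3d} the posterior-expected Euclidean error is strictly below $\zeta$.

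There is no substantive obstacle here. The only point needing care is integrability: the finite-second-moment hypothesis is what ensures the conditional expectations exist and the trace identity holds. Nothing in the argument uses the dimension, so the 3D case is verbatim the same as the 2D one.
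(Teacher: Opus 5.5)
Your proposal is correct and follows the paper's proof: set $X=\|\boldsymbol{\theta}_L-\boldsymbol{\mu}_t\|_2^2$, apply conditional Jensen to the concave $\sqrt{\cdot}$, and identify $\mathbb{E}[X\mid D_t]$ with $\mathrm{tr}(\boldsymbol{\Sigma}_t)$ via the trace--MSE identity. Your Cauchy--Schwarz alternative and integrability remark are valid additions that do not change the argument.
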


\emph{Proof.}
Apply Jensen to $\sqrt{\cdot}$ with $X=\|\boldsymbol{\theta}_L-\boldsymbol{\mu}_t\|_2^2$:
$\mathbb{E}[\|\cdot\|_2]\le \sqrt{\mathbb{E}[\|\cdot\|_2^2]}=\sqrt{\mathrm{tr}(\Sigma_t)}.$
$\square$

\begin{proposition}[Distribution-free tail bound.]
For any $\varepsilon>0$,
\begin{equation}
\mathbb{P}\!\left(\|\boldsymbol{\theta}_L-\boldsymbol{\mu}_t\|_2 \ge \varepsilon \mid D_t\right)
\le
\frac{\mathrm{Spread}(b_t)^2}{\varepsilon^2}.
\end{equation}
\end{proposition}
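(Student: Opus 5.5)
The plan is to reduce the Euclidean tail event to a tail event for a nonnegative scalar and then apply Markov's inequality, mirroring the 2D argument of Proposition~\ref{prop:markov_tail}. The key step is to introduce the squared deviation
\[
Z:=\|\boldsymbol{\theta}_L-\boldsymbol{\mu}_t\|_2^2\ \ge 0 .
\]
Since $x\mapsto x^2$ is strictly increasing on $[0,\infty)$ and $\varepsilon>0$, the events $\{\|\boldsymbol{\theta}_L-\boldsymbol{\mu}_t\|_2\ge\varepsilon\}$ and $\{Z\ge\varepsilon^2\}$ coincide. Their conditional probabilities given $D_t$ are therefore equal.

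Next I apply the conditional Markov inequality to $Z$ under the posterior $b_{L,t}$:
\[
\mathbb{P}(Z\ge\varepsilon^2\mid D_t)\le \frac{\mathbb{E}[Z\mid D_t]}{\varepsilon^2}.
\]
This requires only that $Z$ be nonnegative and that its conditional expectation be finite. Finiteness follows from the finite-second-moment assumption already made in Proposition~\ref{pb1}, which I carry over as a standing hypothesis.

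Finally I invoke Proposition~\ref{pb1}, which gives $\mathbb{E}[Z\mid D_t]=\mathrm{tr}(\boldsymbol{\Sigma}_t)$. By the definition in \eqref{eq:spread_3d}, this equals $\mathrm{Spread}(b_t)^2$, and substituting it into the Markov bound yields the claim.

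None of the steps is a real obstacle; the only point needing care is the hypotheses. I will state explicitly that finite second moments are assumed, so that $\boldsymbol{\mu}_t$ and $\boldsymbol{\Sigma}_t$ exist. I will also note that when $\mathrm{Spread}(b_t)\ge\varepsilon$ the bound is trivially true, since the right-hand side is then at least $1$.
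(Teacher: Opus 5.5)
Your proposal is correct and matches the paper's proof: set $Z=\|\boldsymbol{\theta}_L-\boldsymbol{\mu}_t\|_2^2\ge 0$, apply Markov's inequality at level $\varepsilon^2$, and identify $\mathbb{E}[Z\mid D_t]=\mathrm{tr}(\boldsymbol{\Sigma}_t)=\mathrm{Spread}(b_t)^2$ via the trace--MSE identity. Your explicit treatment of the event equivalence and of the finite-second-moment hypothesis adds care that the paper leaves implicit.
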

\emph{Proof.}
Let $Z=\|\boldsymbol{\theta}_L-\boldsymbol{\mu}_t\|_2^2\ge 0$ and use Markov:
$\mathbb{P}(Z\ge \varepsilon^2)\le \mathbb{E}[Z]/\varepsilon^2=\mathrm{tr}(\Sigma_t)/\varepsilon^2.$
$\square$

\begin{proposition}[Conditioning reduces variance in expectation]
Let $D_t=(D_{t-1},o_t,\mathbf{p}_t)$. Then
\begin{equation}
\mathbb{E}\left[\boldsymbol{\Sigma}_t \mid D_{t-1}\right]\preceq \boldsymbol{\Sigma}_{t-1}
\Longrightarrow
\mathbb{E}\left[\mathrm{Spread}(b_t)^2\mid D_{t-1}\right]\le \mathrm{Spread}(b_{t-1})^2.
\end{equation}
\end{proposition}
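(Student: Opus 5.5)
The plan is to derive the 3D statement from the (multivariate) law of total covariance, just as in Proposition~\ref{prop:total_variance}. Nothing in that argument uses the ambient dimension, so it carries over to $\boldsymbol{\theta}_L\in\mathbb{R}^3$. Write $\boldsymbol{\mu}_t=\mathbb{E}[\boldsymbol{\theta}_L\mid D_t]$ and $\boldsymbol{\Sigma}_t=\mathrm{Cov}(\boldsymbol{\theta}_L\mid D_t)$. Both are random through the not-yet-observed pair $(o_t,\mathbf{p}_t)$, and the expectation $\mathbb{E}[\cdot\mid D_{t-1}]$ is taken over the predictive distribution of that pair. Because $D_{t-1}\subseteq D_t$, the tower property $\mathbb{E}[\mathbb{E}[X\mid D_t]\mid D_{t-1}]=\mathbb{E}[X\mid D_{t-1}]$ applies. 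I will assume the finite second moment already used in Proposition~\ref{pb1}.

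\textbf{Step 1 (total covariance).} Write $\boldsymbol{\theta}_L-\boldsymbol{\mu}_{t-1}=(\boldsymbol{\theta}_L-\boldsymbol{\mu}_t)+(\boldsymbol{\mu}_t-\boldsymbol{\mu}_{t-1})$ and expand the outer product. Then take $\mathbb{E}[\cdot\mid D_t]$ followed by $\mathbb{E}[\cdot\mid D_{t-1}]$.
\begin{itemize}
\item The cross terms vanish, because $\boldsymbol{\mu}_t-\boldsymbol{\mu}_{t-1}$ is $D_t$-measurable and $\mathbb{E}[\boldsymbol{\theta}_L-\boldsymbol{\mu}_t\mid D_t]=0$.
\item The tower property gives $\mathbb{E}[\boldsymbol{\mu}_t\mid D_{t-1}]=\boldsymbol{\mu}_{t-1}$.
\end{itemize}
This yields
\[
\boldsymbol{\Sigma}_{t-1}=\mathbb{E}[\boldsymbol{\Sigma}_t\mid D_{t-1}]+\mathrm{Cov}(\boldsymbol{\mu}_t\mid D_{t-1}).
\]

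\textbf{Step 2 (PSD ordering).} The second term is a covariance matrix, hence positive semidefinite: for any $\mathbf{u}\in\mathbb{R}^3$, $\mathbf{u}^\top\mathrm{Cov}(\boldsymbol{\mu}_t\mid D_{t-1})\mathbf{u}=\mathrm{Var}(\mathbf{u}^\top\boldsymbol{\mu}_t\mid D_{t-1})\ge 0$. It follows that $\mathbb{E}[\boldsymbol{\Sigma}_t\mid D_{t-1}]\preceq\boldsymbol{\Sigma}_{t-1}$, which is the left-hand claim.

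\textbf{Step 3 (trace).} The trace is linear and monotone under $\preceq$, since $A\preceq B$ implies $\mathrm{tr}(B-A)\ge0$, the trace of a PSD matrix being a sum of nonnegative eigenvalues. Linearity also lets the trace commute with the conditional expectation. Together with $\mathrm{Spread}(b_t)^2=\mathrm{tr}(\boldsymbol{\Sigma}_t)$ from \eqref{eq:spread_3d}, this gives
\[
\mathbb{E}[\mathrm{Spread}(b_t)^2\mid D_{t-1}]=\mathrm{tr}\,\mathbb{E}[\boldsymbol{\Sigma}_t\mid D_{t-1}]\le\mathrm{tr}\,\boldsymbol{\Sigma}_{t-1}=\mathrm{Spread}(b_{t-1})^2.
\]

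The only delicate point is interpretive rather than computational. The statement holds for the \emph{exact} Bayesian posterior, with the expectation taken under the true predictive $p(o_t\mid D_{t-1})$, so that the tower property applies. It does not hold for the PF approximation or the student surrogate, unless they are assumed to coincide with the exact posterior. I will state this caveat explicitly.

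A second subtlety concerns the pose $\mathbf{p}_t$. If it is chosen by a policy, then it is a function of $D_{t-1}$ (plus independent randomization). The argument is unchanged, since conditioning on more information still nests the $\sigma$-algebras.
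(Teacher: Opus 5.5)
Your proof is correct and matches the paper's argument: the law of total covariance $\boldsymbol{\Sigma}_{t-1}=\mathbb{E}[\boldsymbol{\Sigma}_t\mid D_{t-1}]+\mathrm{Cov}(\boldsymbol{\mu}_t\mid D_{t-1})$, positive semidefiniteness of the second term, and then taking traces. The only difference is that you derive the total-covariance identity explicitly via the tower property, whereas the paper simply cites it.
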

\emph{Proof.}
Apply the law of total variance in $\mathbb{R}^3$:
$\mathrm{Cov}(\theta\mid D_{t-1})=\mathbb{E}[\mathrm{Cov}(\theta\mid D_t)\mid D_{t-1}]
+\mathrm{Cov}(\mathbb{E}[\theta\mid D_t]\mid D_{t-1})$.
The second term is PSD, hence the inequality; take trace. $\square$

\subsubsection{Gaussian credible radius in 3D}
If the location posterior is approximately Gaussian,
$\boldsymbol{\theta}_L\mid D_t\sim \mathcal{N}(\boldsymbol{\mu}_t,\boldsymbol{\Sigma}_t)$,
then the whitened variable
$\mathbf{z}=\boldsymbol{\Sigma}_t^{-1/2}(\boldsymbol{\theta}_L-\boldsymbol{\mu}_t)$ satisfies
$\|\mathbf{z}\|_2^2\sim \chi^2_3$.
Let $c_{3,\delta}$ denote the $(1-\delta)$-quantile of $\chi^2_3$.
Then
\begin{equation}
\mathbb{P}\!\left(
\|\boldsymbol{\theta}_L-\boldsymbol{\mu}_t\|_2
\le
\sqrt{c_{3,\delta}}\;\sqrt{\mathrm{tr}(\boldsymbol{\Sigma}_t)}
\;\middle|\;D_t
\right)
\ge 1-\delta,
\end{equation}
where we used $\lambda_{\max}(\Sigma_t)\le \mathrm{tr}(\Sigma_t)$ to upper-bound the ellipsoidal
credible region by a Euclidean ball. If one wants an explicit closed-form constant,
a standard Laurent--Massart bound gives (for $\mathbf{z}\sim\mathcal{N}(0,I_3)$)
\begin{equation}
\mathbb{P}\!\left(
\|\mathbf{z}\|_2^2 \le 3 + 2\sqrt{3\ln(1/\delta)} + 2\ln(1/\delta)
\right)
\ge 1-\delta,
\end{equation}
which can be plugged into the same conversion step.

\subsection{Practical estimators under PF and diagonal-Gaussian student in 3D}
\textbf{PF teacher.}
Given weighted location particles $\{\boldsymbol{\theta}_{L}^{(i)},w^{(i)}\}_{i=1}^N$,
\begin{align}
\hat{\boldsymbol{\mu}}_t &= \sum_{i=1}^N w^{(i)}\boldsymbol{\theta}_{L}^{(i)},\\
\hat{\boldsymbol{\Sigma}}_t &= \sum_{i=1}^N w^{(i)}(\boldsymbol{\theta}_{L}^{(i)}-\hat{\boldsymbol{\mu}}_t)(\boldsymbol{\theta}_{L}^{(i)}-\hat{\boldsymbol{\mu}}_t)^\top,\\
\mathrm{Spread}(b_t) &= \sqrt{\mathrm{tr}(\hat{\boldsymbol{\Sigma}}_t)}
= \sqrt{\sum_{i=1}^N w^{(i)}\|\boldsymbol{\theta}_{L}^{(i)}-\hat{\boldsymbol{\mu}}_t\|_2^2}.
\end{align}
This is $O(N)$ per step.

\textbf{Diagonal-Gaussian student.}
If the student location marginal is diagonal,
$\boldsymbol{\Sigma}_t=\mathrm{diag}(\sigma_{x,t}^2,\sigma_{y,t}^2,\sigma_{z,t}^2)$, then
\begin{equation}
\mathrm{Spread}(b_t)=\sqrt{\sigma_{x,t}^2+\sigma_{y,t}^2+\sigma_{z,t}^2},
\end{equation}
which is $O(1)$ per step.

\subsection{ISLC from 2D to 3D}
We emphasize that the Distill-Belief architecture is unchanged; only the following elements need updates:
\begin{itemize}
\item \textbf{Forward model:} replace the 2D plume $\phi(x,y)$ with the 3D closed-form $\phi(x,y,z)$ in \eqref{eq:phi_general_3d}
(or the half-space variant \eqref{eq:images} if desired).
\item \textbf{Parameter vector:} include $z_s$ and $v_z$ (or a 3D direction parameterization).
\item \textbf{Belief features:} use the 3D location mean $\boldsymbol{\mu}_t\in\mathbb{R}^3$ and covariance
$\boldsymbol{\Sigma}_t\in\mathbb{R}^{3\times 3}$ (or its diagonal) instead of 2D.
\item \textbf{Stopping:} use $\mathrm{Spread}(b_t)=\sqrt{\mathrm{tr}(\boldsymbol{\Sigma}_t)}$ with the same semantics:
it certifies Bayes MSE and controls expected Euclidean error in 3D.
\item \textbf{Policy/action:} if the agent moves in 3D, increase the action dimension accordingly.
The intrinsic reward, teacher PF, and student distillation remain identical.
\end{itemize}

\textbf{Remark.}
With these changes, the method supports 3D ISLC while preserving:
(i) Bayes-aligned learning signals computed solely from the PF teacher,
(ii) constant-time deployment via the distilled student posterior,
and (iii) an uncertainty-aware stopping certificate with a direct Bayes-risk meaning in 3D.

\section{Environment and Metric Details}
\label{app:ISLCenv}

Building on the \textbf{open-source} \textbf{AutoSTE} codebase hosted on GitHub \href{https://github.com/Cunjia-Liu/AutoSTE}{https://github.com/Cunjia-Liu/AutoSTE}, we take its information-theoretic framework for mobile sensing, covering closed-loop path planning and source term estimation for atmospheric releases, as a concrete and reproducible starting point. In particular, AutoSTE and the associated line of work (e.g., information-based search and dual control for exploration–exploitation developed by collaborators) provide a clear reference implementation of how belief updates and information measures can drive autonomous sensing policies. Motivated by this foundation, our paper extends the setting from a single atmospheric-dispersion task to a broader, physics-grounded AI4Science formulation: we construct procedural, multi-field simulators (different governing equations and parameterizations) under a unified stochastic sensor model, and develop a belief-centric inference-and-control pipeline that remains unchanged across domains while only the physics backend (the forward operator) varies. This shift enables systematic evaluation under higher-dimensional parameters and prior misspecification, and targets robust belief contraction and accurate source characterization across diverse physical field modalities.

\begin{table*}[!t]
\centering
\caption{\textbf{ISLCenv multi-field simulator backends (procedural task generator).}
Each episode samples $\Theta_{\mathcal F}$ and instantiates a physics simulator
that provides point queries via $h_{\mathcal F}(\mathbf p;\Theta_{\mathcal F})$.
All modalities share the same Bernoulli-gated Gaussian-mixture sensor model
; domain differences enter only through the
backend that implements $h_{\mathcal F}$.}
\label{tab:multi_field_simulator_api}
\renewcommand{\arraystretch}{1.15}
\resizebox{\textwidth}{!}{
\begin{tabular}{l p{0.33\textwidth} p{0.49\textwidth} p{0.16\textwidth}}
\toprule
Field $\mathcal F$ &
Episode parameters $\Theta_{\mathcal F}$ (examples) &
Backend instantiation + forward-query API $h_{\mathcal F}(\mathbf p;\Theta_{\mathcal F})$ &
Shared sensor model \\
\midrule
Temperature (Temp.) &
Source parameters in $S(\phi,x,y)$; airflow $\vec v$; thermal diffusivity $\alpha(\phi)$ &
$\phi_T \leftarrow \texttt{Solve}(\mathcal G_T,\Theta_T)$;\;
$h_T(\mathbf p;\Theta_T)=\phi_T(\mathbf p)$ &
\multirow{6}{*}{\parbox{0.16\textwidth}{
$z \sim (1-P_d)\mathcal N(0,\sigma^2)$\\
$\quad +\,P_d\,\mathcal N(h_{\mathcal F}(\mathbf p;\Theta_{\mathcal F}),\bar\sigma^2)$
}}\\

Concentration (Conc.) &
Source $S(x,y)$; diffusion $\alpha$; flow $\vec v$; degradation $k_r$; turbulence $\vec\tau$ &
$\phi_C \leftarrow \texttt{Solve}(\mathcal G_C,\Theta_C)$;\;
$h_C(\mathbf p;\Theta_C)=\phi_C(\mathbf p)$ & \\

Magnetic (Mag.) &
Source $S(x,y)$; $\alpha$; effective flow $\vec v$; external field $\vec B$ &
$\phi_M \leftarrow \texttt{Solve}(\mathcal G_M,\Theta_M)$;\;
$h_M(\mathbf p;\Theta_M)=\phi_M(\mathbf p)$ & \\

Electric (Elec.) &
Conductivity $\sigma(x,y)$; charge density $\rho(x,y)$ &
$\phi_E \leftarrow \texttt{Solve}(\mathcal G_E,\Theta_E)$;\;
$h_E(\mathbf p;\Theta_E)=\phi_E(\mathbf p)$ & \\

Energy (En.) &
Source $S(x,y)$; $\alpha$; transport $\vec v$; absorption $\sigma_a$; scattering $\sigma_s$ &
$\phi_{En} \leftarrow \texttt{Solve}(\mathcal G_{En},\Theta_{En})$;\;
$h_{En}(\mathbf p;\Theta_{En})=\phi_{En}(\mathbf p)$ & \\

Noise (Noise) &
Source $S(x,y,f)$; $\alpha$; flow $\vec v$; attenuation $\gamma(f)$ &
$\phi_N \leftarrow \texttt{Solve}(\mathcal G_N,\Theta_N)$;\;
$h_N(\mathbf p;\Theta_N)=\phi_N(\mathbf p,f_0)$ (or band-aggregated) & \\
\bottomrule
\end{tabular}}
\end{table*}

\textbf{Procedural, physics-based tasks (not dataset-based).}
Our evaluation does \emph{not} rely on a fixed offline dataset. Instead,
each episode is generated \emph{on the fly} by a physics-based field simulator:
(i) sample a field modality $\mathcal{F}$ and a latent parameter vector
$\Theta_{\mathcal{F}}$ (source parameters and physical coefficients);
(ii) instantiate a continuous field realization $\phi_{\mathcal F}(\mathbf x)$
by solving the field-specific governing equation (or an equivalent surrogate);
(iii) interactively render agent observations through a shared stochastic
sensor model. Changing $\Theta_{\mathcal F}$ produces qualitatively different
field geometries (e.g., plume shapes, potential contours, attenuation patterns),
yielding effectively unbounded scenario diversity while remaining grounded in
physical mechanisms.

\textcolor{cyan}{\textbf{Episode generator and prior misspecification.}}
At the beginning of each episode, $\Theta_{\mathcal F}$ is sampled from a
training prior $p_{\text{train}}(\Theta_{\mathcal F})$. To stress-test robustness
to prior misspecification, we additionally evaluate a \emph{Moderate error}
setting where $50\%$ of test episodes are generated from parameters inside the
prior support and $50\%$ from outside.

\textcolor{cyan}{\textbf{Unified simulator interface (forward-query API).}}
Across all modalities, the environment exposes the same forward-query interface
\begin{equation}
h_{\mathcal{F}}(\mathbf p;\Theta_{\mathcal{F}})\;:=\;\phi_{\mathcal{F}}(\mathbf p),
\end{equation}
i.e., the simulator returns the \emph{noise-free} scalar field value at a queried
location $\mathbf p$. Thus, domain differences enter \emph{only} through the
simulator backend that implements $h_{\mathcal F}$.

\textcolor{cyan}{\textbf{Shared observation likelihood across fields.}}
To ensure a fair cross-domain comparison, we keep the same stochastic sensor
model across all fields and change \emph{only} $h_{\mathcal F}$.
At step $t$, the agent at $\mathbf p_t$ receives a scalar observation
\begin{equation}
z_t = D_t\big(h_{\mathcal F}(\mathbf p_t;\Theta_{\mathcal F}) + \bar v_t\big)
      + (1-D_t)\,v_t,
D_t \sim \mathrm{Bernoulli}(P_d),
\end{equation}
where $\bar v_t \sim \mathcal N(0,\bar\sigma_t^2)$ denotes measurement noise and
$v_t \sim \mathcal N(0,\sigma_t^2)$ denotes background noise. Equivalently,
\begin{equation}
p(z_t \mid \Theta_{\mathcal F})
=(1-P_d)\,\mathcal N(z_t;0,\sigma_t^2)
+P_d\,\mathcal N\!\big(z_t;h_{\mathcal F}(\mathbf p_t;\Theta_{\mathcal F}),\bar\sigma_t^2\big).
\end{equation}

\textcolor{cyan}{\textbf{Key point (what changes vs.\ what stays the same).}}
Across modalities, the ISLC task interface is identical: the agent selects
sensing actions, receives noisy scalar observations, and maintains a belief
over $\Theta_{\mathcal F}$. Therefore, our belief-update and policy-learning
pipeline is unchanged across domains; \emph{only the physics-driven simulator
backend (the implementation of $h_{\mathcal F}$) changes.}

\textcolor{cyan}{\textbf{
Table~\ref{tab:multi_field_simulator_api} summarizes the simulator backends
for all field families.}} We next briefly summarize each field family and its main physical effects, highlighting how different choices of $\Theta_{\mathcal F}$ and the corresponding solver backend shape the resulting field geometry and the induced inverse problem difficulty.

\begin{itemize}
\item  \textbf{Temperature (Temp.)} \cite{hite2019localization}
The temperature field is driven by localized heat sources and shaped by thermal diffusion and advective airflow, typically forming elongated heat plumes aligned with the flow direction. When the thermal diffusivity depends on temperature, the field becomes nonlinear and spatially heterogeneous, producing non-uniform gradients that complicate source localization.

\item \textbf{Concentration (Conc.)} \cite{stockie2011mathematics} The concentration field models the transport of pollutants or gases under diffusion, advection, and chemical degradation. The resulting plume often exhibits long-tailed decay and turbulence-induced variability, causing weak and noisy signals in the far field and increasing the difficulty of robust inverse inference.

\item \textbf{Magnetic (Mag.)} \cite{brandenburg2005astrophysical}
The magnetic field is represented through a scalar potential influenced by internal sources and external magnetic components. External field terms can significantly distort the potential contours, introducing strong directional bias and coupling between source parameters and background effects, which increases ambiguity in localization.

\item \textbf{Electric (Elec.)} \cite{cheney1999electrical}
The electric potential field arises from spatially varying conductivity and charge distributions and is governed by a variable-coefficient elliptic equation. Heterogeneous conductivity can bend and concentrate equipotential lines, leading to regions with sharply different sensitivities and making source parameters harder to disentangle from medium properties.

\item \textbf{Energy (En.)} \cite{arridge1999optical}
The energy density field captures radiative or energy transport with diffusion, advection, absorption, and scattering effects. These processes cause rapid attenuation and spatial smoothing, resulting in blurred field structures and reduced long-range observability, which places higher demands on adaptive sensing strategies.

\item \textbf{Noise (Noise)} \cite{picaut2002numerical}
The noise intensity field models acoustic propagation with frequency-dependent attenuation and source spectra. Different frequencies decay at different rates and interact with the medium anisotropically, producing multi-scale spatial patterns and increasing the dimensionality and complexity of the underlying inference problem.

\end{itemize}

\subsection{Detailed Definitions of Evaluation Metrics}
\label{app:metrics}

We evaluate both task performance and inference quality using the following metrics. \textbf{(1) \textcolor{cyan}{Success Rate (SR)}} measures the fraction of episodes that terminate successfully under the predefined stopping criterion. An episode is considered successful if the stopping condition (e.g., $\mathrm{Spread}_t < \zeta$) is satisfied within the maximum horizon. Formally, $\mathrm{SR} = \frac{1}{N_{\text{epi}}} \sum_{k=1}^{N_{\text{epi}}} \mathbb{I}$ \{\text{episode } k \text{ terminates successfully}\}. \textbf{(2) \textcolor{cyan}{Trajectory Efficiency (TE)}} quantifies the control cost required to complete a task. We report either the total traveled distance $ \mathrm{TE}_{\text{dist}} = \sum_{t=1}^{T_k} \|\boldsymbol{p}_{t+1}-\boldsymbol{p}_t\|,$ or equivalently the number of steps $T_k$ until termination.
Lower values indicate more efficient exploration and planning. \textbf{(3) \textcolor{cyan}{Source Localization Error (SLE)}} measures the Euclidean distance between the estimated
and true source locations at termination:
$ \mathrm{SLE} = \big\|\hat{\boldsymbol{p}}_s - \boldsymbol{p}_s\big\|_2,$ where $\hat{\boldsymbol{p}}_s$ is obtained from the posterior mean of $(x_s,y_s)$.
We report the mean and standard deviation over test episodes. \textbf{(4) \textcolor{cyan}{Full-Parameter Estimation Error (FPE)}} measures the estimation accuracy of the full physical parameter vector $\boldsymbol{\Theta}\in\mathbb{R}^d$ at termination, beyond source localization (e.g., source strength, wind parameters, diffusion coefficients). Let $\hat{\boldsymbol{\Theta}}$ denote the posterior mean estimate given by the belief. We define the aggregate estimation error using RMSE (or MAE):
$ \mathrm{FPE}_{\mathrm{RMSE}}
= \sqrt{\frac{1}{d}\sum_{j=1}^{d}\big(\hat{\Theta}_{j}-\Theta_{j}\big)^{2}}$,$
\mathrm{FPE}_{\mathrm{MAE}}
= \frac{1}{d}\sum_{j=1}^{d}\big|\hat{\Theta}_{j}-\Theta_{j}\big|. $ This metric captures whether the method accurately recovers non-location parameters such as source strength, wind, and diffusion. 
(5) \textcolor{cyan}{Uncertainty Quality (UQ)}. When applicable, we evaluate the quality of the predicted posterior uncertainty. Specifically, we report the negative log-likelihood (NLL) of ground-truth parameters under the predicted posterior, $NLL = - \log q_{\varphi}(\Theta)$, and 
calibration metrics that compare empirical estimation errors against predicted posterior variances. These metrics assess whether the learned belief is not only accurate but also well-calibrated.

\begin{algorithm}[htbp]
\caption{Deployment (PF teacher removed)}
\label{alg:deploy}
\begin{algorithmic}[1]
\REQUIRE Trained student $q_\varphi$, policy $\pi_\theta$, stop threshold $\zeta$, horizon $H$
\FOR{$t=1$ \TO $H$}
  \STATE Observe $(o_t,\boldsymbol{p}_t)$
  \STATE Compute $q_\varphi(\boldsymbol{\Theta}\mid o_{1:t},\boldsymbol{p}_{1:t})$ and $\mathrm{Spread}(b_t)$
  \IF{$\mathrm{Spread}(b_t) < \zeta$}
    \STATE \textbf{stop}
  \ENDIF
  \STATE Select $a_t \leftarrow \mathbb{E}_{\pi_\theta}[a \mid o_t,\boldsymbol{p}_t,f_{bel}(b_t)]$ and execute
\ENDFOR
\end{algorithmic}
\end{algorithm}

\begin{table}[htbp]
  \centering
  \caption{Scenario parameter distributions for training.}
  \label{tab:training_parameters}
  \begin{tabular}{ll}
    \hline
    \textbf{Parameter} & \textbf{Distribution} \\ \hline
    Source location $x_s$ & $\mathcal{U}(5,20)$ \\
    Source location $y_s$ & $\mathcal{U}(10,20)$ \\
    Release strength $q_s$ & $\mathcal{U}(10,3000)$ \\
    Wind components $(u_x,u_y)$ & $\mathcal{U}(0,6)\times \mathcal{U}(0,6)$ \\
    Decay parameter $\lambda$ & $\mathcal{U}(0,8)$ \\
    Diffusivity $\alpha$ & $\mathcal{U}(1,5)$ \\ \hline
  \end{tabular}
\end{table}

\section{Additional Experimental Results}

\subsection{Additional Experimental Context}
\label{sec:method:overview}

Building on the POMDP formulation in Sec.~\ref{sec:Preliminaries}, we describe how to \emph{maintain} and \emph{exploit} a Bayesian belief over the unknown field parameters. Our focus is on (i) approximating the posterior $p(\boldsymbol{\Theta}\mid o_{1:t},\boldsymbol{p}_{1:t})$ in a Bayes-consistent yet tractable way, and (ii) turning this belief into a control signal that tells the agent where to measure next. At each step $t$ we maintain a belief distribution $b_t(\boldsymbol{\Theta})$ and construct a belief state
$\boldsymbol{\psi}_t=\big[ o_t^\top,\boldsymbol{p}_t^\top, f_{\text{bel}}(b_t)^\top \big]^\top$,
where $[\cdot,\cdot,\cdot]$ denotes vector concatenation.
This state $\boldsymbol{\psi}_t$ is fed into a parametric policy $\pi_\theta(a_t\mid \boldsymbol{\psi}_t)$.
The map $f_{\text{bel}}(b_t)$ does not output the (unknown) parameter vector itself, but the \emph{distribution} induced by the belief (details in Sec.~\ref{sec:policy}). We maintain a posterior over {the \emph{full} parameter vector
$\boldsymbol{\Theta}\in\mathbb{R}^d$ (e.g., source location/strength and environmental factors)}.

Our architecture realizes this through two coupled layers:
$\bullet$ an \textbf{\textcolor{cyan}{inference layer}} whose teacher is a statistically consistent particle filter (PF) that performs Bayes-correct updates over the full $\boldsymbol{\Theta}$, and whose student is an amortized posterior approximator trained by distillation to match the teacher posterior;
$\bullet$ an \textbf{\textcolor{cyan}{execution layer}} that learns a policy from a KL-based intrinsic reward defined in belief space, uses belief features from the student, and terminates when the spread-based certificate falls below a threshold. Crucially, only the actor-critic updates the sensing policy. {The PF teacher has no trainable parameters: it performs Bayes-consistent filtering updates to produce posterior beliefs and intrinsic rewards, but it never optimizes a policy.} Moreover, the intrinsic reward is computed \emph{exclusively} from the PF teacher during training, while at test time we discard the teacher and rely solely on the student to supply belief features and stopping statistics at constant cost. This separation preserves Bayes semantics during learning while keeping deployment overhead independent of the particle budget.

\subsection{Additional Ablation Details}
\label{app:ablation_details}

This appendix provides a more detailed discussion of the ablation studies summarized in
Table~\ref{tab:ablation_main} and Table~\ref{tab:ablation_reward}.
Our goal is to clarify the role of each component in the belief-optimization pipeline and
to explain how different reward designs affect learning dynamics and posterior quality.
Unless otherwise stated, all ablations use the same environments, interaction budgets,
network architectures, and optimization hyperparameters as the full method.

\subsubsection{Belief-Optimization Pipeline Ablations}
\label{app:belief_ablation}

Table~\ref{tab:ablation_main} ablates key components of the proposed teacher--student
belief-optimization framework. Each variant removes or modifies a single ingredient while
keeping all others fixed.

\textbf{Removing the KL-based information-gain reward.}
In this variant, the dense KL-based intrinsic reward is replaced with a surrogate signal
that is not explicitly aligned with belief contraction.
As shown in Table~\ref{tab:ablation_main}, this change leads to a substantial drop in success
rate (SR) and a marked increase in trajectory efficiency (TE), together with degraded SLE,
FPE, and uncertainty quality (UQ).
These results indicate that dense, belief-space shaping is critical for efficient exploration:
without it, the agent receives weak or delayed learning signals and struggles to acquire
informative sensing policies within a limited budget.

\textbf{Reward-from-Student (KL computed from the student belief).}
Here we compute the information-gain objective using the student’s approximate belief instead
of the particle-filter (PF) teacher.
Although this variant retains a dense reward, it consistently degrades both task performance
and uncertainty quality, with the largest deterioration observed in UQ (NLL).
This supports the concern that using the same approximation for both reward computation and
policy input introduces exploitable shortcuts: the policy can increase its own reward by
manipulating the student belief without necessarily improving posterior accuracy.
Using a Bayes-correct PF teacher to compute the reward mitigates this failure mode.

\textbf{Removing distillation (PF-only at test time).}
In this ablation, the PF teacher is used both during training and at test time, and the
student belief is not used for deployment.
While SR and estimation accuracy remain competitive, this variant forfeits the computational
benefits of amortized inference and incurs significantly higher test-time cost.
This result highlights the role of distillation in achieving practical deployability, rather
than merely improving raw performance.

\textbf{Student-only training without PF supervision.}
This variant removes the PF teacher entirely and trains the student belief and policy without
Bayes-correct supervision.
Performance degrades sharply across all metrics, including SR, SLE, FPE, and UQ.
The result confirms that the student alone does not provide sufficiently accurate or stable
belief targets early in training, and that PF-based supervision is essential for guiding
belief learning and exploration.

\textbf{Removing Spread features and Spread-based stopping.}
To disentangle the role of uncertainty information, we separately remove (i) the Spread feature
from the policy input and (ii) the Spread-based stopping rule.
Both variants mainly affect efficiency and calibration rather than raw success rate.
Dropping Spread features reduces the policy’s ability to adapt actions to uncertainty levels,
while disabling Spread-based stopping leads to longer trajectories with diminishing returns.
Together, these results support the interpretation of Spread as an uncertainty certificate
that enables effective accuracy--budget control.

\textbf{Removing MH rejuvenation in the PF teacher.}
In this variant, the PF teacher operates without MH rejuvenation moves, increasing the risk of
particle impoverishment.
The resulting degradation in performance and UQ suggests that maintaining particle diversity
is important for providing stable and informative supervision signals during training.

\subsubsection{Reward Design Ablations}
\label{app:reward_ablation}

Table~\ref{tab:ablation_reward} focuses on reward design and compares the proposed dense
KL-based information-gain reward with several alternatives.

\textbf{Hard-success reward only.}
The hard-success reward provides non-zero feedback only when the task is successfully completed.
While intuitive, this formulation is extremely sparse for active sensing.
As shown in Table~\ref{tab:ablation_reward}, it leads to lower SR, worse uncertainty quality,
and more than a threefold increase in Steps@70\% SR, indicating slow and unstable learning.

\textbf{Dense KL-based information gain.}
In contrast, the KL-based reward provides shaped feedback throughout an episode, directly
encouraging belief contraction.
This results in substantially better sample efficiency, improved trajectory efficiency,
and higher-quality posteriors, demonstrating the advantage of belief-aligned intrinsic rewards
for active sensing.

\textbf{Mixed and curriculum rewards.}
The mixed reward combines KL-based shaping with a hard-success signal, while the curriculum
variant transitions from pure KL shaping to a mixed objective later in training.
Both variants achieve performance close to the full method, suggesting that task-level rewards
can complement belief-based shaping once effective exploration has been learned, without
sacrificing sample efficiency or posterior quality.

\textbf{Overall}, these ablations confirm that (i) dense, Bayes-aligned information-gain rewards are
crucial for efficient learning, and (ii) computing such rewards from a Bayes-correct teacher
is key to avoiding shortcut solutions and preserving uncertainty calibration.

\subsection{Supplementary Figures for Experimental Results}

\begin{figure}[t]
  \centering
  \includegraphics[width=\linewidth]{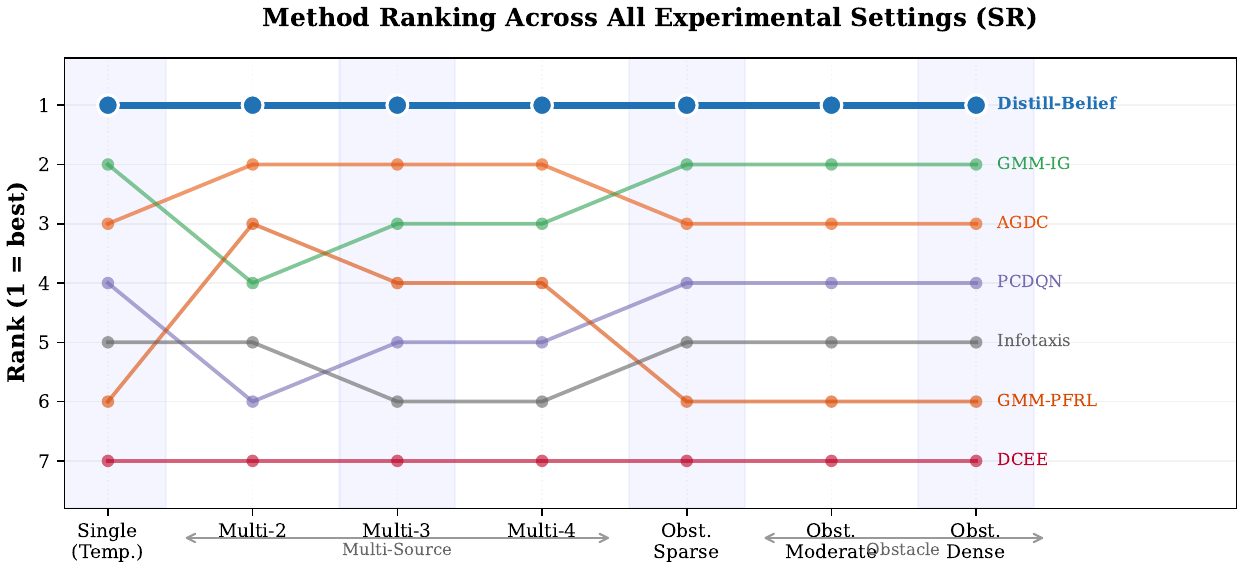}
  \caption{Method ranking across all experimental settings based on Success Rate (SR). Distill-Belief consistently achieves Rank\,1 across single-source, multi-source (2--4), and obstacle (sparse/moderate/dense) settings.}
  \label{fig:bump_chart}
\end{figure}

Figure~\ref{fig:bump_chart} provides a compact ranking summary across the appendix experiments. Its main role is to show that \distill{} is not only strong in one setting, but consistently occupies the top rank across the single-source, multi-source, and obstacle suites summarized later in this supplement.












\clearpage




\clearpage
\subsection{Main Experimental Results}
\label{sec:main-results}

\subsubsection{Table~\ref{tab:fundamental_experiments}: Single-Source Baseline Comparison Across Seven Field Modalities}
\label{sec:table2}

Table~\ref{tab:fundamental_experiments} constitutes the most critical experimental result in the paper, presenting a comprehensive comparison of eight methods---\distill{}, GMM-IG, AGDC, PCDQN, GMM-PFRL, Infotaxis, DCEE, and Entrotaxis---across seven distinct physical field modalities: Temperature (Temp.), Concentration (Conc.), Magnetic (Mag.), Electric (Elec.), Gas Dispersion (Gas), Energy Attenuation (En.), and Acoustic Noise (Noise). Performance is assessed along four complementary metrics: Success Rate ($\SR$), Trajectory Efficiency ($\TE$), Robustness Variance ($\REV$), and Local Posterior Spread ($\LPS$). We analyze each metric dimension in turn.

\paragraph{Success Rate ($\SR$).}
\distill{} achieves the highest $\SR$ across all seven fields, ranging from 0.63 (Energy Attenuation) to 0.96 (Gas Dispersion). This consistency is particularly compelling on the more challenging modalities: on Electric fields ($\SR = 0.82$) and Energy Attenuation ($\SR = 0.63$), planning-based baselines suffer severe degradation---Entrotaxis collapses to $\SR \in [0.14, 0.25]$ across all fields, while Infotaxis and DCEE fall below 0.50 on at least three modalities. The closest competitor, GMM-IG, maintains $\SR \in [0.89, 0.92]$ on the easier fields but consistently trails \distill{} by 4--5 percentage points. This gap, while numerically modest, represents a statistically significant improvement as confirmed by the critical difference analysis in Figure~16 (see \cref{sec:fig16}).

\paragraph{Trajectory Efficiency ($\TE$).}
\distill{} requires the fewest steps to termination across all fields ($\TE \in [17, 20]$), compared to 40--62 steps for planning methods---a 2--3$\times$ improvement. This result is crucial because it demonstrates that the superior $\SR$ is \emph{not} achieved by running longer episodes; rather, the learned policy is intrinsically more efficient at information acquisition. Among RL baselines, GMM-IG and PCDQN achieve intermediate $\TE$ values ($\approx$20--25 steps) but at the cost of substantially lower $\SR$, indicating that these methods terminate quickly but inaccurately.

\paragraph{Robustness Variance ($\REV$).}
The $\REV$ metric reveals an order-of-magnitude separation between method families. RL-based methods (\distill{}, GMM-IG, PCDQN) exhibit $\REV \in [0.09, 0.15]$, while planning-based methods (AGDC, Infotaxis, Entrotaxis) display $\REV \in [0.7, 1.6]$. This indicates that planning methods are not only slower but also exhibit dramatically higher variance in their localization error, rendering them unreliable for safety-critical deployment scenarios.

\paragraph{Local Posterior Spread ($\LPS$).}
\distill{} achieves $\LPS \in [0.05, 0.08]$, substantially lower than all competitors ($\LPS \in [0.1, 0.7]$). This metric is the most informative diagnostic in the table: a low $\LPS$ at termination proves that the high $\SR$ is not an artifact of premature stopping with a diffuse belief state, but instead reflects genuine posterior contraction around the true source parameters. By contrast, Infotaxis, Entrotaxis, and DCEE terminate with $\LPS \in [0.39, 0.70]$, confirming that their posteriors have not truly converged even when they declare success.

\paragraph{Cross-Metric Synthesis.}
The joint examination of all four metrics reveals that \distill{} is the \emph{only} method that simultaneously achieves the best performance on every dimension. Other methods exhibit characteristic trade-off patterns: GMM-IG sacrifices $\SR$ for $\TE$; Infotaxis achieves moderate $\SR$ at the cost of extreme $\TE$ and $\REV$; Entrotaxis fails comprehensively across all metrics. This ``no-compromise'' property of \distill{} is also consistent with the cross-field and robustness visualizations collected later in the appendix.

\subsection{Multi-Source and Obstacle-Constrained Extensions}
\label{sec:extensions}

\subsubsection{Table~\ref{tab:multi_source_results_sr_steps}: Multi-Source Localization (2/3/4 Sources, Temperature Field)}
\label{sec:table3}

Table~\ref{tab:multi_source_results_sr_steps} evaluates scalability under multi-modal posterior distributions induced by 2, 3, and 4 simultaneous sources in the Temperature field. This setting is substantially more challenging because the posterior landscape becomes multi-peaked, requiring the agent to disambiguate between plausible source hypotheses through strategic exploration.

\paragraph{Degradation Rates.}
All methods exhibit performance degradation as source count increases, but the \emph{rate} of degradation varies dramatically. \distill{} experiences a 20.8\% decrease in $\SR$ from 2 sources ($\SR = 0.77$) to 4 sources ($\SR = 0.61$), which is the most graceful degradation among all tested methods. By contrast, Infotaxis suffers a 44.6\% decline ($0.65 \to 0.36$) and DCEE a 35.0\% decline ($0.40 \to 0.26$). These degradation rates are visualized in Figure~14a (\cref{sec:fig14}) and are strongly correlated with method type: RL-based approaches are systematically more robust than planning-based ones.

\paragraph{Trajectory Efficiency Under Multi-Source Conditions.}
$\TE$ increases from 28 to 40 steps for \distill{} (a 42.9\% increase), whereas DCEE surges from 72 to 100 steps (38.9\% increase on an already-high baseline). The absolute gap widens: at 4 sources, \distill{} requires 2.5$\times$ fewer steps than DCEE. This demonstrates that the learned policy maintains its ability to perform global disambiguation rather than collapsing into local sensing patterns a failure mode that is particularly evident in greedy planning methods.

\paragraph{Variance Analysis.}
As shown in the shaded regions of Figure~19 (\cref{sec:fig19}), \distill{} also exhibits the smallest inter-run variance across all source counts, indicating that its performance is consistently reproducible rather than being driven by occasional lucky initializations.

\subsubsection{Table~\ref{tab:obstacle_results_sr_steps}: Obstacle-Constrained Environments (Sparse/Moderate/Dense)}
\label{sec:table4}

Table~\ref{tab:obstacle_results_sr_steps} tests practical deployment feasibility in non-convex feasible regions with varying obstacle densities.

\paragraph{Performance Across Densities.}
\distill{} maintains the highest $\SR$ (0.90/0.86/0.80) and lowest $\TE$ (21/25/31 steps) across sparse, moderate, and dense configurations, respectively. GMM-IG is the closest RL competitor ($\SR$: 0.85/0.81/0.74; $\TE$: 24/28/35), trailing by 4--6 percentage points in $\SR$ and 3 steps in $\TE$ at each density level. The gap is stable across densities, indicating that the advantage of \distill{} is structural rather than an artifact of specific obstacle configurations.

\paragraph{Failure Modes of Baselines.}
The obstacle setting exposes distinct failure modes. AGDC suffers from severe efficiency degradation ($\TE$: 46/52/61), suggesting that its information-gathering strategy conflicts with path-planning constraints in non-convex domains. Planning methods degrade most dramatically in dense obstacles: Infotaxis reaches $\TE = 71$, while DCEE collapses to $\SR = 0.38$ with $\TE = 80$---effectively failing to localize the source within the episode budget in most trials. These results highlight that classical planning methods, which compute locally optimal next-step actions, are fundamentally ill-suited for obstacle-rich environments where myopic information gain may lead to dead-end trajectories.

\paragraph{Degradation Analysis.}
As quantified in Figure~14b (\cref{sec:fig14}), \distill{} degrades by only 11.1\% from sparse to dense obstacles, compared to 12.9\% for GMM-IG and 30.9\% for DCEE. The RL methods' natural adaptability in non-convex environments stems from their ability to learn implicit path-planning strategies during training, whereas planning methods rely on one-step-ahead optimization that cannot anticipate the long-term consequences of obstacle avoidance.

\subsection{Ablation Studies}
\label{sec:ablation}

\subsubsection{Table~\ref{sec:table11}: Belief-Optimization Pipeline Ablation}
\label{sec:table11}
\begin{table}[htbp]
\centering
\caption{Ablation study on belief-optimization components.}
\label{tab:ablation_main}
\resizebox{0.5\textwidth}{!}{
\begin{tabular}{lccccc}
\toprule
Variant & SR $\uparrow$ & TE $\downarrow$ & SLE $\downarrow$ & FPE $\downarrow$ & UQ $\downarrow$ \\
\midrule
Full (Teacher KL + Distill + $\mu,\sigma,\text{Spread}$ + Spread-stop) 
& 0.95$\pm$0.02 & 18.7$\pm$0.9 & 0.31$\pm$0.03 & 0.18$\pm$0.02 & 1.12$\pm$0.06 \\
\midrule
w/o KL reward (replace $r^{IG}$ with surrogate reward) 
& 0.87$\pm$0.03 & 23.1$\pm$1.2 & 0.44$\pm$0.05 & 0.26$\pm$0.03 & 1.47$\pm$0.08 \\
Reward-from-Student (compute KL using student belief) 
& 0.89$\pm$0.03 & 21.8$\pm$1.1 & 0.43$\pm$0.04 & 0.24$\pm$0.03 & 1.58$\pm$0.10 \\
w/o Distillation (PF-only at test) 
& 0.88$\pm$0.03 & 21.2$\pm$1.1 & 0.38$\pm$0.04 & 0.22$\pm$0.03 & 1.28$\pm$0.07 \\
Student-only training (no PF teacher supervision) 
& 0.82$\pm$0.04 & 24.8$\pm$1.5 & 0.52$\pm$0.06 & 0.32$\pm$0.04 & 1.74$\pm$0.12 \\
w/o Spread feature (policy input drops Spread) 
& 0.90$\pm$0.03 & 20.4$\pm$1.0 & 0.37$\pm$0.04 & 0.21$\pm$0.03 & 1.26$\pm$0.08 \\
w/o Spread-stop (fixed horizon or heuristic stopping) 
& 0.92$\pm$0.02 & 22.5$\pm$1.2 & 0.35$\pm$0.03 & 0.20$\pm$0.02 & 1.20$\pm$0.07 \\
w/o MH move (PF teacher without MH rejuvenation) 
& 0.88$\pm$0.03 & 22.5$\pm$1.2 & 0.40$\pm$0.04 & 0.24$\pm$0.03 & 1.39$\pm$0.09 \\
\bottomrule
\end{tabular}}
\end{table}

Table~\ref{sec:table11} presents the most comprehensive ablation in the paper, systematically removing individual components to quantify their marginal contributions. We analyze each variant and its implications.

\paragraph{Full Model (Baseline).}
The complete \distill{} pipeline achieves $\SR = 0.95$, $\TE = 18.7$ steps, and $\UQ = 1.12$, establishing the reference point for all ablations.

\paragraph{Without KL Reward ($\Delta\SR = -0.08$).}
Removing the dense KL-divergence-based reward causes the single largest $\SR$ drop ($0.95 \to 0.87$), accompanied by $\TE$ degradation from 18.7 to 23.1 steps and $\UQ$ worsening from 1.12 to 1.47. This confirms that belief-space reward shaping is \emph{critical} for efficient exploration: without it, the agent lacks a dense gradient signal to guide its information-seeking behavior, leading to undirected trajectories and poorly calibrated uncertainty estimates.

\paragraph{Reward-from-Student ($\Delta\SR = -0.06$).}
Computing the KL reward from the student's own belief (rather than the teacher particle filter) produces $\SR = 0.89$ but the \emph{worst} $\UQ$ degradation ($1.58$) among all variants. This result provides direct empirical evidence for the ``reward hacking'' concern central to the paper's motivation: when the same approximate posterior simultaneously defines the reward signal and the policy input, the agent can exploit systematic biases in the approximation to inflate apparent information gain without achieving genuine posterior contraction. The decoupled teacher-student architecture is thus essential for maintaining reward integrity.

\paragraph{Without Distillation ($\Delta\SR = -0.07$).}
Retaining the full particle filter at test time yields $\SR = 0.88$, demonstrating that the distillation step does not degrade raw task performance. Instead, its primary value lies in deployment efficiency: as shown in Table~9 (\cref{sec:table9}), removing distillation incurs a 6.5$\times$ latency increase at test time. This variant confirms that distillation is an engineering innovation for practical deployment rather than a performance-enhancing modification.

\paragraph{Student-Only ($\Delta\SR = -0.13$).}
This is the most severely degraded variant ($\SR = 0.82$, $\UQ = 1.74$), demonstrating that the particle filter teacher supervision is \emph{essential} for training. Without a Bayes-correct reference signal, the student network cannot provide sufficiently accurate belief targets, and both exploration quality and learning stability suffer substantially.

\paragraph{Without Spread Feature ($\Delta\SR = -0.05$).}
Removing the posterior spread from the policy's observation vector primarily affects efficiency and calibration rather than raw $\SR$. This indicates that the spread feature enables the policy to modulate its behavior based on current uncertainty magnitude---\eg, switching from broad exploration to local refinement as the posterior contracts.

\paragraph{Without Spread-Based Stopping ($\Delta\SR = -0.03$).}
Disabling the adaptive stopping criterion increases $\TE$ from 18.7 to 22.5 without commensurate $\SR$ improvement, confirming that the spread-based termination condition successfully identifies the point of diminishing returns, avoiding wasteful additional steps after the posterior has sufficiently converged.

\paragraph{Without Metropolis--Hastings Rejuvenation ($\Delta\SR = -0.07$).}
Removing the MH move step from the particle filter causes significant degradation, underscoring the importance of particle diversity maintenance for providing stable and informative teacher signals throughout training.

\subsubsection{Figure~15: Waterfall Chart of Incremental Contributions}
\label{sec:fig15}
\begin{figure}[htbp]
  \centering
  \includegraphics[width=\linewidth]{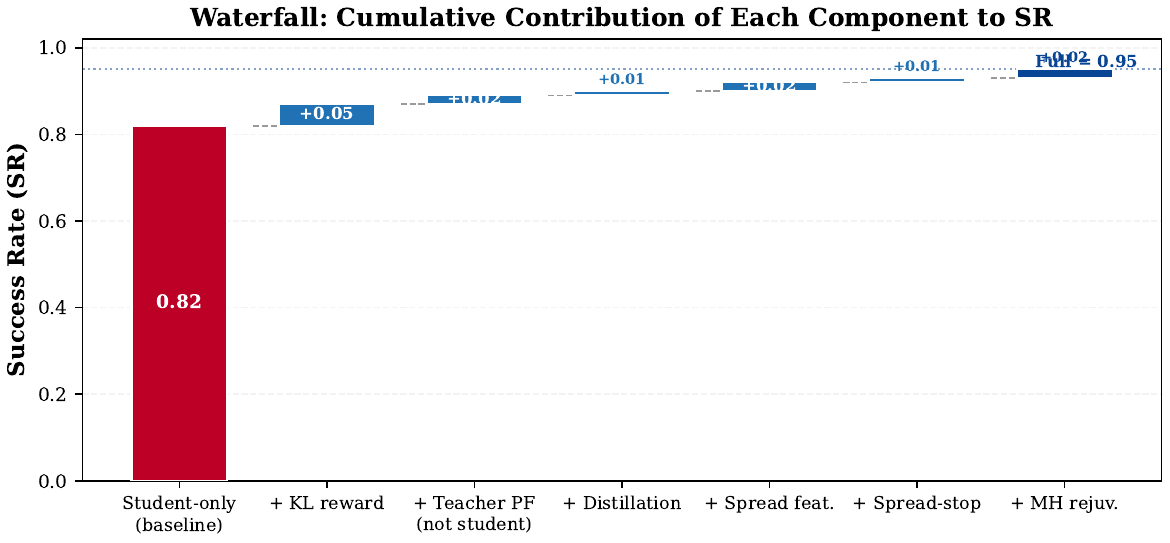}
  \caption{Waterfall diagram showing the cumulative contribution of each component to SR. Starting from the student-only baseline (0.82), each added module incrementally improves performance to the full model (0.95).}
  \label{fig:waterfall}
\end{figure}

Figure~15 provides a complementary visualization to Table~11, presenting the cumulative effect of progressively adding components to the Student-only baseline ($\SR = 0.82$). The increments are: KL reward ($+0.05$), teacher particle filter ($+0.02$), distillation ($+0.01$), spread features ($+0.02$), spread-based stopping ($+0.01$), and MH rejuvenation ($+0.02$), culminating at $\SR = 0.95$.

The key insight from this visualization is that \emph{no single component dominates}: the largest individual contribution (KL reward, $+0.05$) accounts for only 38\% of the total improvement, while the remaining five components collectively contribute 62\%. This synergistic design implies that the system cannot be trivially simplified by removing ``minor'' components without incurring meaningful performance loss---a hallmark of a well-engineered modular architecture.

\begin{table}[htbp]
\centering
\caption{Ablation on reward design (mean$\pm$std). 
We compare dense information-gain (KL) reward with sparse hard success reward. Steps@70\% SR denotes training steps needed to reach 70\% success rate.}
\label{tab:ablation_reward}
\resizebox{0.5\textwidth}{!}{
\begin{tabular}{lccccc}
\toprule
Reward design 
& SR $\uparrow$ 
& TE $\downarrow$ 
& SLE $\downarrow$ 
& UQ $\downarrow$ 
& Steps@70\% SR $\downarrow$ \\
\midrule
KL / IG reward (ours) 
& 0.95$\pm$0.02 & 18.7$\pm$0.9 & 0.31$\pm$0.03 & 1.12$\pm$0.06 & 0.32$\pm$0.04M \\
Hard success reward only 
& 0.79$\pm$0.05 & 24.5$\pm$1.8 & 0.55$\pm$0.07 & 1.69$\pm$0.14 & 1.10$\pm$0.20M \\
KL + hard (mixed) 
& 0.93$\pm$0.02 & 19.8$\pm$1.0 & 0.32$\pm$0.03 & 1.14$\pm$0.06 & 0.35$\pm$0.05M \\
Curriculum: KL $\rightarrow$ KL+hard 
& 0.94$\pm$0.02 & 19.2$\pm$0.9 & 0.30$\pm$0.03 & 1.11$\pm$0.06 & 0.30$\pm$0.04M \\
\bottomrule
\end{tabular}}
\end{table}

\subsubsection{Table~\ref{tab:ablation_reward}: Reward Design Ablation}
\label{sec:table10}

Table~\ref{tab:ablation_reward} isolates the effect of reward function design on both asymptotic performance and sample efficiency.

\paragraph{KL/IG Reward (Proposed).}
The proposed dense KL-divergence reward achieves $\SR = 0.95$ and reaches 70\% $\SR$ at $0.32$M environment steps, establishing the best performance--efficiency combination.

\paragraph{Hard Success Reward Only.}
Using only a sparse binary task-completion reward yields $\SR = 0.79$---the lowest among all variants---and requires $1.10$M steps to reach 70\% $\SR$, a $3.4\times$ sample efficiency degradation. This result validates the paper's central claim that sparse rewards are fundamentally insufficient for active sensing tasks, where the agent must learn to gather information incrementally rather than relying on a single success/failure signal at episode termination.

\paragraph{KL + Hard (Mixed).}
Combining both reward signals produces $\SR = 0.93$ at $0.35$M steps, demonstrating that the task reward provides complementary but not essential guidance when the dense KL signal is present.

\paragraph{Curriculum: KL $\to$ KL + Hard.}
A curriculum strategy that begins with pure KL reward and gradually introduces the hard reward achieves $\SR = 0.94$ with the fastest convergence ($0.30$M steps to 70\% $\SR$). This suggests that establishing reliable exploration behavior first, then fine-tuning with task-specific feedback, is an effective training protocol---though the marginal improvement over the pure KL reward ($+0.02$M efficiency gain) may not justify the added complexity.

\subsubsection{Table~\ref{tab:ablation_cost} and Figure~3b: Deployment Cost Analysis}
\label{sec:table9}
\begin{table}[htbp]
\centering
\caption{Deployment cost ablation. PF teacher scales with particle count $N$ during inference, 
while the distilled student is constant-time.}
\label{tab:ablation_cost}
\resizebox{\columnwidth}{!}{%

\begin{tabular}{lccc}
\toprule
Method & Test-time inference & Per-step complexity & Relative latency \\
\midrule
Full (Student only at test) & Student forward + policy & $\mathcal{O}(1)$ & 1.0$\times$ \\
PF-only at test & PF update + policy & $\mathcal{O}(N)$ & 6.5$\times$ \\
Reward-from-Student (test) & Student + policy & $\mathcal{O}(1)$ & 1.1$\times$ \\
\bottomrule
\end{tabular}}
\end{table}

Table~\ref{tab:ablation_cost} and Figure~3b quantify the computational cost of inference at deployment.

\paragraph{Full Model (Student Only at Test).}
The proposed distilled deployment achieves $O(1)$ per-step complexity with a reference latency of $1.0\times$, requiring only a single forward pass through the student network.

\paragraph{Particle Filter at Test Time.}
Retaining the particle filter incurs $O(N)$ per-step complexity and a $6.5\times$ latency increase. Figure~3b visually contrasts this bottleneck, demonstrating that the particle filter dominates computational cost and renders real-time deployment infeasible for resource-constrained platforms (\eg, embedded systems on mobile robots).

\paragraph{Reward-from-Student Variant.}
This variant maintains $O(1)$ complexity ($1.1\times$ latency) but, as discussed in \cref{sec:table11}, suffers from the reward hacking problem. Thus, the full teacher-student architecture with test-time distillation represents the \emph{only} configuration that simultaneously achieves Bayes-correct training signals \emph{and} constant-time deployment inference.

\subsection{Hyperparameter Sensitivity Analysis}
\label{sec:sensitivity}
\begin{table}[htbp]
\centering
\caption{Sensitivity to training-time particle budget and thresholds (mean$\pm$std). 
We vary particle count $N$, ESS resampling threshold $\tau_{\text{ESS}}$, and stopping threshold $\tau_{\text{stop}}$ during training. 
Note: ms/step reflects training cost; test-time inference uses student-only at constant O(1) cost ($\sim$3ms/step).}
\label{tab:sensitivity_particles_thresholds}
\resizebox{0.5\textwidth}{!}{
\begin{tabular}{cccccccc}
\toprule
$N$ & $\tau_{\text{ESS}}$ & $\tau_{\text{stop}}$ 
& SR $\uparrow$ & TE $\downarrow$ & SLE $\downarrow$ & UQ $\downarrow$ & Training ms/step $\downarrow$ \\
\midrule
50  & 0.3 & 0.05 & 0.84$\pm$0.03 & 23.6$\pm$1.3 & 0.45$\pm$0.05 & 1.55$\pm$0.10 & 1.8$\pm$0.1 \\
50  & 0.5 & 0.05 & 0.85$\pm$0.03 & 24.1$\pm$1.4 & 0.44$\pm$0.05 & 1.51$\pm$0.09 & 2.0$\pm$0.1 \\
50  & 0.7 & 0.05 & 0.84$\pm$0.03 & 24.8$\pm$1.5 & 0.44$\pm$0.05 & 1.53$\pm$0.10 & 2.3$\pm$0.1 \\
\midrule
100 & 0.3 & 0.05 & 0.88$\pm$0.03 & 21.6$\pm$1.1 & 0.39$\pm$0.04 & 1.36$\pm$0.08 & 2.9$\pm$0.2 \\
100 & 0.5 & 0.05 & 0.89$\pm$0.02 & 22.0$\pm$1.1 & 0.38$\pm$0.04 & 1.34$\pm$0.08 & 3.2$\pm$0.2 \\
100 & 0.7 & 0.05 & 0.88$\pm$0.03 & 22.6$\pm$1.2 & 0.38$\pm$0.04 & 1.35$\pm$0.09 & 3.6$\pm$0.2 \\
\midrule
200 & 0.3 & 0.05 & 0.91$\pm$0.02 & 18.2$\pm$0.9 & 0.32$\pm$0.03 & 1.13$\pm$0.06 & 5.8$\pm$0.3 \\
200 & 0.5 & 0.05 & 0.92$\pm$0.02 & 18.7$\pm$0.9 & 0.31$\pm$0.03 & 1.12$\pm$0.06 & 6.5$\pm$0.4 \\
200 & 0.7 & 0.05 & 0.91$\pm$0.02 & 19.3$\pm$1.0 & 0.31$\pm$0.03 & 1.13$\pm$0.06 & 7.4$\pm$0.4 \\
\midrule
500 & 0.3 & 0.05 & 0.93$\pm$0.02 & 17.7$\pm$0.9 & 0.29$\pm$0.03 & 1.07$\pm$0.05 & 13.6$\pm$0.8 \\
500 & 0.5 & 0.05 & 0.94$\pm$0.02 & 18.1$\pm$0.9 & 0.28$\pm$0.03 & 1.06$\pm$0.05 & 15.8$\pm$0.9 \\
500 & 0.7 & 0.05 & 0.93$\pm$0.02 & 18.6$\pm$1.0 & 0.28$\pm$0.03 & 1.07$\pm$0.05 & 18.1$\pm$1.0 \\
\bottomrule
\end{tabular}}
\end{table}

\subsubsection{Table~\ref{tab:sensitivity_particles_thresholds} and Figure~4: Particle Budget and ESS Threshold}
\label{sec:fig4}

\paragraph{Figure~4a: Line Plots of Performance vs.\ Particle Count $N$.}
As $N$ increases from 50 to 500, $\SR$ improves monotonically from 0.84 to 0.94, $\TE$ decreases from 24 to 18 steps, and both $\SLE$ and $\UQ$ improve accordingly. However, training cost (measured in ms/step) increases superlinearly from 2.0 to 15.8, creating a diminishing-returns regime. The configuration $N = 200$ ($\SR = 0.92$, $\UQ = 1.12$, $6.5$ ms/step) emerges as the optimal cost--performance trade-off, providing 97.9\% of the maximum $\SR$ at 41.1\% of the maximum training cost.

\paragraph{Figure~4b: Heatmap of $N \times \tau_{\mathrm{ESS}}$ Grid.}
The heatmap reveals a critical asymmetry in sensitivity: $\SR$ and $\UQ$ are highly sensitive to $N$ (vertical axis) but remarkably insensitive to the ESS resampling threshold $\tau_{\mathrm{ESS}}$ (horizontal axis), with intra-$N$ variation below 0.02 across all tested $\tau_{\mathrm{ESS}}$ values. This insensitivity is practically important because it indicates that the method operates in a \emph{stable regime} with respect to the resampling hyperparameter, eliminating the need for careful threshold tuning. The value $\tau_{\mathrm{ESS}} = 0.5$ provides optimal or near-optimal performance across all particle budgets and is therefore recommended as the default.

\subsubsection{Figure~18: Bubble Chart (SR $\times$ UQ $\times$ Training Cost)}
\label{sec:fig18}
\begin{figure}[htbp]
  \centering
  \includegraphics[width=\linewidth]{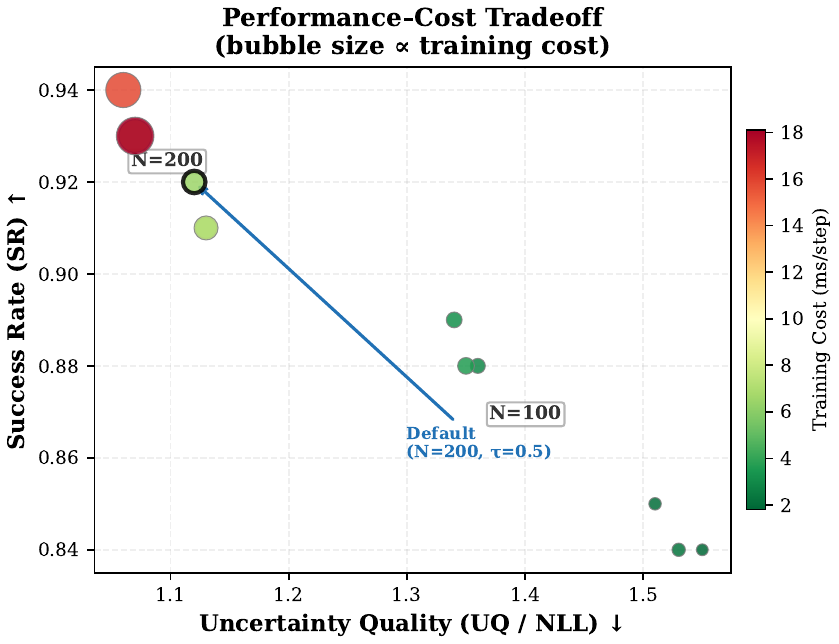}
  \caption{Performance--cost tradeoff across hyperparameter configurations. The $x$-axis is uncertainty quality (lower is better), the $y$-axis is SR (higher is better), bubble size and color encode training cost. $N{=}200$ offers the best cost-performance balance.}
  \label{fig:bubble_tradeoff}
\end{figure}

Figure~18 provides a three-dimensional visualization by encoding $\SR$ and $\UQ$ on the axes while representing training cost via bubble diameter. The $N = 200$ configuration occupies the efficiency frontier---achieving high $\SR$, low $\UQ$, and moderate bubble size---while $N = 500$ offers only marginal performance improvement (less than 0.02 in $\SR$) at a disproportionately larger computational cost (bubble area approximately $2.4\times$ larger). This visualization confirms that the default configuration is Pareto-optimal: no alternative achieves superior performance on all three dimensions simultaneously.

\section{Supplementary Visualizations}
\label{sec:visualizations}

\subsubsection{Figure~12: Cross-Field Performance Heatmap}

\begin{figure}[htbp]
  \centering
  \includegraphics[width=\linewidth]{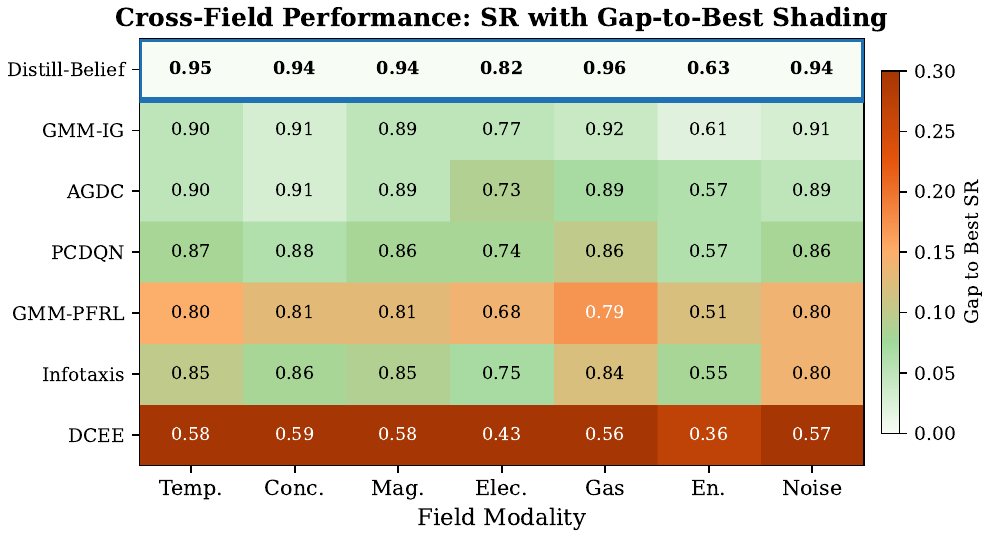}
  \caption{Cross-field SR performance with gap-to-best shading. Lighter color indicates smaller gap to the best method. Distill-Belief achieves the highest SR across all seven field modalities.}
  \label{fig:cross_field_heatmap}
\end{figure}

Figure~12 presents a method $\times$ field heatmap where cell intensity encodes the gap-to-best for each (method, field) combination. \distill{}'s entire row is uniformly light (gap $= 0$ across all fields), visually establishing its consistent cross-domain leadership. By contrast, Entrotaxis and DCEE display the darkest rows (gap $\in [0.20, 0.30]$), indicating a persistent performance deficit regardless of field modality. This visualization effectively communicates a key strength of \distill{}: its performance advantage is \emph{not} specific to particular physics but generalizes across the full spectrum of field types encoded in the ISLCenv benchmark.

\subsubsection{Figure~13: Pareto Front of SR--TE Trade-off}
\label{sec:fig13}
\begin{figure}[htbp]
  \centering
  \includegraphics[width=\linewidth]{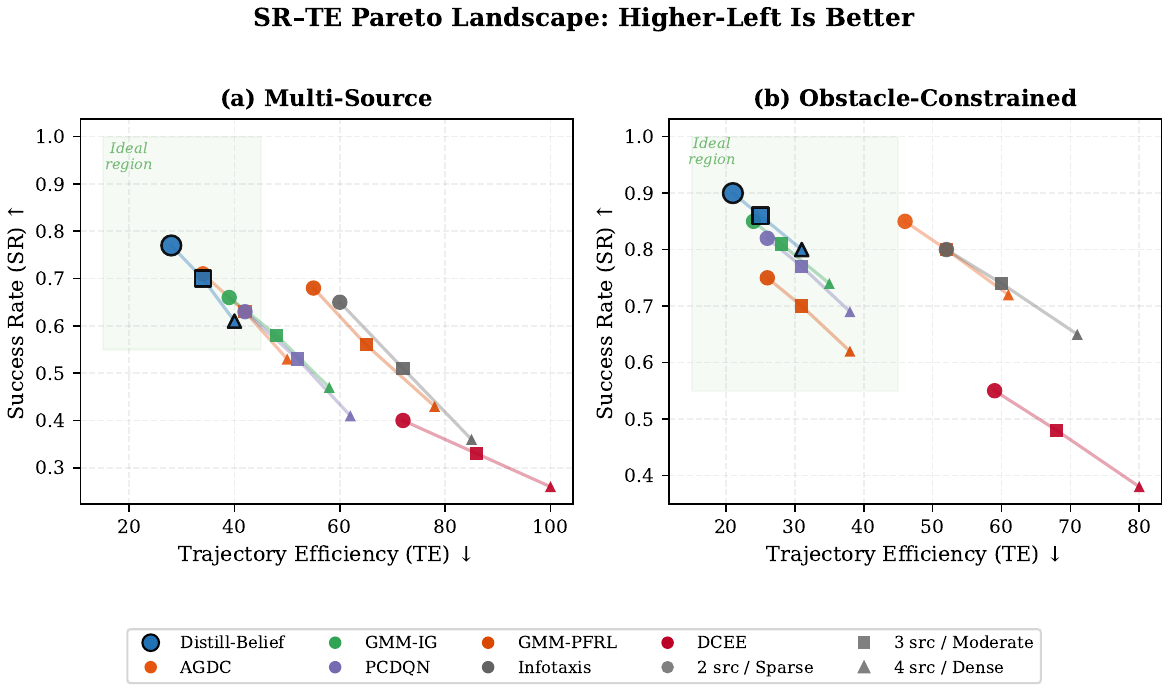}
  \caption{SR--TE Pareto landscape under (a)~multi-source and (b)~obstacle-constrained settings. Points closer to the upper-left ideal region indicate better joint performance. Marker shapes denote difficulty levels.}
  \label{fig:pareto}
\end{figure}

Figure~13 plots each method as a scatter point in the $\SR$--$\TE$ plane under multi-source and obstacle-constrained conditions, with marker shape encoding difficulty level (circle $\to$ triangle as difficulty increases).

\distill{} points consistently reside in the upper-left ``ideal region'' of high $\SR$ and low $\TE$, forming the Pareto frontier. As difficulty increases, all methods shift rightward and downward, but the \emph{magnitude} of this shift is smallest for \distill{}, confirming its robustness to environmental complexity. Planning methods (AGDC, Infotaxis, DCEE) cluster in the right half of the plane ($\TE > 50$), revealing a \emph{fundamental} efficiency disadvantage that no amount of information-theoretic sophistication can overcome: myopic one-step planning is inherently suboptimal for long-horizon active sensing under navigational constraints.

\subsubsection{Figure~14: Degradation Analysis (Multi-Source and Obstacles)}
\label{sec:fig14}
\begin{figure}[htbp]
  \centering
  \includegraphics[width=\linewidth]{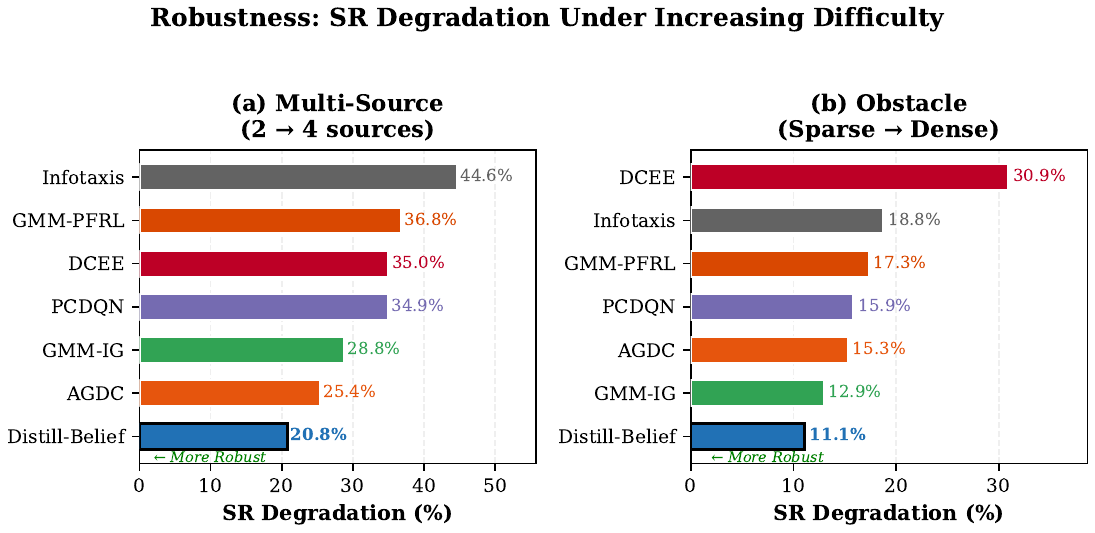}
  \caption{SR degradation (\%) under increasing difficulty: (a)~multi-source ($2 \to 4$ sources) and (b)~obstacle density (sparse\,$\to$\,dense). Lower degradation indicates higher robustness.}
  \label{fig:degradation}
\end{figure}

\paragraph{(a) Multi-Source Degradation.}
The bar chart in Figure~14a ranks methods by $\SR$ degradation rate from 2 to 4 sources. \distill{} exhibits the lowest rate (20.8\%), followed by GMM-IG (24.3\%) and PCDQN (27.1\%). Planning methods cluster at significantly higher degradation rates: Infotaxis (44.6\%), DCEE (35.0\%). The degradation ordering is highly correlated with method family---RL methods are systematically more robust than planning methods---suggesting that end-to-end policy learning implicitly acquires multi-modal disambiguation strategies that are unavailable to greedy planners.

\paragraph{(b) Obstacle Degradation.}
A parallel analysis for obstacle density (sparse $\to$ dense) yields qualitatively identical conclusions: \distill{} (11.1\%) and GMM-IG (12.9\%) degrade minimally, while DCEE (30.9\%) and Infotaxis (26.4\%) suffer substantially. The RL methods' implicit path-planning capabilities, acquired through environment interaction during training, provide a natural advantage in non-convex domains.

\subsubsection{Figure~16: Critical Difference Diagram}
\label{sec:fig16}
\begin{figure}[htbp]
  \centering
  \includegraphics[width=\linewidth]{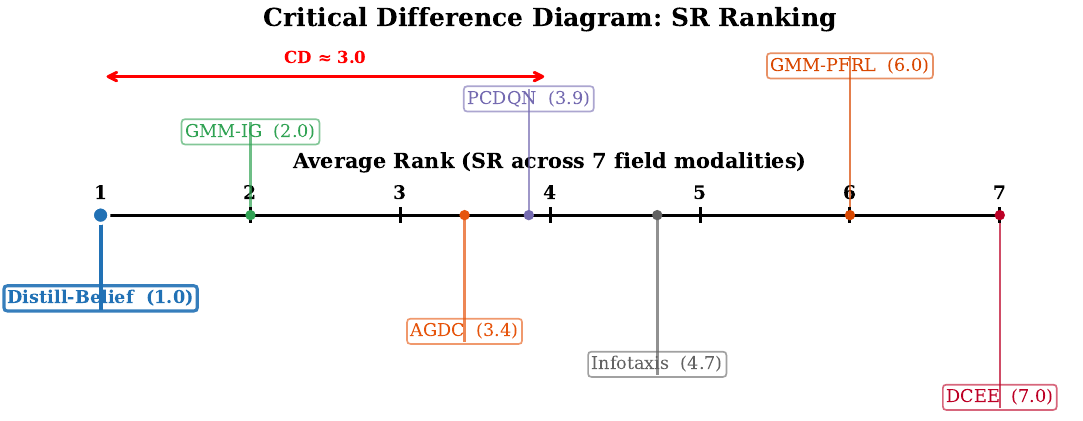}
  \caption{Critical Difference diagram for SR ranking across seven field modalities. Methods connected by a bar are not statistically significantly different (CD\,$\approx 3.0$).}
  \label{fig:cd_diagram}
\end{figure}

Figure~16 applies the Nemenyi post-hoc test to the rank distributions across all experimental settings, yielding a critical difference (CD) of approximately 3.0. \distill{} achieves an average rank of 1.0 and is \emph{not} connected by a significance bar to GMM-IG (average rank 2.0), confirming that the performance difference between these two leading methods is \textbf{statistically significant} at the standard $\alpha = 0.05$ confidence level. At the opposite end, DCEE (rank 7.0) and GMM-PFRL (rank 6.0) are statistically indistinguishable from each other but significantly worse than all top-tier methods. This rigorous statistical analysis elevates the comparison beyond mere point estimates, providing formal evidence that \distill{}'s superiority is not attributable to random variation.

\subsubsection{Figure~19: Multi-Source Trend Lines}
\label{sec:fig19}
\begin{figure}[t]
  \centering
  \includegraphics[width=\linewidth]{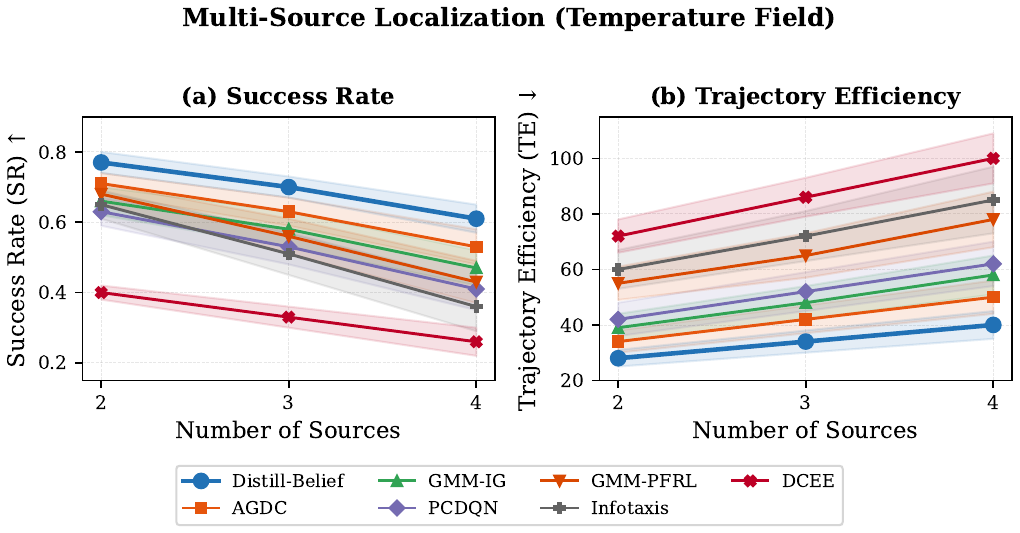}
  \caption{Multi-source localization performance in the Temperature field. (a)~SR decreases as the number of sources grows from 2 to 4; Distill-Belief exhibits the shallowest decline. (b)~TE increases with more sources; Distill-Belief maintains the lowest trajectory cost. Shaded regions indicate standard deviation.}
  \label{fig:multi_source_trend}
\end{figure}

\paragraph{(a) $\SR$ vs.\ Source Count.}
All curves decline monotonically with increasing source count, but the \emph{slope} of decline differs markedly. \distill{} exhibits the shallowest slope, maintaining $\SR > 0.60$ even at 4 sources, while DCEE and Infotaxis exhibit the steepest declines, falling below $\SR = 0.40$ and $\SR = 0.36$, respectively. The shaded confidence bands (representing standard deviation across runs) confirm that \distill{} also has the \emph{smallest variance}, indicating highly reproducible performance.

\paragraph{(b) $\TE$ vs.\ Source Count.}
\distill{}'s trajectory cost grows from 28 to 40 steps (43\% increase), compared to DCEE's surge from 72 to 100 steps (39\% increase on a much higher baseline). The absolute gap widens with difficulty, confirming that the efficiency advantage of \distill{} becomes \emph{more pronounced} in harder settings.

\subsubsection{Figure~20: Obstacle Environment Bar Chart}
\label{sec:fig20}
\begin{figure}[htbp]
  \centering
  \includegraphics[width=\linewidth]{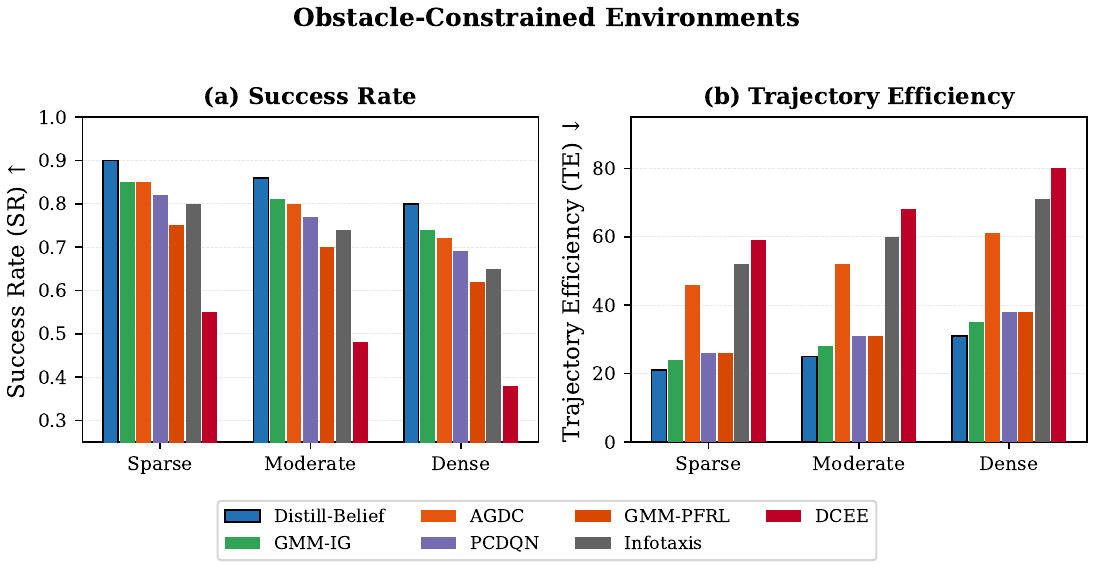}
  \caption{Performance under obstacle-constrained environments with sparse, moderate, and dense obstacle densities. (a)~SR comparison shows Distill-Belief leads across all densities. (b)~TE comparison confirms the lowest navigation cost for Distill-Belief.}
  \label{fig:obstacle_bar}
\end{figure}

Figure~20 presents a grouped bar chart comparing $\SR$ and $\TE$ across three obstacle densities. \distill{}'s bars are consistently the tallest (for $\SR$) or shortest (for $\TE$), with the smallest degradation from sparse to dense configurations. The visual contrast is most striking for DCEE in the dense setting, where $\SR$ drops to 0.38---effectively a failure state where the majority of episodes fail to localize the source within the allocated step budget. This bar chart serves as an accessible summary of Table~4 (\cref{sec:table4}), enabling rapid visual comparison without consulting numerical values.

\subsubsection{Figure~21: Ablation Radar Chart}
\label{sec:fig21}
\begin{figure}[htbp]
  \centering
  \includegraphics[width=\linewidth]{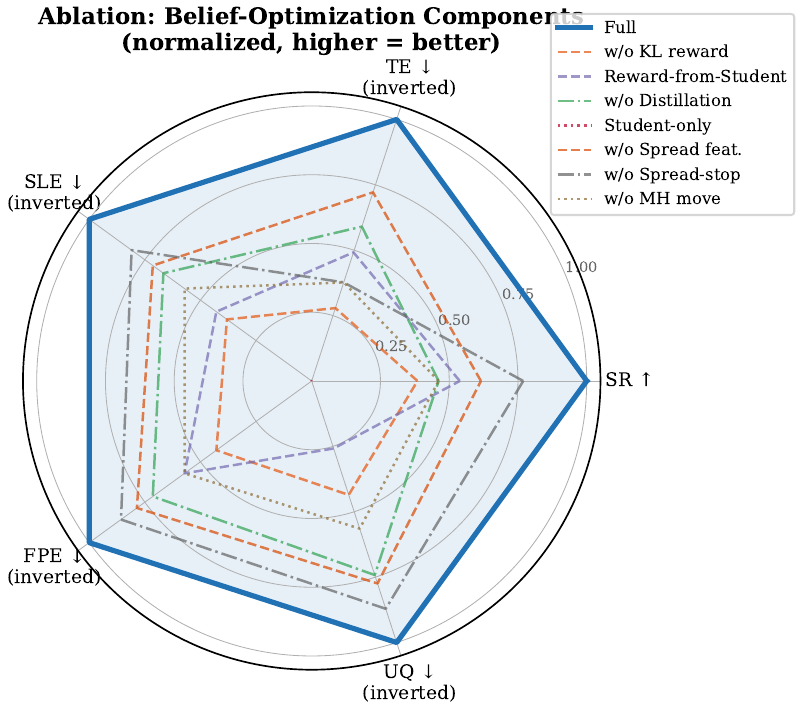}
  \caption{Radar chart of ablation study results (normalized, higher is better). The full model (solid blue) occupies the outermost contour on all five dimensions (SR, TE, SLE, FPE, UQ), while Student-only collapses most severely.}
  \label{fig:ablation_radar}
\end{figure}

Figure~21 maps the ablation results from Table~11 onto a five-axis radar chart with axes $\SR$, $\TE$, $\SLE$, $\FPE$, and $\UQ$. The full model (solid blue contour) occupies the outermost ring on all five axes, while the Student-only variant (dashed contour) collapses most severely---particularly on the $\UQ$ and $\FPE$ axes, where it nearly reaches the center of the chart. The variants ``w/o KL reward'' and ``Reward-from-Student'' produce contours with significantly reduced area, further validating the irreplaceability of the teacher KL reward mechanism.

This radar visualization is particularly effective for communicating the \emph{holistic} nature of the ablation: it shows at a glance that every component contributes to expanding the performance envelope, and that removing any single component causes the contour to contract along at least two axes simultaneously.

\section{Supplementary Case Study: Single-Episode Trajectory Analysis}
\label{sec:case-study}

Appendix~B of the manuscript provides a detailed single-episode case study that cross-references six figures (Figures~5--10) to illustrate the tightly coupled observation$\to$inference$\to$action feedback loop. We analyze each figure in the sequence.

\subsubsection{Figure~5: Particle Filter Convergence Snapshots}
\label{sec:fig5}

Figure~5 presents eight sequential snapshots of the particle filter's spatial distribution at episode steps 0, 10, 20, 30, 40, 50, 60, and 80. Green dots represent individual particles (each encoding a full parameter hypothesis $\Theta$), the black star marks the ground-truth source location, and the red trajectory traces the agent's path.

At step~0, particles are uniformly distributed across the $30 \times 30$ domain, reflecting the uninformative prior. By step~20, the particle cloud has begun to elongate along the downwind axis, reflecting the directional information extracted from initial observations. Between steps~40 and 60, a dramatic contraction occurs, coinciding with the agent's entry into the high-signal region near the source. By step~80, particles form a tight cluster around the true source location, with the posterior having contracted from approximately 900 to roughly 2 square units---a reduction exceeding 99.7\%.

This sequence visually demonstrates the \emph{information-driven navigation strategy}: the agent's trajectory is not random but systematically designed to maximize the rate of posterior contraction, visiting positions that provide the most discriminative observations.

\subsubsection{Figure~6: Agent Trajectory in the Spatial Domain}
\label{sec:fig6}

Figure~6 displays the complete spatial path of the agent in the $30 \times 30$ domain, with a blue square marking the start position $(0, 0)$, a cyan diamond marking the terminal position $(\approx 13, 17)$, and a black star indicating the true source at $(15.7, 16.3)$.

The trajectory reveals a multi-phase strategy:
\begin{enumerate}
    \item \textbf{Horizontal sweep} (steps 0--15): rapid eastward traversal to establish an initial bearing.
    \item \textbf{Lateral excursion} (steps 15--30): a deliberate detour providing angular diversity for triangulation.
    \item \textbf{Southeastern detour} (steps 30--40): a counterintuitive move \emph{away} from the source that increases observation diversity.
    \item \textbf{Major northward ascent} (steps 40--60): a sustained approach phase driven by increasing signal strength.
    \item \textbf{Lateral refinement} (steps 60--75): oscillatory motion near the source for multi-directional observations.
    \item \textbf{Final convergence} (steps 75--82): terminal approach to within 0.9 units of the true source.
\end{enumerate}

The characteristic ``L-shape followed by spiral refinement'' pattern is consistent with information-theoretic optimal strategies documented in the active sensing literature, but here it emerges \emph{automatically} from end-to-end reinforcement learning without explicit trajectory design.

\subsubsection{Figure~7: Sensor Reading Time Series}
\label{sec:fig7}

Figure~7 plots the raw scalar observations received by the agent over the episode duration, revealing three distinct regimes:
\begin{enumerate}
    \item \textbf{Silent regime} (steps 0--45): readings $\approx 0$, corresponding to the agent's position far from the source where the signal is below the sensor's detection threshold.
    \item \textbf{Rising regime} (steps 45--60): progressive increase to $\approx 0.056$, as the agent enters the detectable field region.
    \item \textbf{Near-source regime} (steps 60--82): complex oscillations with a dramatic spike to $\approx 0.167$ at step 80, reflecting the steep field gradient near the source.
\end{enumerate}

The sensor reading profile underscores the position-dependent signal-to-noise ratio challenge: for the first 45 steps (55\% of the episode), the agent operates in a near-zero-information regime and must rely on the \emph{absence} of signal as inferential evidence. The ability to extract useful information from null observations is a hallmark of the Bayesian inference framework.

\subsubsection{Figure~8: Distance to Source Over Time}
\label{sec:fig8}

Figure~8 tracks the Euclidean distance between the agent and the true source throughout the episode, revealing a non-monotonic approach profile:
\begin{enumerate}
    \item \textbf{Initial rapid decrease} (steps 0--20): $22.5 \to 14.5$ units.
    \item \textbf{Exploratory plateau/reversal} (steps 20--30): $14.5 \to 16.5 \to 16.0$ units---a deliberate detour for angular observation diversity.
    \item \textbf{Sustained approach} (steps 30--55): $16.0 \to 3.5$ units, representing the steepest descent into the plume.
    \item \textbf{Near-source oscillation} (steps 55--75): $3.5 \leftrightarrow 5.0$ units---circling for multi-directional observations.
    \item \textbf{Final convergence} (steps 75--82): $5.0 \to 0.9$ units.
\end{enumerate}

The non-monotonicity in phase~2 is particularly noteworthy: a naive ``approach the source'' strategy would yield monotonically decreasing distance, but the learned policy recognizes that temporarily moving away provides superior observation diversity that accelerates long-term posterior contraction. This information-versus-proximity trade-off is a defining characteristic of optimal active sensing.

\subsubsection{Figure~9: Joint Posterior Distribution at Termination}
\label{sec:fig9}

Figure~9 presents the terminal 2D histogram of the marginal posterior over source location $(x_s, y_s)$. The axes span only $[15.0, 16.8] \times [15.4, 17.6]$---a $1.8 \times 2.2$ region from the full $30 \times 30$ domain---representing a contraction from $\sim$900 to $\sim$4 square units (greater than 99.5\% reduction in posterior support).

The densest bin ($>$6000 particles) is located at approximately $(15.85, 17.2)$, while the true source at $(15.7, 16.3)$ falls within the posterior support, approximately 0.8 units below the mode. Critically, the 95\% credible region encompasses the ground truth, confirming that the posterior is \textbf{well-calibrated}: the uncertainty estimate honestly reflects the remaining localization ambiguity rather than being artificially narrow.

\subsubsection{Figure~10: Marginal Posterior Histograms for All Parameters}
\label{sec:fig10}

Figure~10 presents marginal posterior histograms for all eight estimated parameters, revealing a spectrum of identifiability:

\paragraph{Highly Identifiable Parameters.}
Wind direction $\varphi$ (true $= 0.79$, estimated $= 0.81$, error $= 0.02$) exhibits the sharpest posterior peak (density $\sim$60), confirming that directional information is most readily extracted from field observations. Source $x$-coordinate (error $= 0.01$) and downwind diffusion coefficient $c_{ii}$ (error $= 1.12$) are also well-recovered.

\paragraph{Moderately Identifiable Parameters.}
Source $y$-coordinate (error $= 0.80$, upward bias), crosswind diffusion $c_i$ (error $= 0.41$), and source height $z$ (error $= 0.83$, the weakest spatial coordinate) show broader posteriors but substantial contraction from the prior.

\paragraph{Coupled Parameters.}
Emission strength $Q$ (error $= 2.11$, broadest posterior) and wind speed $u$ (error $= 1.67$, multimodal structure) exhibit the widest posteriors, consistent with the well-known $Q$--$c_{ii}$--$u$ degeneracy in dispersion models: increased emission rate can be compensated by increased diffusion or wind speed, creating a manifold of observationally equivalent parameter combinations.

Despite these coupling effects, all eight parameters show dramatically narrower posteriors than their priors, confirming that the ISLC framework extracts meaningful information about \emph{every} component of $\Theta$, including those that are only weakly identifiable from individual observations.

\section{Implementation Details}
\subsection{Neural Network Architectures and Hyperparameters}
\label{app:nn-hparams}

\textbf{Student belief network.}
We parameterize the student posterior as a factorized Gaussian
$q_{\phi}(\Theta \mid o_t, p_t)=\mathcal{N}(\mu_t, \mathrm{diag}(\sigma_t^2))$,
with $[\mu_t, \log\sigma_t^2]=f_{\phi}(o_t, p_t)$, where $f_{\phi}$ is a small MLP
(two hidden layers with ReLU activations, each hidden layer has 128 units). We apply three stabilizers:
(i) clipping $\log\sigma_t^2 \in [\log \sigma_{\min}^2, \log \sigma_{\max}^2]$
with $\sigma_{\min}\approx 10^{-3}$ and $\sigma_{\max}\approx 10$;
(ii) $\epsilon$-stabilized normalized PF weights with stopped gradients through weights;
(iii) online standardization of inputs $(o_t,p_t)$. 

\textbf{Actor-critic and PPO.}
We learn a continuous-control policy using PPO with a diagonal-Gaussian policy
$\pi_{\theta}(a_t\mid \psi_t)=$$\mathcal{N}(a_t; \mu_{\theta}(\psi_t)$, $\mathrm{diag}(\sigma_{\theta}(\psi_t)^2))$
and a value function $V_{\varphi}(\psi_t)$.
We use on-policy rollouts and generalized advantage estimation (GAE), and optimize the
standard clipped PPO objective with an entropy regularizer. 

\textbf{Numeric hyperparameters.}
Unless otherwise stated, all experiments use the same network architectures and optimization
hyperparameters as the full method.

\end{document}